\definecolor{dark-red}{rgb}{0.4,0.15,0.15}
\definecolor{dark-blue}{rgb}{0.15,0.15,0.4}
\definecolor{medium-blue}{rgb}{0,0,0.5}
\renewcommand{\floatpagefraction}{.8}
\newcommand{\mE}{\mathcal{E}}
\newtheorem{theorem}{Theorem}
\newtheorem{proposition}[theorem]{Proposition}
\icmltitlerunning{Bayesian Deep Learning and a Probabilistic Perspective of Generalization}
\begin{document}

\twocolumn[
\icmltitle{Bayesian Deep Learning and a Probabilistic Perspective of Generalization}

\begin{icmlauthorlist}
\icmlauthor{Andrew Gordon Wilson}{} \quad
\icmlauthor{Pavel Izmailov}{} \\ 
New York University
\end{icmlauthorlist}

\icmlkeywords{Machine Learning, ICML}

\vskip 0.3in
]

\begin{abstract}
The key distinguishing property of a Bayesian approach is marginalization, rather
than using a single setting of weights.
Bayesian marginalization can particularly improve the accuracy and calibration of 
modern deep neural networks, which are typically underspecified by the data, and 
can represent many compelling but different solutions. We show that deep ensembles
provide an effective mechanism for approximate Bayesian marginalization, and propose
a related approach that further improves the predictive distribution by marginalizing within
basins of attraction, without significant overhead. We also investigate the prior over 
functions implied by a vague distribution over neural network weights, explaining the 
generalization properties of such models from a probabilistic perspective. From this 
perspective, we explain results that have been presented as mysterious and distinct to 
neural network generalization, such as the ability to fit images with random labels, and 
show that these results can be reproduced with Gaussian processes. We also show that
Bayesian model averaging alleviates double descent, resulting in monotonic performance
improvements with increased flexibility.
Finally, we provide
a Bayesian perspective on tempering for calibrating predictive distributions.
\end{abstract}

\section{Introduction}
\label{sec: intro}

Imagine fitting the airline passenger data in Figure~\ref{fig:airline}. Which model would you choose: 
(1) $f_1(x) = w_0 + w_1 x$, (2) $f_2(x) = \sum_{j=0}^{3} w_j x^{j}$, or (3) 
$f_3(x) = \sum_{j=0}^{10^4} w_j x^{j}$? 

\begin{figure}[!h]
	\centering
	\includegraphics[width=0.4\textwidth]{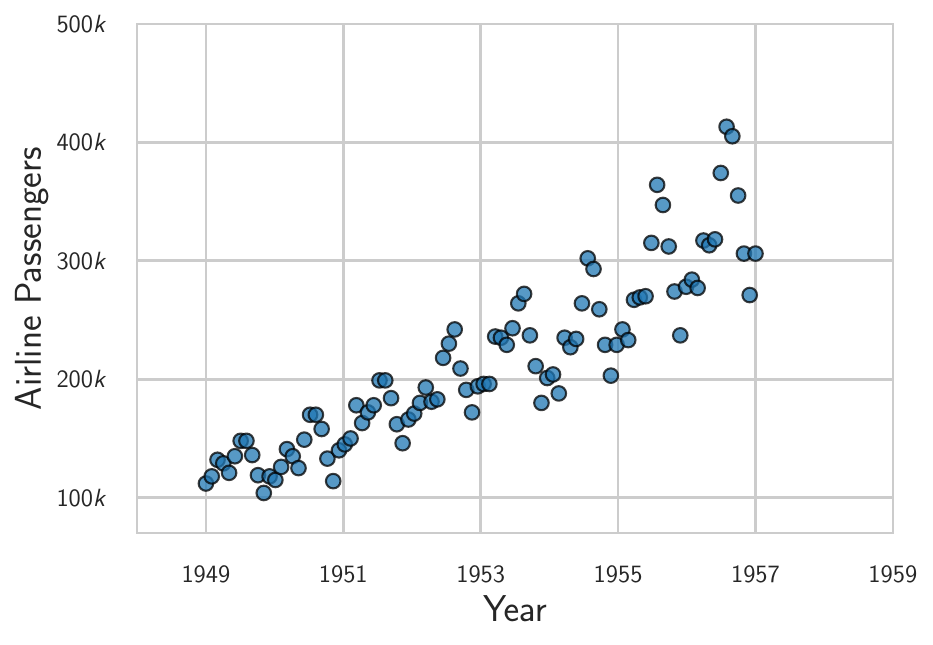}
	\caption{
    Airline passenger numbers recorded monthly.
} 
  \label{fig:airline}
  \vspace{-3mm}
\end{figure}

Put this way, most audiences 
overwhelmingly favour choices (1) and (2), for fear
of overfitting. But of these options, choice (3) most honestly
represents our beliefs. Indeed, it is likely that the ground truth explanation for 
the data is out of class for any of these choices, but there
is some setting of the coefficients $\{w_j\}$ in choice (3) which provides
a better description of reality than could be managed by choices (1) and
(2), which are special cases of choice (3). 
Moreover, our beliefs about the generative processes 
for our observations, which are often very sophisticated, 
typically ought to be independent of how many data 
points we happen to observe.

\begin{figure*}[h]
	\centering
	
	\subfigure[]{
		\includegraphics[clip, trim=0.cm .4cm 0.cm 0.cm,height=0.22\textwidth]{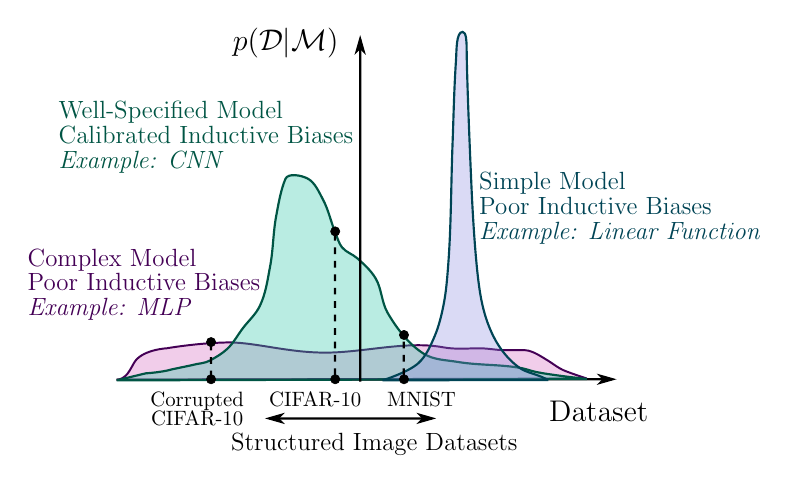}
		\label{fig:gen}
    }
    \hspace{-1.2cm}
	\subfigure[]{
		\includegraphics[clip, trim=0.cm .7cm 0.cm 0.cm,height=0.22\textwidth]{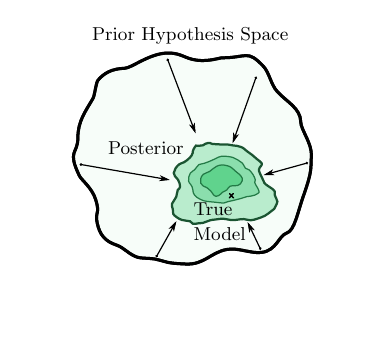}
		\label{fig:good}
    }
    \hspace{-2.2cm}
	\subfigure[]{
		\includegraphics[clip, trim=0.cm .7cm 0.cm 0.cm,height=0.22\textwidth]{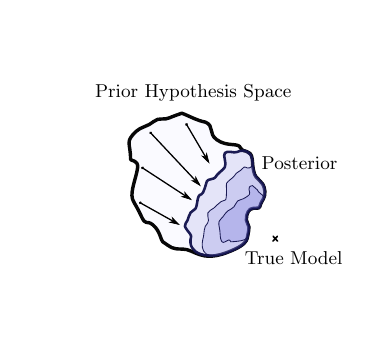}
		\label{fig:simple}
    }
    \hspace{-1.6cm}
	\subfigure[]{
		\includegraphics[clip, trim=0.cm .7cm 0.cm 0.cm,height=0.22\textwidth]{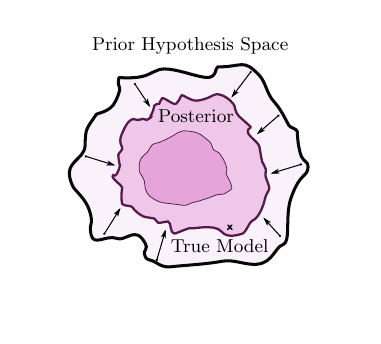}
		\label{fig:complex}
    }
	\caption{
        \textbf{A probabilistic perspective of generalization.} (a) Ideally, a model supports a wide range of datasets, but with inductive biases that provide 
        high prior probability to a particular class of problems being considered. Here, the CNN is preferred over the linear model 
        and the fully-connected MLP for CIFAR-10 (while we do not consider MLP models to in general have poor inductive biases, here we are considering a hypothetical example involving images and a very large MLP). (b) By representing a large hypothesis space, a model can contract around 
        a true solution, which in the real-world is often very sophisticated. (c) With truncated support, a model will converge to an erroneous solution. (d) Even
        if the hypothesis space contains the truth, a model will not efficiently contract unless it also has reasonable inductive biases.
        }
        \label{fig:conceptual}
\end{figure*}

And in modern practice, we are implicitly favouring
choice (3): we often use neural networks with millions of parameters to fit
datasets with thousands of points. Furthermore, 
non-parametric methods such as Gaussian processes often involve infinitely 
many parameters, enabling the flexibility for universal approximation \citep{rasmussen06}, yet in many cases 
provide very simple predictive distributions. Indeed, parameter counting is 
a poor proxy for understanding generalization behaviour.

From a probabilistic perspective, we argue that generalization depends 
largely on \emph{two} properties, the \emph{support} and the \emph{inductive biases}
of a model. Consider Figure \ref{fig:gen}, where on the horizontal axis we have a 
conceptualization of all possible datasets, and on the vertical axis the Bayesian \emph{evidence} 
for a model. The evidence, or marginal likelihood, $p(\mathcal{D}|\mathcal{M}) = \int p(\mathcal{D}|\mathcal{M},w) p(w) dw$, 
is the probability we would generate a dataset if we were to randomly sample from the prior 
over functions $p(f(x))$ induced by a prior over parameters $p(w)$. We define the support as the range of 
datasets for which $p(\mathcal{D}|\mathcal{M}) > 0$. We define the inductive biases as the relative prior
probabilities of different datasets --- the \emph{distribution of support} given by $p(\mathcal{D}|\mathcal{M})$.
A similar schematic to Figure~\ref{fig:gen} was used by \citet{mackay1992bayesian} to 
understand an Occam's razor effect in using the evidence for model selection; we believe it can also
be used to reason about model construction and generalization.

From this perspective, we want the support of the model to be large so that we can represent any hypothesis
we believe to be possible, even if it is unlikely. We would even want the model to be able to represent 
pure noise, such as noisy CIFAR \citep{zhang2016understanding}, as long as we honestly believe there is 
some non-zero, but potentially arbitrarily small, probability that the data are simply noise.
Crucially, we
also need the inductive biases to carefully represent which hypotheses we believe to be a priori likely for a particular 
problem class. If we are modelling images, then our model should have statistical properties, such as convolutional
structure, which are good descriptions of images.

Figure~\ref{fig:gen} illustrates three models. We can imagine the blue curve as a simple linear function, $f(x) = w_0 + w_1x$,
combined with a distribution over parameters $p(w_0, w_1)$, e.g., $\mathcal{N}(0,I)$, which induces a distribution over
functions $p(f(x))$. Parameters we sample from our prior $p(w_0, w_1)$ give rise to functions $f(x)$ that correspond to
straight lines with different slopes and intercepts. This model thus has truncated support: it cannot even represent a 
quadratic function. But because the marginal likelihood must normalize over datasets $\mathcal{D}$, this model assigns
much mass to the datasets it does support. The red curve could represent a large fully-connected 
MLP. This model is highly flexible, but distributes its support across datasets too evenly to be particularly compelling for 
many image datasets. The green curve could represent a convolutional neural network, which represents a compelling 
specification of support and inductive biases for image recognition: this model has the flexibility to represent many solutions, 
but its structural properties provide particularly good support for many image problems. 

With large support, we cast a wide enough net that the posterior can contract around the true solution to a given problem as in 
Figure~\ref{fig:good}, which in reality we often believe to be very sophisticated. On the other hand, the simple model will
have a posterior that contracts around an erroneous solution if it is not contained in the hypothesis space as in Figure~\ref{fig:simple}.
Moreover, in Figure~\ref{fig:complex}, the model has wide support, but does not contract around a good solution because its 
support is too evenly distributed.

Returning to the opening example, we can justify the high order polynomial by wanting large support. But we would still have to 
carefully choose the prior on the coefficients to induce a distribution over functions that would have reasonable inductive
biases. Indeed, this Bayesian notion of generalization is not based on a single number, but is a two dimensional concept. 
From this probabilistic perspective, it is crucial not to conflate the \emph{flexibility} of a model with the 
\emph{complexity} of a model class. Indeed Gaussian processes with RBF kernels have large support, and 
are thus flexible, but have inductive biases towards very simple solutions. We also see that \emph{parameter counting}
has no significance in this perspective of generalization: what matters is how a distribution over parameters combines with 
a functional form of a model, to induce a distribution over solutions.
Rademacher complexity \citep{mohri2009rademacher}, VC dimension \citep{vapnik1998adaptive}, and many conventional metrics, are by contrast \emph{one dimensional 
notions}, corresponding roughly to the support of the model, which is why they have been found to 
provide an incomplete picture of generalization in deep learning \citep{zhang2016understanding}.

In this paper we reason about Bayesian deep learning from a probabilistic perspective of generalization.
The key distinguishing property of a Bayesian approach is 
marginalization instead of optimization, where we represent solutions given by all settings of parameters 
weighted by their posterior probabilities, rather than bet everything on a single setting of parameters.
Neural networks are typically underspecified by the data,
and can represent many different but high performing models corresponding to different 
settings of parameters, which is exactly when marginalization will make the biggest difference 
for accuracy and calibration. Moreover, we clarify that the recent deep ensembles \citep{lakshminarayanan2017simple} are not 
a competing approach to Bayesian inference, but can be viewed as a compelling
mechanism for Bayesian marginalization. Indeed, we empirically demonstrate that deep ensembles can 
provide a \emph{better} approximation to the Bayesian predictive distribution than standard Bayesian 
approaches.  We further propose a new method, MultiSWAG, inspired by deep
ensembles, which marginalizes within basins of attraction --- achieving significantly improved 
performance, with a similar training time. 

We then investigate the properties of priors over functions induced
by priors over the weights of neural networks, showing that they have
reasonable inductive biases. 
We also show
that the mysterious generalization properties recently presented in \citet{zhang2016understanding} can be understood by reasoning about
prior distributions over functions, and are not specific to neural networks. Indeed, we show Gaussian processes can also perfectly
fit images with random labels, yet generalize on the noise-free problem. These results are a consequence
of large support but reasonable inductive biases for common problem settings. We further show that while 
Bayesian neural networks can fit the noisy datasets, the marginal likelihood has much better support for the 
noise free datasets, in line with Figure~\ref{fig:conceptual}. We additionally show that the multimodal marginalization
in MultiSWAG alleviates double descent, so as to achieve monotonic improvements in performance with model flexibility,
in line with our perspective of generalization. MultiSWAG also provides significant improvements in both accuracy and 
NLL over SGD training and unimodal marginalization. 
Finally we provide several perspectives on tempering in Bayesian deep learning.

In the Appendix we provide several additional experiments and results.
We also provide code at \url{https://github.com/izmailovpavel/understandingbdl}.

\section{Related Work}

Notable early works on Bayesian neural networks include \citet{mackay1992bayesian}, \citet{mackay1995probable}, and \citet{neal1996}. These works
generally argue in favour of making the model class for Bayesian approaches as flexible as possible, in line with \citet{box1973bayesian}. 
Accordingly, \citet{neal1996} pursued the limits of large Bayesian neural
networks, showing that as the number of hidden units approached infinity, these models become Gaussian processes with particular kernel functions.
This work harmonizes with recent work describing the neural tangent kernel \citep[e.g.,][]{jacot2018neural}.  

The marginal likelihood is often used for Bayesian hypothesis testing, model comparison, and hyperparameter tuning, 
with \emph{Bayes factors} used to select between models \citep{bayesfactors1995}. \citet[][Ch. 28]{mackay2003information}
uses a diagram similar to Fig~\ref{fig:gen} to show the marginal likelihood has an \emph{Occam's razor} property, favouring 
the simplest model consistent with a given dataset, even if the prior assigns equal probability to the various models. 
\citet{rasmussen01} reasons about how the marginal likelihood can favour large flexible models, as long as such models 
correspond to a reasonable distribution over functions.

There has been much recent interest in developing Bayesian approaches for modern deep learning, with new challenges and 
architectures quite different from what had been considered in early work. Recent work has largely focused on scalable inference 
\citep[e.g.,][]{blundell2015weight, gal2016dropout, kendall2017uncertainties, ritter2018scalable, khan2018fast, maddoxfast2019},
function-space inspired priors
\citep[e.g.,][]{yang2019output, louizos2019functional, sun2019functional, hafner2018reliable}, and developing 
flat objective priors in parameter space, directly leveraging the biases of the neural network functional form 
\citep[e.g,][]{nalisnick18}. \citet{wilson2020case} provides a note motivating Bayesian deep learning.

Early works tend to 
provide a connection between loss geometry and generalization using minimum description
length frameworks \citep[e.g.,][]{hinton1993keeping, hochreiter1997flat, mackay1995probable}.
Empirically, \citet{keskar2016large} argue that smaller batch SGD provides better generalization
than large batch SGD, by finding flatter minima. \citet{chaudhari2019entropy} and \citet{izmailov2018}
design optimization procedures to specifically find flat minima. 

By connecting flat solutions with ensemble
approximations, \citet{izmailov2018} also suggest that
functions associated with parameters in flat regions ought to provide different predictions on test data, 
for flatness to be helpful in generalization, which is distinct from the flatness in \citet{dinh2017sharp}.
\citet{garipov2018} also show that there are mode connecting curves, forming loss valleys, which contain
a variety of distinct solutions. We argue that flat regions of the loss containing a diversity of solutions is 
particularly relevant for Bayesian model averaging, since the model average will then contain many 
compelling and complementary explanations for the data. Additionally, \citet{huang2019understanding} describes neural networks
as having a \emph{blessing of dimensionality}, since flat regions will occupy much greater volume in a 
high dimensional space, which we argue means that flat solutions will dominate in the Bayesian model average.

\citet{smith2017bayesian} and 
\citet[][Chapter 28]{mackay2003information} additionally connect the width of the posterior with Occam factors; 
from a Bayesian perspective, larger width corresponds to a smaller Occam factor, and thus ought 
to provide better generalization. 
\citet{dziugaite2017computing} and \citet{smith2017bayesian} also provide different perspectives on  
the results in \citet{zhang2016understanding},
which shows that deep convolutional neural networks can fit CIFAR-10 with random labels and no training error.  
The PAC-Bayes bound of \citet{dziugaite2017computing} becomes vacuous when
applied to randomly-labelled binary MNIST.  \citet{smith2017bayesian} 
show that logistic regression can fit noisy labels on sub-sampled MNIST, interpreting the result from an Occam factor
perspective.

In general, PAC-Bayes provides a compelling framework for deriving explicit non-asymptotic generalization bounds for stochastic networks with distributions over parameters
\citep{mcallester1999pac, langford2002not, dziugaite2017computing, neyshabur2017exploring, neyshabur2018a, masegosa2019learning, jiang2019fantastic, guedj2019primer, alquier2021user}.
\citet{langford2002not} devised a PAC-Bayes generalization bound for small stochastic neural networks (two layer with two hidden units) achieving an improvement over the 
existing deterministic generalization bounds. \citet{dziugaite2017computing} extended this approach, optimizing a PAC-Bayes bound with respect to a parametric distribution 
over the weights of the network, exploiting the flatness of solutions discovered by SGD, for non-vacuous bounds with an overparametrized network on binary MNIST.
\citet{neyshabur2017exploring} also discuss the connection between PAC-Bayes bounds and sharpness, and \citet{neyshabur2018a} devises PAC-Bayes bounds based on spectral norms of the layers and the Frobenius norm of the weights of the network. \citet{achille2018emergence} additionally combine PAC-Bayes and information theoretic approaches to argue that flat minima have low information content. \citet{masegosa2019learning} also proposes variational and ensemble learning methods based on PAC-Bayes analysis under model misspecification. \citet{jiang2019fantastic} provide a review and comparison of several generalization bounds, including PAC-Bayes.

Our contributions are largely orthogonal and complementary to PAC-Bayes. PAC-Bayes bounds can be
be improved by, e.g.\ fewer parameters, and very compact priors, 
which can be different from what provides optimal generalization. 
From our perspective, model flexibility and priors with \emph{large} support, rather than compactness, 
are desirable. Moreover, we show the great significance of multi-basin marginalization for 
generalization, whereas multi-basin marginalization has a minimal logarithmic effect on PAC-Bayes 
bounds. Indeed, \emph{marginalization} --- a posterior weighted model average --- is our key focus, whereas
PAC-Bayes bounds are typically bounding the empirical risk of a single sample.
In general, our focus is complementary to PAC-Bayes, aiming to provide \emph{prescriptive} intuitions on 
model construction, inference, and neural network priors, as well as new 
connections between Bayesian model averaging and deep ensembles, benefits of Bayesian model 
averaging in the context 
of modern deep neural networks, views of marginalization that contrast with
simple Monte Carlo, and new methods for Bayesian marginalization in deep learning.

In other work,
\citet{pearce2018uncertainty} propose a modification of deep ensembles and
argue that it performs approximate Bayesian inference, and 
\citet{gustafsson2019evaluating} briefly mention how deep ensembles can be viewed as samples from 
an approximate posterior. In the context of deep ensembles, we believe it is natural to consider the 
BMA integral separately from the simple Monte Carlo approximation that is often used to approximate this integral;
to compute an accurate predictive distribution, we do not need samples from a posterior, or even a faithful approximation to 
the posterior. 

\citet{fort2019deep} considered the diversity of predictions produced
by models from multiple independent SGD runs,
and suggested to ensemble averages of SGD iterates.
Although MultiSWA (one of the methods considered in Section \ref{sec: emp}) is related to this idea, 
the crucial practical difference is that MultiSWA uses a learning rate schedule that 
selects for flat regions of the loss, the key to the success of the SWA method \citep{izmailov2018}.
Section \ref{sec: emp} also shows that MultiSWAG, which we propose for multimodal
Bayesian marginalization, outperforms MultiSWA.

Double descent, which describes generalization error that decreases, increases, and then again decreases with 
model flexibility, was demonstrated early by \citet{opper1990ability}. Recently, \citet{belkin2019reconciling} 
extensively demonstrated double descent, leading to a surge of modern interest, with \citet{nakkiran2019deep}
showing double descent in deep learning. \citet{nakkiran2020optimal} shows that tuned $l_2$ regularization can 
mitigate double descent. Alternatively, we show that Bayesian model averaging, particularly based on multimodal
marginalization, can alleviate even prominent double descent behaviour.

Tempering in Bayesian modelling has been considered under the names \emph{Safe Bayes}, 
\emph{generalized Bayesian inference}, and \emph{fractional Bayesian inference} \citep[e.g.,][]{de2019safe,grunwald2017inconsistency,barron1991minimum,
walker2001bayesian,zhang2006information,bissiri2016general,grunwald2012safe}. We provide 
several perspectives of tempering in Bayesian deep learning, and analyze the results in a
recent paper by \citet{wenzel2020good} that questions tempering for Bayesian neural networks.

\section{Bayesian Marginalization}
\label{sec: bayesianmarginalization}

Often the predictive distribution we want to compute is given by 
\begin{align}
p(y|x, \mathcal{D}) = \int p(y|x, w) p(w|\mathcal{D}) dw \,.
\label{eqn: bma}
\end{align}
The outputs are $y$ (e.g., regression values, class labels, \dots), indexed by inputs $x$ (e.g. spatial locations, images, \dots), the weights (or parameters) of the neural network $f(x;w)$ are $w$, and $\mathcal{D}$ are the data. Eq.~\eqref{eqn: bma} 
represents a \emph{Bayesian model average} (BMA). Rather than bet everything on one hypothesis --- with a single setting of parameters $w$ --- we want to use all settings of parameters, 
weighted by their posterior probabilities. This procedure is called \emph{marginalization} of the parameters $w$, as the predictive distribution of interest
no longer conditions on $w$. This is not a controversial equation, but simply the sum and product rules of probability.

\subsection{Importance of Marginalization in Deep Learning}
\label{sec: margimportance}

In general, we can view classical training as performing approximate Bayesian inference, using the approximate posterior 
$p(w | \mathcal{D}) \approx \delta(w=\hat{w})$ to compute Eq.~\eqref{eqn: bma}, where $\delta$ is a Dirac delta function that is zero everywhere 
except at $\hat{w} = \text{argmax}_w p(w|\mathcal{D})$.  In this case, we recover the standard predictive distribution $p(y|x,\hat{w})$. From this perspective,
many alternatives, albeit imperfect, will be preferable --- including impoverished Gaussian posterior approximations for $p(w|\mathcal{D})$, 
even if the posterior or likelihood are actually highly non-Gaussian and multimodal. 

The difference between a classical and Bayesian approach will depend on how sharp the posterior $p(w|\mathcal{D})$
becomes. If the posterior is sharply peaked, and the conditional predictive distribution $p(y|x,w)$ does not vary significantly where the posterior has mass, 
there may be almost no difference, since a delta function may then be a reasonable approximation
of the posterior for the purpose of BMA. However, modern neural networks are usually highly underspecified by the available data, and therefore have diffuse likelihoods $p(\mathcal{D}|w)$, not strongly favouring any one setting of parameters. Not only are
the likelihoods diffuse, but different settings of the parameters correspond to a diverse variety of compelling hypotheses for the data \citep{garipov2018, izmailov2019subspace}.
This is exactly the setting when we \emph{most} want to perform a Bayesian model average, which will lead to an ensemble containing
many different but high performing models, for better calibration \emph{and} accuracy than classical training.

\paragraph{Loss Valleys.} 
Flat regions of low loss (negative log posterior density $- \log p(w|\mathcal{D})$) are associated with good generalization \citep[e.g.,][]{hochreiter1997flat, hinton1993keeping, dziugaite2017computing, izmailov2018, keskar2016large}. While flat solutions that generalize poorly can be contrived through reparametrization \citep{dinh2017sharp}, the flat regions that lead to good generalization contain a \emph{diversity} of high performing models on test data \citep{izmailov2018}, corresponding to different parameter settings in those regions. And indeed, there are large contiguous regions of low loss that contain such solutions, even connecting together different SGD solutions \citep{garipov2018, izmailov2019subspace} (see also Figure~\ref{fig:loss_valleys}, Appendix).

Since these regions of the loss represent a large volume in a high-dimensional space \citep{huang2019understanding}, and provide a diversity of solutions, they will dominate in forming the predictive distribution in a Bayesian model average. By contrast, if the parameters in these regions provided similar functions, as would be the case in flatness obtained through reparametrization, these functions would be redundant in the model average. That is, although the solutions of high posterior density can provide poor generalization, it is the solutions that generalize well that will have greatest posterior \emph{mass}, and thus be automatically favoured by the BMA.

\paragraph{Calibration by Epistemic Uncertainty Representation.} It has been noticed that 
modern neural networks are often \emph{miscalibrated} in the sense that their predictions are 
typically \emph{overconfident} \citep{guo2017calibration}. For example, in classification the highest softmax output of a convolutional neural network is typically much larger than the probability of the associated class label. The fundamental reason for miscalibration is ignoring epistemic uncertainty. A neural network can represent many models that are consistent with our observations. By selecting only one, in a classical procedure, we lose uncertainty when the models disagree for a test point. In regression, we can visualize epistemic uncertainty by looking at the spread of the predictive distribution; as we move away from the data, there are a greater variety of consistent solutions, leading to larger uncertainty, as in Figure~\ref{fig:predictive}. 
We can further calibrate the model with tempering, which we discuss in the Appendix Section~\ref{sec:app_temperature}.

\paragraph{Accuracy.} An often overlooked benefit of Bayesian model averaging in \emph{modern} deep learning is improved \emph{accuracy}. If we average the predictions of many high performing models that disagree in some cases, we should see significantly improved accuracy. This benefit is now starting to be observed in practice \citep[e.g.,][]{izmailov2019subspace}. Improvements in accuracy are very convincingly exemplified by \emph{deep ensembles} \citep{lakshminarayanan2017simple}, which have been perceived as a competing approach to Bayesian methods, but in fact provides a compelling mechanism for approximate Bayesian model averaging, as we show in 
Section~\ref{sec: deepensembles}. We also demonstrate significant accuracy benefits for multimodal Bayesian marginalization in Section~\ref{sec:double_descent}.

\subsection{Beyond Monte Carlo}
\label{sec: beyondmc}

Nearly all approaches to estimating the integral in Eq.~\eqref{eqn: bma}, when it cannot be computed in closed form,
involve a \emph{simple Monte Carlo} approximation:
$p(y|x, \mathcal{D}) \approx \frac{1}{J} \sum_{j=1}^{J} p(y|x,w_j) \,,  w_j \sim p(w|\mathcal{D})$. 
In practice, the samples from the posterior $p(w|\mathcal{D})$ are also approximate, and
found through MCMC or deterministic methods. The 
deterministic methods approximate $p(w|\mathcal{D})$ with a different more convenient density
$q(w|\mathcal{D}, \theta)$ from which we can sample, often chosen to be Gaussian. The 
parameters $\theta$ are selected to make $q$ close to $p$ in some sense; for example,
variational approximations \citep[e.g.,][]{beal2003variational}, which have emerged as a popular deterministic approach, 
find $\text{argmin}_{\theta} \mathcal{KL}(q||p)$. Other standard deterministic approximations include  
Laplace \citep[e.g.,][]{mackay1995probable}, EP \citep{minka01}, and INLA \citep{rue2009approximate}.

From the perspective of estimating the predictive
distribution in Eq.~\eqref{eqn: bma}, we can view simple Monte Carlo 
as approximating the posterior
with a set of point masses, with locations given by samples from another approximate posterior $q$, even if $q$ is a continuous 
distribution. That is, 
$p(w | \mathcal{D}) \approx \sum_{j=1}^{J} \delta(w=w_j) \,, w_j \sim q(w | \mathcal{D})$. 

Ultimately, the goal is to accurately compute the predictive distribution in Eq.~\eqref{eqn: bma}, rather
than find a generally accurate representation of the posterior. In particular, we must carefully
represent the posterior in regions that will make the greatest contributions to the BMA integral. 
In terms of efficiently computing the predictive distribution, 
we do not necessarily want to place point masses at locations given by samples from the posterior. 
For example, functional diversity is important for a good approximation to the BMA integral, because 
we are summing together terms of the form $p(y|x,w)$; if two settings of 
the weights $w_i$ and $w_j$ each provide high likelihood (and consequently high posterior density), but give rise to similar functions $f(x; w_i)$, $f(x; w_j)$, 
then they will be largely redundant in the model average, and the second setting of parameters will not contribute much to estimating the BMA integral 
for the unconditional predictive distribution.
In Sections \ref{sec: deepensembles} and \ref{sec: emp}, we consider how various approaches approximate the predictive distribution.

\subsection{Deep Ensembles are BMA}
\label{sec: deepensembles}

\emph{Deep ensembles} \citep{lakshminarayanan2017simple} is fast becoming a gold standard for accurate
and well-calibrated predictive distributions. 
Recent reports \citep[e.g.,][]{ovadia2019can, ashukha2020pitfalls} show 
that deep ensembles appear to outperform some particular approaches 
to Bayesian neural networks for uncertainty representation, leading to the confusion that deep ensembles and Bayesian methods are competing
approaches. These methods are often explicitly referred to as non-Bayesian \citep[e.g.,][]{lakshminarayanan2017simple, ovadia2019can, wenzel2020good}.
To the contrary, we argue that deep ensembles are actually a compelling approach to Bayesian model averaging, 
in the vein of Section~\ref{sec: beyondmc}.

There is a fundamental difference between a Bayesian model average and some approaches to ensembling. The Bayesian
model average assumes that \emph{one} hypothesis (one parameter setting) is correct, and averages over models due to
an inability to distinguish between hypotheses given limited information \citep{minka2000bayesian}. As we observe more data, the posterior collapses onto a 
single hypothesis. If the true explanation for the data is a combination of hypotheses, then the Bayesian model average may
appear to perform worse as we observe more data. Some ensembling methods work by enriching the hypothesis space, and 
therefore do not collapse in this way. Deep ensembles, however, are formed by MAP or maximum likelihood retraining of the same 
architecture multiple times, leading to different basins of attraction. The deep ensemble will therefore collapse in the same way 
as a Bayesian model average, as the posterior concentrates. Since the hypotheses space (support) for a modern neural network 
is large, containing many different possible explanations for the data, posterior collapse will often be desirable.

Furthermore, by representing multiple basins of attraction, 
deep ensembles can provide a \emph{better} approximation to the BMA than the Bayesian approaches in 
\citet{ovadia2019can}. Indeed, the functional
diversity is important for a good approximation to the BMA integral, as per Section~\ref{sec: beyondmc}. The approaches referred to as Bayesian
in \citet{ovadia2019can} instead focus their approximation on a single basin, which may contain a lot of redundancy in function space, making a 
relatively minimal contribution to computing the Bayesian predictive distribution. On the other hand, retraining a neural network multiple times for
deep ensembles incurs a significant computational expense. The single basin approaches may be preferred if we are to control for computation.
We explore these questions in Section~\ref{sec: emp}.

\section{An Empirical Study of Marginalization}
\label{sec: emp}

\begin{figure}[t]
	\centering
	
    \hspace{-0.5cm}
    \includegraphics[width=0.45\textwidth]{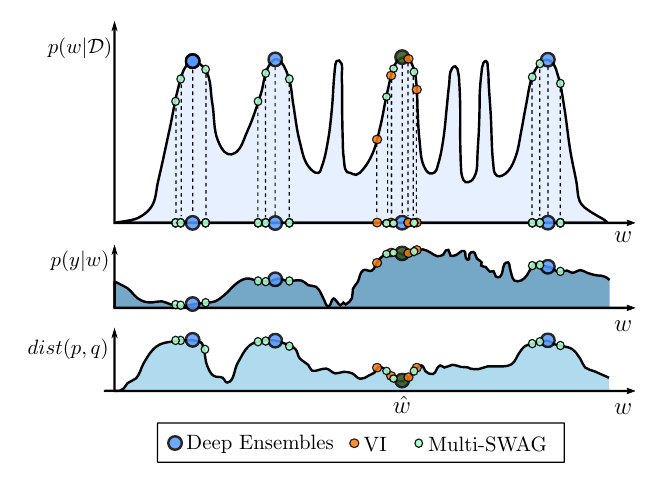}

	\caption{\textbf{Approximating the BMA.}  \\ 
	$p(y|x,\mathcal{D}) = \int p(y|x,w)p(w | \mathcal{D}) dw$. \textbf{Top:} 
	$p(w|\mathcal{D})$, with representations
	from VI (orange)
	deep ensembles (blue), MultiSWAG (red). 
	\textbf{Middle:} 
	$p(y|x,w)$ as a 
	function of $w$ for a test input $x$. This function does not vary much within modes, but
	changes significantly between modes. 
	\textbf{Bottom:} Distance between
	the true predictive distribution and the approximation, as a function of representing a posterior
	at an additional point $w$, assuming we have sampled the mode in dark green.
	There is more to be gained by exploring new basins, than continuing to explore the same basin.
        }
        \label{fig:beyondmc}
\end{figure}

\begin{figure*}[h]
	\centering
	
	\subfigure[Exact]{
		\includegraphics[width=0.23\textwidth]{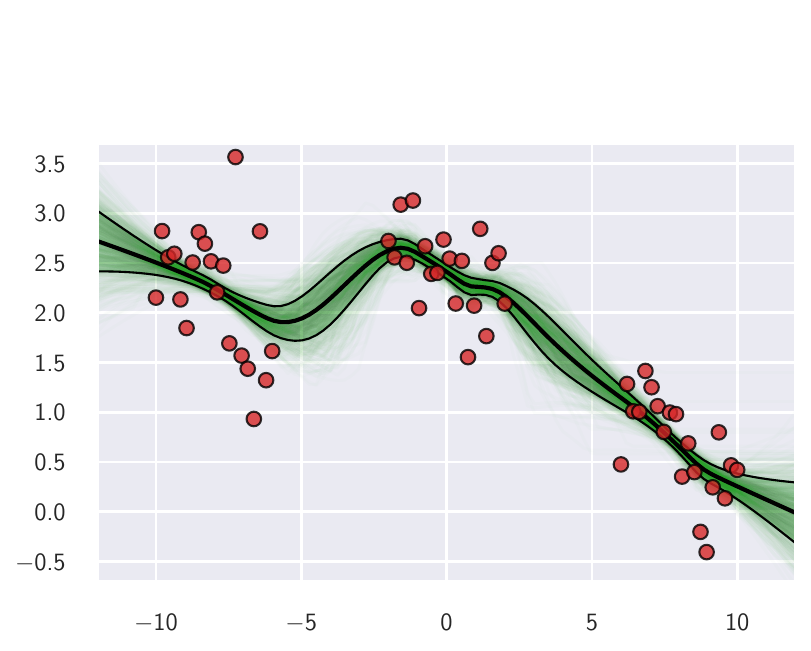}
    }
	\subfigure[Deep Ensembles]{
		\includegraphics[width=0.23\textwidth]{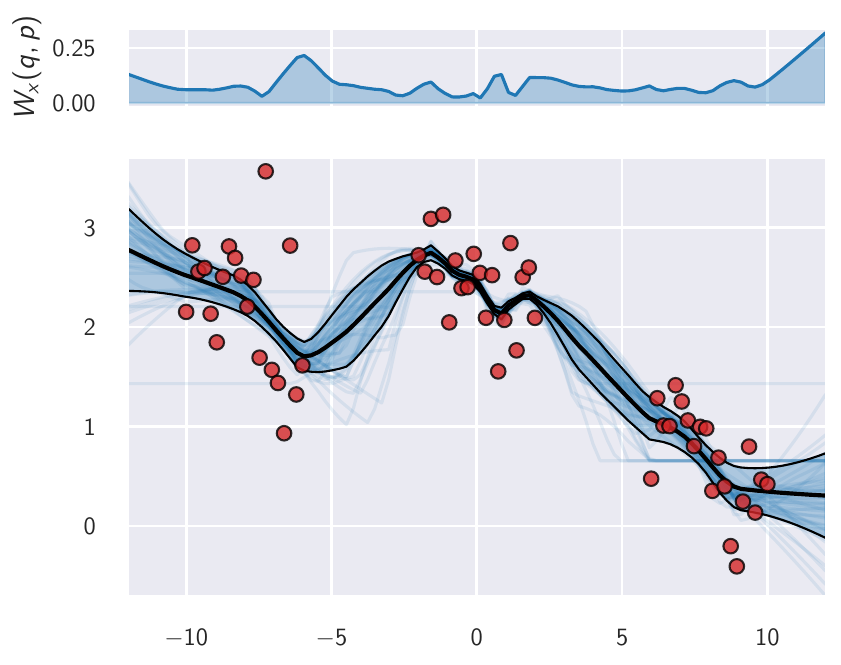}
    }
	\subfigure[Variational Inference]{
		\includegraphics[width=0.23\textwidth]{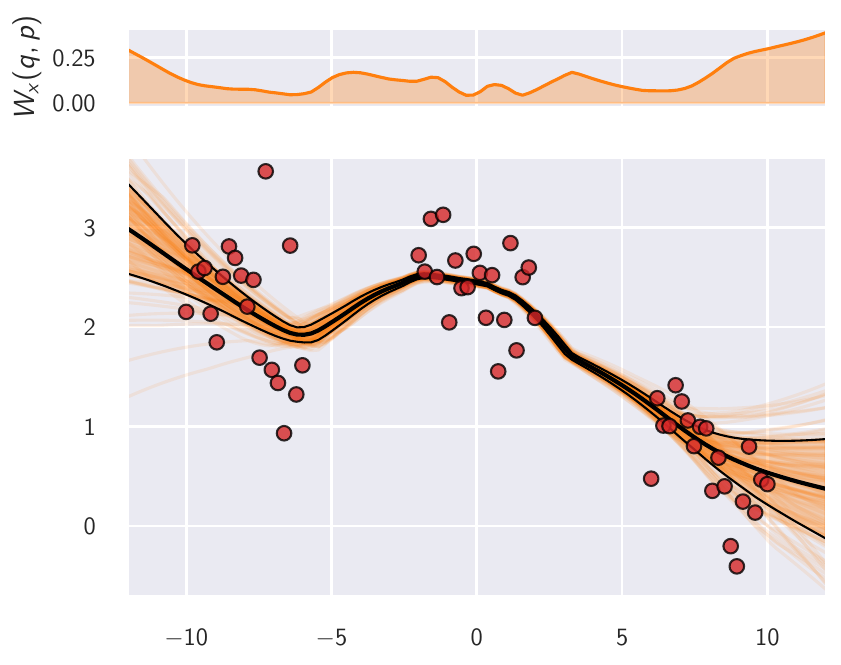}
    }
	\subfigure[Discrepancy with True BMA]{
		\includegraphics[width=0.23\textwidth]{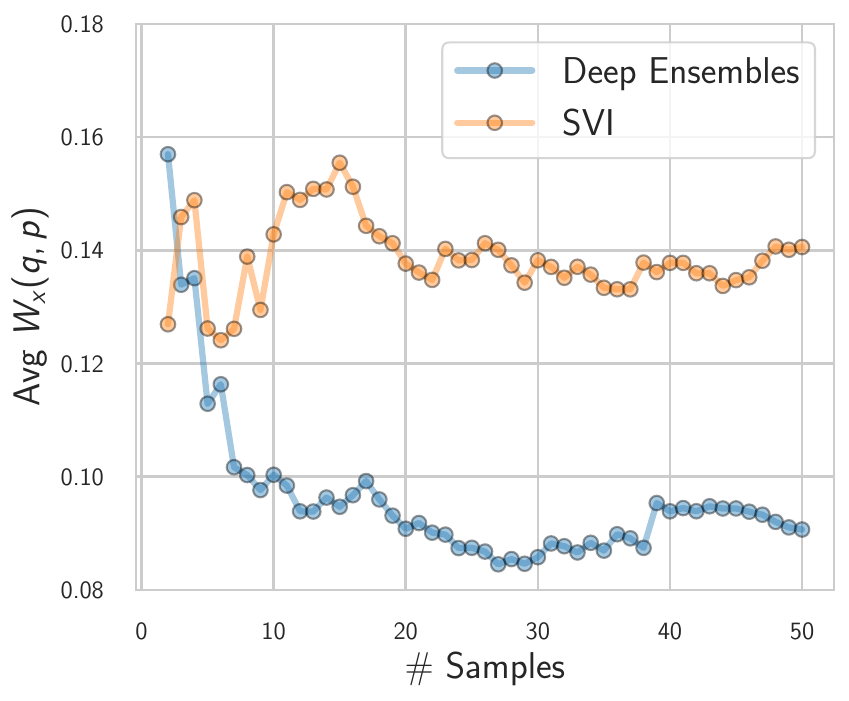}
    }
	\caption{\textbf{Approximating the true predictive distribution.}
        \textbf{(a)}: A close approximation of the true predictive distribution 
        obtained by combining $200$ HMC chains.
        \textbf{(b)}: Deep ensembles predictive distribution using $50$ independently
        trained networks.
        \textbf{(c)}: Predictive distribution for factorized variational inference (VI).
        \textbf{(d)}: Convergence of the predictive distributions for deep ensembles
        and variational inference as a function of the number of samples; 
        we measure the average Wasserstein distance between the 
        marginals in the range of input positions. The multi-basin deep ensembles approach provides a more faithful
        approximation of the Bayesian predictive distribution than the conventional
        single-basin VI approach, which is overconfident between data clusters. 
        The top panels show the Wasserstein distance between the true predictive
        distribution and the deep ensemble and VI approximations, as a function of
        inputs $x$.
        }
        \label{fig:predictive}
\end{figure*}

\begin{figure*}[t]
	\centering
	
	\includegraphics[width=0.95\textwidth]{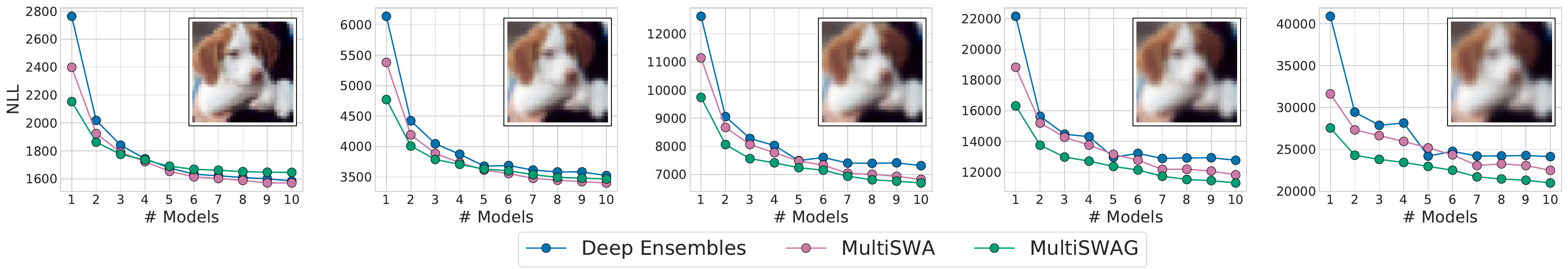}
	\caption{
		Negative log likelihood for Deep Ensembles, MultiSWAG and MultiSWA 
		using a PreResNet-20 on CIFAR-10 
		with varying intensity of the \textit{Gaussian blur} corruption.
        The image in each plot shows the intensity of corruption.
		For all levels of intensity, MultiSWAG and MultiSWA outperform Deep Ensembles
		for a small number of independent models.
		For high levels of corruption MultiSWAG significantly outperforms other
		methods even for many independent models.
        We present results for other corruptions in the Appendix.
    }
    \label{fig:ovadia}
\end{figure*}

We have shown that deep ensembles can be interpreted as an approximate approach to Bayesian marginalization, which selects
for functional diversity by representing multiple basins of attraction in the posterior. Most Bayesian deep learning methods 
instead focus on faithfully approximating a posterior within a single basin of attraction. We propose a new method, MultiSWAG, 
which combines these two types of approaches. MultiSWAG combines multiple independently trained 
SWAG approximations \citep{maddoxfast2019}, to create a mixture of Gaussians approximation to the posterior, with each 
Gaussian centred on a different basin of attraction. We note that MultiSWAG does not require \emph{any} additional training
time over standard deep ensembles.

We illustrate the conceptual difference between deep ensembles, a standard variational single basin approach, and MultiSWAG, 
in Figure~\ref{fig:beyondmc}. In the top panel, we have a conceptualization of a multimodal posterior. VI 
approximates the posterior with multiple samples within a single basin. But we see in the middle panel that the conditional
predictive distribution $p(y|x,w)$ does not vary significantly within the basin, and thus each additional sample contributes minimally
to computing the marginal predictive distribution $p(y|x,\mathcal{D})$. On the other hand, $p(y|x,w)$ varies significantly between basins,
and thus each point mass for deep ensembles contributes significantly to the marginal predictive distribution. 
By sampling within the basins, MultiSWAG provides additional contributions to the predictive distribution. In the bottom panel, we have the 
gain in approximating the predictive distribution when adding a point mass to the representation of the posterior, as a function 
of its location, assuming we have already sampled the mode in dark green. Including samples from 
different modes provides significant gain over continuing to sample from the same mode, and including weights in wide basins provide 
relatively more gain than the narrow ones.

In Figure \ref{fig:predictive} we evaluate single basin and multi-basin approaches in a case where we can near-exactly
compute the predictive distribution. We provide details for generating the data and training the models in Appendix \ref{sec:appendix_hmc}.
We see that the predictive distribution
given by deep ensembles is qualitatively closer to the true distribution, compared to the single basin variational method:
between data clusters, the deep ensemble approach provides a similar representation of epistemic uncertainty, whereas
the variational method is extremely overconfident in these regions. Moreover, we see that the Wasserstein distance 
between the true predictive distribution and these two approximations quickly shrinks with number of samples for deep 
ensembles, but is roughly independent of number of samples for the variational approach. Thus the deep ensemble 
is providing a better approximation of the Bayesian model average in Eq.~\eqref{eqn: bma} than the single basin 
variational approach, which has traditionally been labelled as the Bayesian alternative. 

\begin{figure*}[]
	\subfigure[$\alpha=0.02$]{
		\includegraphics[width=0.2\textwidth]{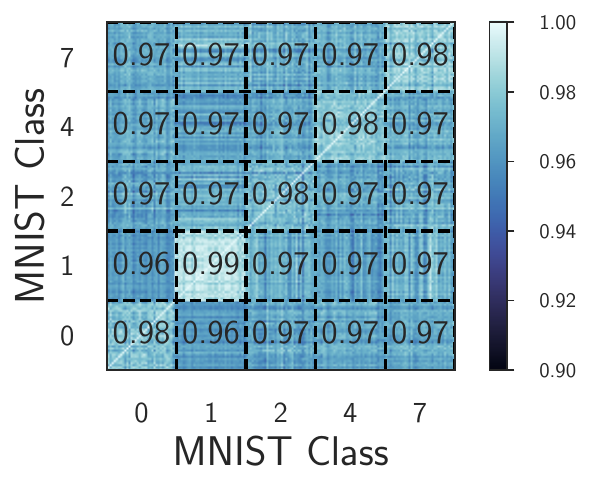}
    }
	\subfigure[$\alpha=0.1$]{
		\includegraphics[width=0.2\textwidth]{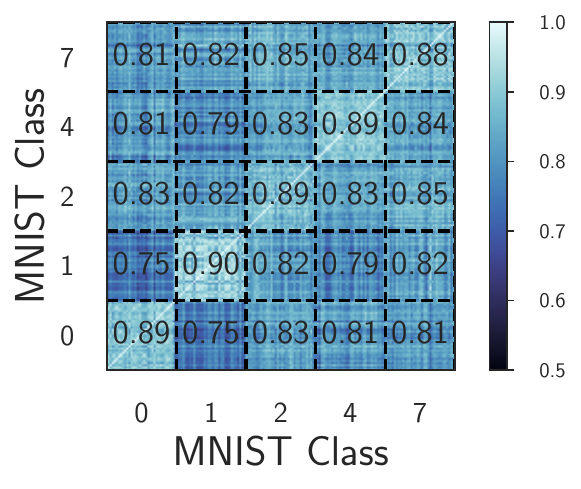}
    }
	\subfigure[$\alpha=1$]{
		\includegraphics[width=0.2\textwidth]{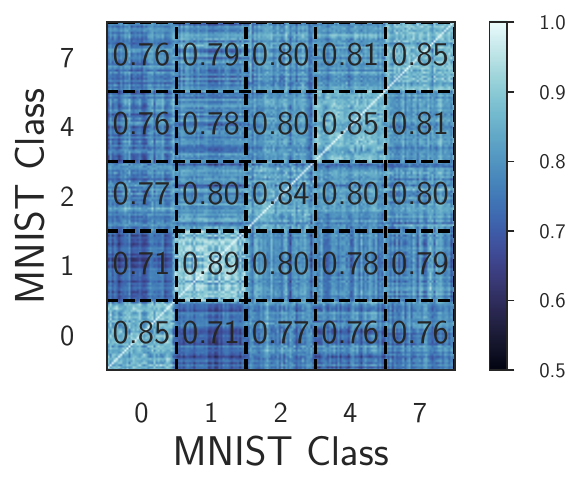}
    }
	\subfigure[]{
		\includegraphics[width=0.22\textwidth]{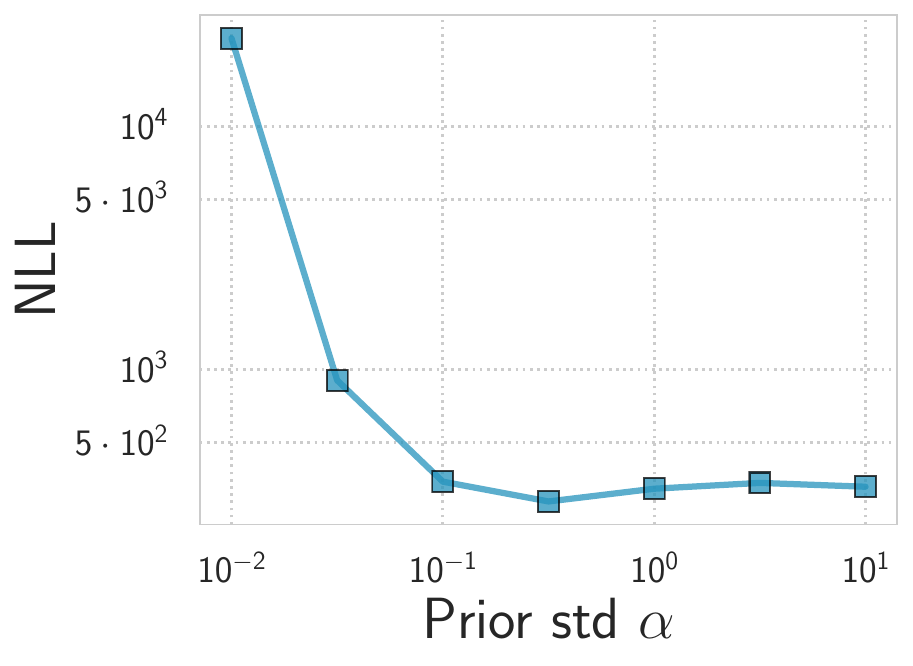}
        \label{fig:lenet_priordep1}
    }
	\caption{\textbf{Induced prior correlation function.}
        Average pairwise prior correlations for pairs of objects
        in classes $\{0, 1, 2, 4, 7\}$ of MNIST induced by LeNet-5 for $p(f(x;w))$ 
        when $p(w) = \mathcal{N}(0,\alpha^2I)$. 
        Images in the same class have higher prior correlations than images from different classes,
        suggesting that $p(f(x;w))$ has desirable inductive biases. The correlations slightly
        decrease with increases in $\alpha$. 
           \textbf{(d)}: NLL of an ensemble of $20$ 
        SWAG samples on MNIST as a function of 
        $\alpha$ using a LeNet-5.
        }
        \label{fig:prior_depnew}
\end{figure*}

Next, we evaluate MultiSWAG under distribution shift on the CIFAR-10 dataset 
\citep{krizhevsky2014cifar}, replicating the setup in \citet{ovadia2019can}. We consider $16$ data corruptions, 
each at $5$ different levels of severity, introduced by \citet{hendrycks2019benchmarking}. 
For each corruption, we evaluate the performance of deep ensembles and MultiSWAG
varying the training budget. For deep ensembles we show performance as a function of the number of independently
trained models in the ensemble. For MultiSWAG we show performance as a function
of the number of independent SWAG approximations that we construct; 
we then sample $20$ models from each of these approximations to construct the final ensemble.

While the training time for MultiSWAG is the same as for deep ensembles, at test
time MultiSWAG is more expensive, as the corresponding ensemble consists of a 
larger number of models. To account for situations when test time is constrained, we also propose
MultiSWA, a method that ensembles independently trained SWA solutions \citep{izmailov2018}.
SWA solutions are the means of the corresponding Gaussian SWAG approximations.
\citet{izmailov2018} argue that SWA solutions approximate the local ensembles represented by 
SWAG with a single model.

In Figure~\ref{fig:ovadia} we show the negative log-likelihood as a function of the number of independently
trained models for a Preactivation ResNet-20 on CIFAR-10 corrupted with 
Gaussian blur with varying levels of intensity (increasing from left to right) in 
Figure \ref{fig:ovadia}. MultiSWAG outperforms deep ensembles significantly on highly corrupted
data.
For lower levels of corruption, MultiSWAG works particularly well when only
a small number of independently trained models are available. We note that MultiSWA also
outperforms deep ensembles, and has the same computational requirements at training 
and test time as deep ensembles.
We present results for other types of corruption in Appendix 
Figures~\ref{fig:app_ovadia}, \ref{fig:app_ovadia_blur}, \ref{fig:app_ovadia_digital}, \ref{fig:app_ovadia_weather},
showing similar trends. In general, there is an extensive evaluation of MultiSWAG in the Appendix.

Our perspective of generalization is deeply connected with Bayesian marginalization. In order to best realize the benefits of 
marginalization in deep learning, we need to consider as many hypotheses as possible through multimodal posterior approximations, 
such as MultiSWAG. In Section~\ref{sec:double_descent} we return to MultiSWAG, showing how it can entirely alleviate prominent 
double descent behaviour, and lead to striking improvements in generalization over SGD and single basin marginalization, for both
accuracy and NLL.

\section{Neural Network Priors}
\label{sec: nnpriors}

A prior over parameters $p(w)$ combines with the functional form of a model $f(x;w)$ to induce a distribution over functions $p(f(x;w))$. It is this distribution over functions
that controls the generalization properties of the model; the prior over parameters, in isolation, has no meaning. Neural networks are imbued with structural properties that
provide good inductive biases, such as translation equivariance, hierarchical representations, and sparsity.  In the sense of Figure~\ref{fig:conceptual}, the prior will have large support, due to the flexibility of neural networks, but its inductive biases provide the most mass to datasets which are representative of problem settings where neural networks are often applied.
In this section, we study the properties of the induced distribution over functions. We directly continue the discussion of priors in Section~\ref{sec: rethinking}, with a focus on examining the noisy CIFAR results in \citet{zhang2016understanding}, from a probabilistic perspective of generalization. These sections are best read together.

We also provide several additional experiments in the Appendix. In Section \ref{sec:app_prior_analysis}, we present analytic results on the
dependence of the prior distribution in function space on the variance of the prior over parameters, considering also layer-wise parameter priors with 
ReLU activations. As part of a discussion on tempering,
in Section~\ref{sec: prioreffect} we study the effect of $\alpha$ in $p(w) = \mathcal{N}(0,\alpha^2 I)$ on prior class probabilities for individual 
sample functions $p(f(x;w))$, the predictive distribution, and posterior samples as we observe varying amounts of data. In Section~\ref{sec:app_correlations}, we further study the correlation structure over images induced by neural network priors, subject to perturbations
of the images. In Section~\ref{sec:app_prior_details} we provide additional experimental details.

\begin{figure*}[h]
	\centering
	
	\subfigure[Prior Draws]{
		\includegraphics[height=0.17\textwidth]{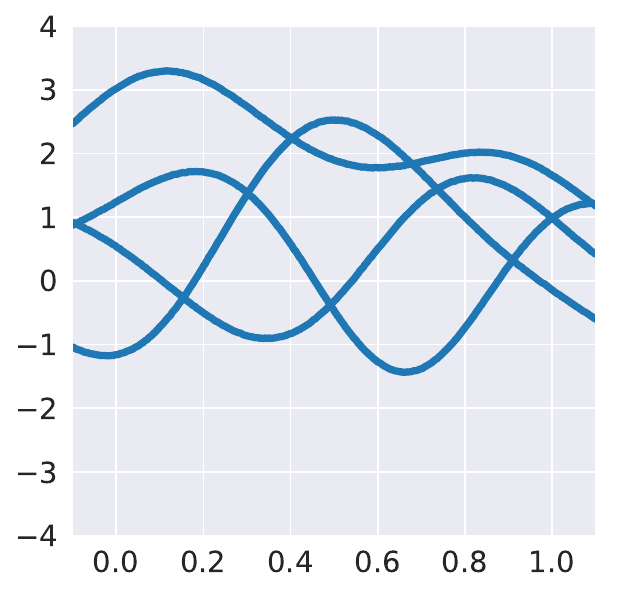}
		\label{fig: gpprior}
    }
    \hspace{-0.1cm}
	\subfigure[True Labels]{
		\includegraphics[height=0.17\textwidth]{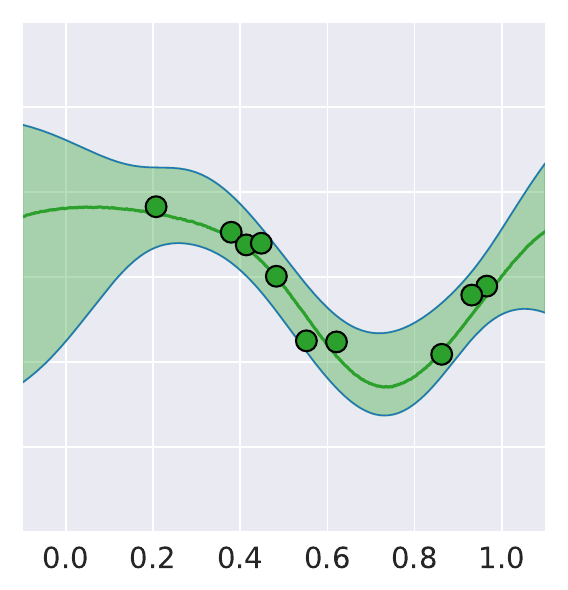}
		\label{fig: gpfit}
    }
    \hspace{-0.1cm}
	\subfigure[Corrupted Labels]{
		\includegraphics[height=0.17\textwidth]{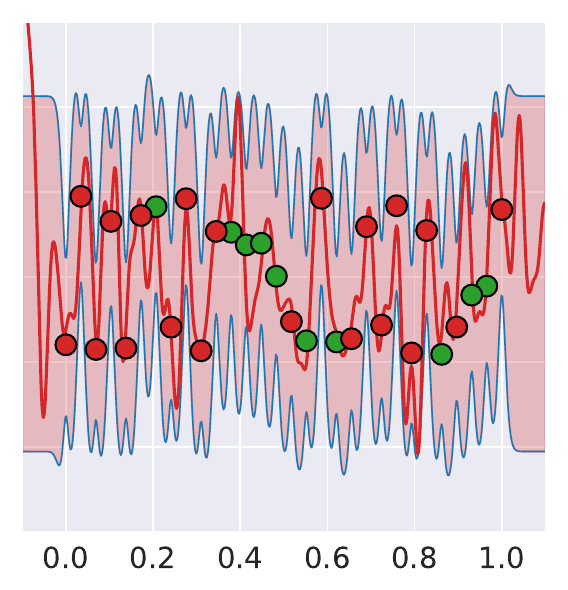}
		\label{fig: gpnoise}
	}
	    \hspace{-0.1cm}
         \subfigure[Gaussian Process]{
        \includegraphics[height=0.17\textwidth]{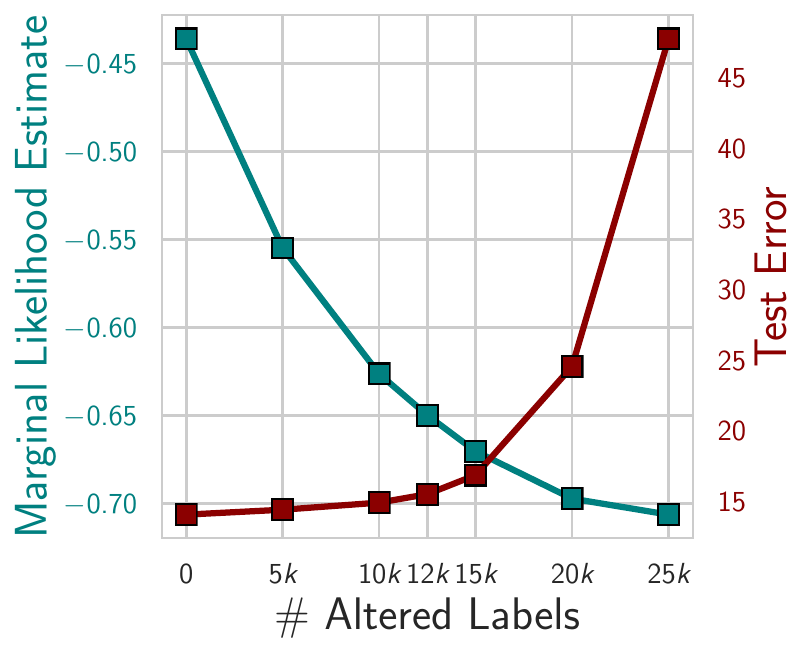}
        \label{fig: gpmarg}
    }
		    \hspace{-0.1cm}
	\subfigure[PreResNet-20]{
        \includegraphics[height=0.17\textwidth]{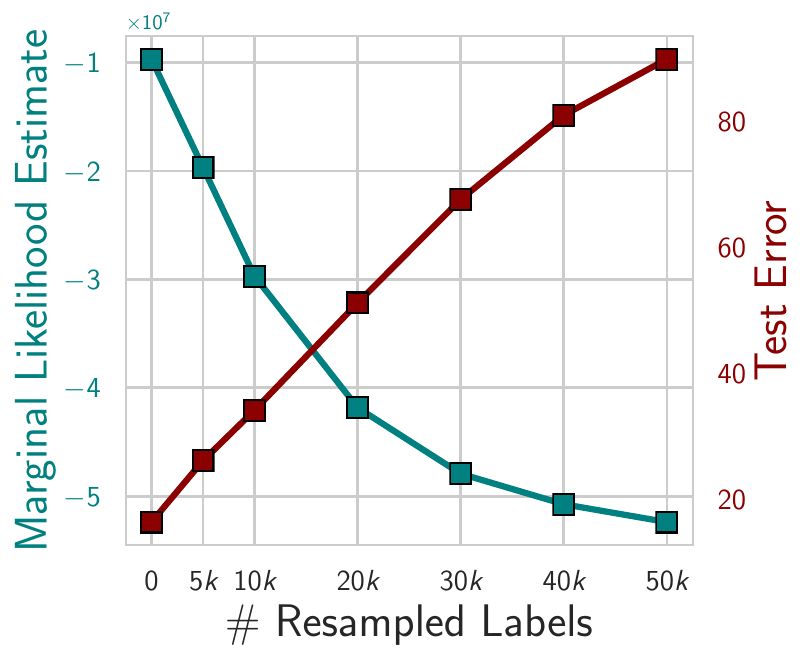}
        \label{fig: bnnmarg}
    }
	\caption{\textbf{Rethinking generalization.} \textbf{(a)}: Sample functions from a Gaussian process
	prior. \textbf{(b)}: GP fit (with 95\% credible region) to structured data generated as 
        $y_{\text{green}}(x) = \sin(x \cdot 2 \pi) + \epsilon,~~ \epsilon \sim \mathcal N(0, 0.2^2)$.
         \textbf{(c)}: GP fit, with no training error, after a significant addition of corrupted data in red, drawn from $\text{Uniform}[0.5, 1]$.
        \textbf{(d)}: Variational GP marginal likelihood with RBF kernel for two classes of CIFAR-10.
        \textbf{(e)}: Laplace BNN marginal likelihood for a PreResNet-20 on CIFAR-10 with different fractions of random labels.
        The marginal likelihood for both the GP and BNN
        decreases as we increase the level of corruption in the labels, suggesting reasonable inductive biases 
        in the prior over functions. Moreover, both the GP and BNN have 100\% training accuracy on images with fully
        corrupted labels.
        }
        \label{fig:gp_reg}
\end{figure*}

\subsection{Deep Image Prior and Random Network Features}
\label{sec: deepimage}

Two recent results provide strong evidence that vague Gaussian priors over parameters, when combined with a neural network architecture, induce a distribution
over functions with useful inductive biases. In the \emph{deep image prior}, \citet{ulyanov2018deep} show that \emph{randomly initialized} 
convolutional neural networks \emph{without training} provide excellent performance for image denoising, super-resolution, and inpainting. This result demonstrates
the ability for a sample function from a random prior over neural networks $p(f(x;w))$ to capture low-level image statistics, before any training. 
Similarly, \citet{zhang2016understanding} shows that pre-processing CIFAR-10 with a \emph{randomly initialized untrained} convolutional neural network 
dramatically improves the test performance of a simple Gaussian kernel on pixels from 54\% accuracy to 71\%. Adding $\ell_2$ regularization only
improves the accuracy by an additional 2\%. These results again indicate that \emph{broad} Gaussian priors over parameters induce reasonable priors over 
networks, with a minor additional gain from decreasing the variance of the prior in parameter space, which corresponds to $\ell_2$ regularization.

\subsection{Prior Class Correlations}
\label{sec:prior_corrs}

In Figure \ref{fig:prior_depnew} we study the prior correlations in the outputs of 
the LeNet-5 convolutional network \citep{lecun1998gradient} on objects of different
MNIST classes.
We sample networks with weights $p(w) = \mathcal{N}(0,\alpha^2 I)$,
and compute the values of logits corresponding to the first class for all pairs
of images and compute correlations of these logits.
For all levels of $\alpha$ the correlations between objects corresponding to the
same class are consistently higher than the correlation between objects of different
classes, showing that the network induces a reasonable prior similarity metric over
these images. Additionally, we observe that the prior correlations somewhat 
decrease as we increase $\alpha$, showing 
that bounding the norm of the weights has some minor utility, in accordance with 
Section~\ref{sec: deepimage}. Similarly, in panel (d) we see that the NLL significantly 
decreases as $\alpha$ increases in $[0,0.5]$, and then slightly increases, 
but is relatively constant thereafter.

In the Appendix, we further describe analytic results and illustrate the effect of $\alpha$ on 
sample functions.

\subsection{Effect of Prior Variance on CIFAR-10}
\label{sec:prior_dep}

We further study the effect of the parameter prior standard deviation $\alpha$, measuring
performance of approximate Bayesian inference for CIFAR-10 with a 
Preactivation ResNet-20 \citep{he2016deep} and VGG-16 \citep{simonyan2014very}.
For each of these architectures we run SWAG \citep{maddoxfast2019} with fixed 
hyper-parameters and varying $\alpha$.
We report the results in Figure \ref{fig:prior_dep}(d), (h).
For both architectures, the performance is near-optimal in the range
$\alpha \in [10^{-2}, 10^{-1}]$. Smaller $\alpha$ constrains
the weights too much. Performance is reasonable and becomes mostly insensitive 
to $\alpha$ as it continues to increase, due to the inductive biases of the functional
form of the neural network.

\section{Rethinking Generalization}
\label{sec: rethinking}

\begin{figure*}[h]
	\centering
	
	\subfigure[True Labels (Err)]{
		\includegraphics[height=0.17\textwidth]{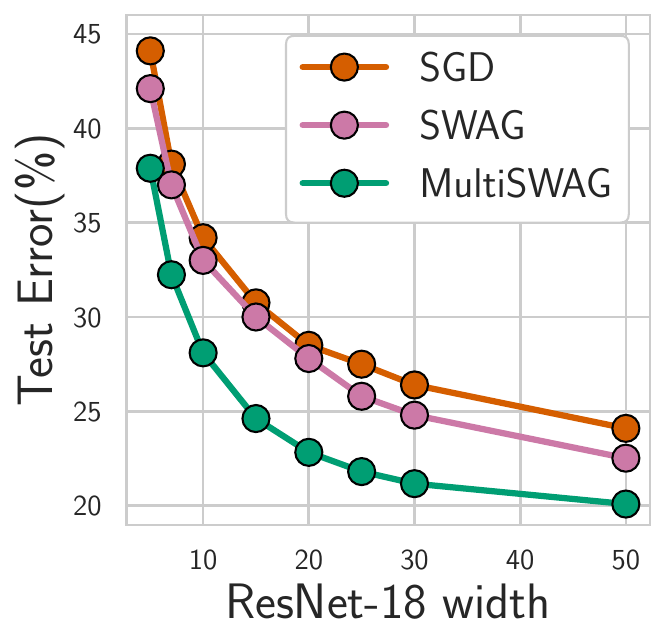}
		\label{fig:dd_acc}
    }
    \hspace{-0.1cm}
	\subfigure[True Labels (NLL)]{
		\includegraphics[height=0.17\textwidth]{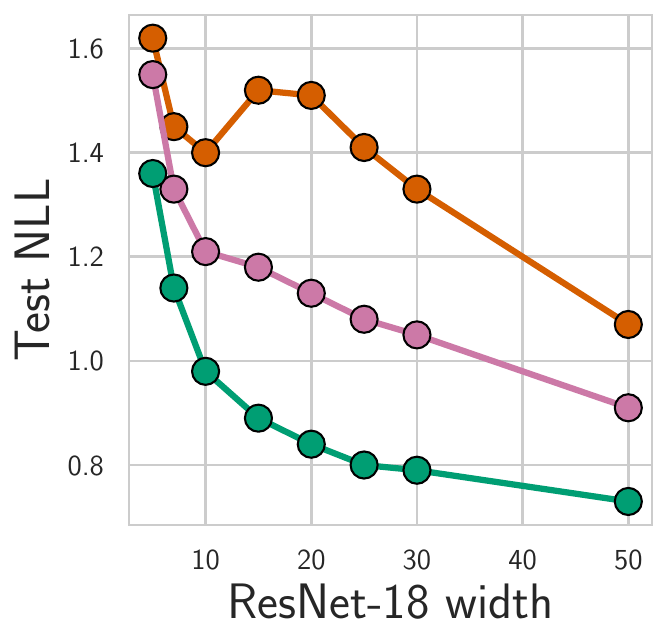}
		\label{fig:dd_nll}
    }
	\hspace{-0.1cm}
    \subfigure[Corrupted (Err)]{
	    \includegraphics[height=0.17\textwidth]{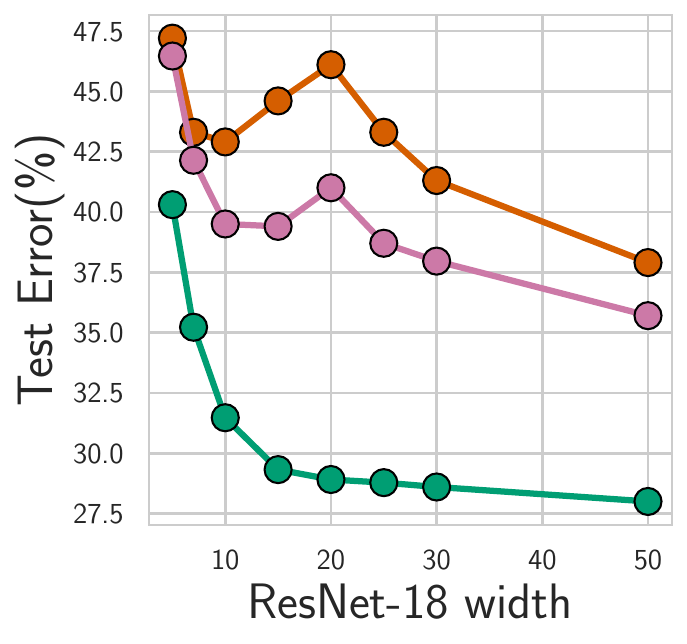}
        \label{fig:dd_corr_acc}
    }
    \hspace{-0.1cm}
	\subfigure[Corrupted (NLL)]{
		\includegraphics[height=0.17\textwidth]{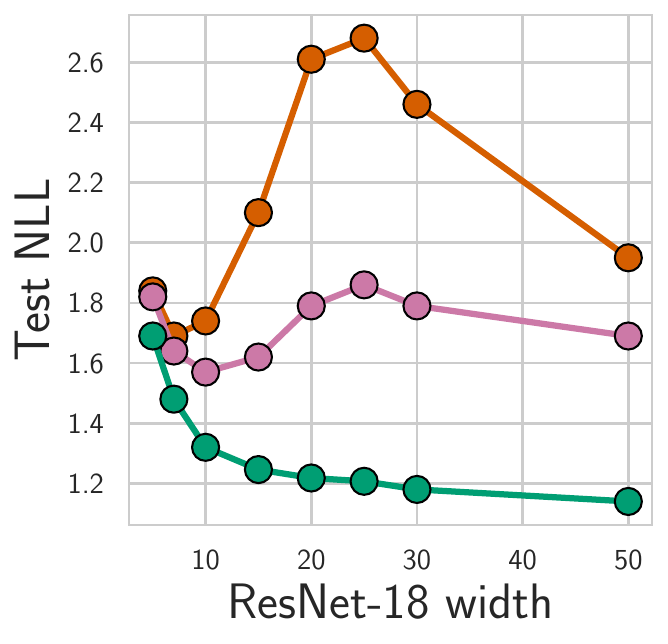}
		\label{fig:dd_corr_nll}
	}
    \hspace{-0.1cm}
	\subfigure[Corrupted (\# Models)]{
		\includegraphics[height=0.17\textwidth]{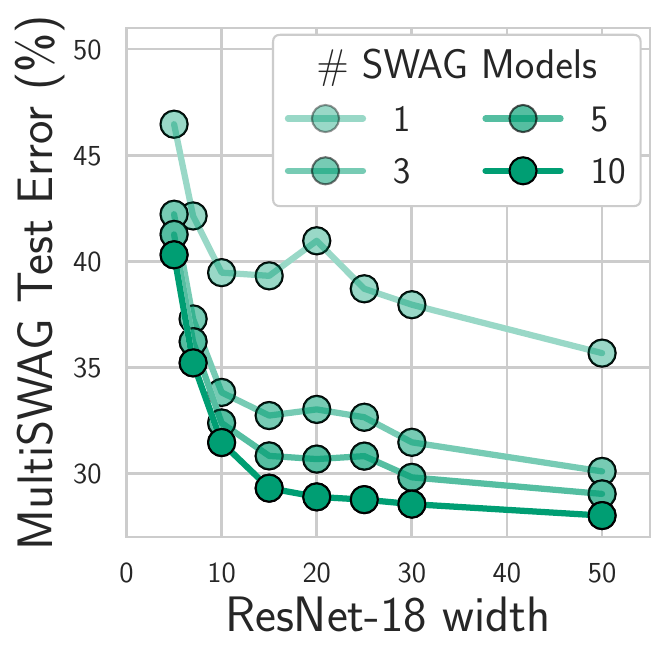}
        \label{fig:dd_multiswag}
    }
	\caption{\textbf{Bayesian model averaging alleviates double descent.} 
    \textbf{(a)}: Test error and \textbf{(b)}: NLL loss for ResNet-18 with varying width on CIFAR-100
    for SGD, SWAG and MultiSWAG.
    \textbf{(c)}: Test error and \textbf{(d)}: NLL loss when 20\% of the labels are randomly reshuffled.
    SWAG reduces double descent, and MultiSWAG, which marginalizes over multiple modes, entirely 
    alleviates double descent both on the original labels and under label
    noise, both in accuracy and NLL. 
    \textbf{(e)}: Test errors for MultiSWAG with varying number of independent SWAG models; 
    error monotonically decreases with increased number of independent models,
    alleviating double descent. We also note that MultiSWAG provides significant improvements in accuracy
    and NLL over SGD and SWAG models.
	See Appendix Figure \ref{fig:app_double_descent} for additional results.
    }
    \label{fig:double_descent}
\end{figure*}

\citet{zhang2016understanding} demonstrated that deep neural networks have sufficient
capacity to fit randomized labels on popular image classification tasks, and suggest this 
result requires re-thinking generalization to understand deep learning.

We argue, however, that this behaviour is not puzzling from a probabilistic perspective, 
is not unique to neural networks,
and cannot be used as evidence against Bayesian neural networks (BNNs) with vague parameter
priors. 
Fundamentally, the resolution is the view
presented in the introduction: from a probabilistic perspective, generalization is at least
a \emph{two-dimensional} concept, related to support (flexibility), which should be as large
as possible, supporting even noisy solutions, 
and inductive biases that represent relative prior probabilities of solutions.

Indeed, we demonstrate that the behaviour in \citet{zhang2016understanding} that was treated
as mysterious and specific to neural networks can be exactly reproduced by Gaussian processes (GPs).
Gaussian processes are an ideal choice for this experiment, because they are popular Bayesian
non-parametric models, and they assign a prior directly in function space. Moreover, GPs have
remarkable flexibility, providing universal approximation with popular covariance functions such
as the RBF kernel. Yet the functions that are a priori \emph{likely} under a GP with an 
RBF kernel are relatively simple. We describe GPs further in the Appendix, and 
\citet{rasmussen06} provides an extensive introduction.

We start with a simple example to illustrate the ability for a GP with an RBF kernel to easily fit
a corrupted dataset, yet generalize well on a non-corrupted dataset, in Figure~\ref{fig:gp_reg}.
In Fig~\ref{fig: gpprior}, we have sample functions from a GP prior over functions $p(f(x))$, showing 
that likely functions under the prior are smooth and well-behaved. In Fig~\ref{fig: gpfit} we see
the GP is able to reasonably fit data from a structured function. And in Fig~\ref{fig: gpnoise} the GP
is also able to fit highly corrupted data, with essentially no structure; although these data are not a 
likely draw from the prior, the GP has support for a wide range of solutions, including noise.

We next show that GPs can replicate the generalization behaviour described in \citet{zhang2016understanding}
(experimental details in the Appendix). When applied to CIFAR-10 images with random labels, 
\emph{Gaussian processes achieve 100\% train accuracy}, and 10.4\% test accuracy (at the level of random guessing). 
However, the same model trained on the true labels  achieves a training accuracy of 72.8\% and a test accuracy of 54.3\%. 
Thus, the generalization behaviour described in \citet{zhang2016understanding} is not unique to neural networks, and can 
be described by separately understanding the support and the inductive biases of a model. 

Indeed, although Gaussian processes support CIFAR-10 images with random labels, they are not likely under the 
GP prior. In Fig~\ref{fig: gpmarg}, we compute the approximate GP marginal likelihood on a binary CIFAR-10 classification 
problem, with labels of varying levels of corruption. We see as the noise in the data increases, the approximate marginal 
likelihood, and thus the prior support for these data, decreases. In Fig~\ref{fig: bnnmarg}, we see a similar trend for a Bayesian
neural network. Again, as the fraction of corrupted labels increases, the approximate marginal likelihood decreases, showing that 
the prior over functions given by the Bayesian neural network has less support for these noisy datasets. We provide further
experimental details in the Appendix.

\citet{dziugaite2017computing} and \citet{smith2017bayesian} provide complementary perspectives on \citet{zhang2016understanding},
for MNIST; \citet{dziugaite2017computing} show non-vacuous PAC-Bayes bounds for the noise-free binary MNIST but not noisy MNIST,
and \citet{smith2017bayesian} show that logistic regression can fit noisy labels on subsampled MNIST, interpreting the results from an Occam factor 
perspective.

\section{Double Descent}
\label{sec:double_descent}

\emph{Double descent} \citep[e.g.,][]{belkin2019reconciling} 
describes generalization error that decreases, increases, and 
then again decreases, with increases in model flexibility. The first
decrease and then increase is referred to as the \emph{classical regime}: 
models with increasing flexibility are increasingly able to capture structure and 
perform better, until they begin to overfit. The next regime is referred to as
the \emph{modern interpolating regime}. The existence of the interpolation regime 
has been presented as mysterious generalization behaviour in deep learning. 

However, our perspective of generalization suggests that performance should
monotonically improve as we increase model flexibility when we use Bayesian 
model averaging with a reasonable prior. Indeed, in the opening example of 
Figure~\ref{fig:airline}, we would in principle want to use the most flexible possible
model. Our results in Section \ref{sec: nnpriors} show that standard BNN priors induce
structured and useful priors in the function space, so we should not expect
double descent in Bayesian deep learning models that perform reasonable 
marginalization.

To test this hypothesis, we evaluate MultiSWAG, SWAG
and standard SGD with ResNet-18 models of varying width, following \citet{nakkiran2019deep},
measuring both error and negative log likelihood (NLL).
For the details, see Appendix \ref{sec:app_details}.
We present the results in Figure \ref{fig:double_descent} and Appendix Figure \ref{fig:app_double_descent}.

First, we observe that models trained with SGD indeed suffer from double descent,
especially when the train labels are partially corrupted (see panels \ref{fig:dd_corr_acc}, \ref{fig:dd_corr_nll}). 
We also see that SWAG, a unimodal posterior approximation, reduces the extent of double descent. Moreover,
MultiSWAG, which performs a more exhaustive \emph{multimodal} Bayesian model average 
\emph{completely mitigates double descent}: the performance of MultiSWAG solutions increases
monotonically with the size of the model, showing no double descent even under significant label corruption,
for both accuracy and NLL. We also found that deep ensembles
follow a similar pattern to MultiSWAG in Figure~\ref{fig:dd_corr_acc}, also mitigating double descent, with slightly
worse accuracy (about 1-2\%). This result is in line with our perspective of Section~\ref{sec: deepensembles} of deep
ensembles providing a better approximation to the Bayesian predictive distribution than conventional
single-basin Bayesian marginalization procedures.

Our results highlight the importance of marginalization over multiple
modes of the posterior: under $20\%$ label corruption SWAG clearly suffers from double
descent while MultiSWAG does not. In Figure \ref{fig:dd_multiswag} we show how the
double descent is alleviated with increased number of independent modes marginalized
in MultiSWAG.

These results also clearly show that MultiSWAG provides significant improvements in
\emph{accuracy} over both SGD and SWAG models, in addition to NLL, an often 
overlooked advantage of Bayesian model averaging we discuss in 
Section~\ref{sec: margimportance}.

Recently, \citet{nakkiran2020optimal} show that carefully tuned $l_2$ regularization can 
help mitigate double descent. Alternatively, we show that Bayesian model averaging, particularly 
based on multimodal marginalization, can mitigate prominent double descent behaviour.
The perspective in Sections \ref{sec: intro} and \ref{sec: bayesianmarginalization}
predicts this result: models with reasonable priors and effective 
Bayesian model averaging should monotonically improve with increases in flexibility.

\section{Temperature Scaling}
\label{sec:app_temperature}

\begin{figure*}
    \def \panelwidth {0.09\textwidth}
	\subfigure[$\alpha=0.01$]{
        \begin{tabular}{c}
		\scriptsize\rotatebox{90}{\quad\quad Prior}~~\rotatebox{90}{\quad~~Sample 1}\quad\includegraphics[height=\panelwidth]{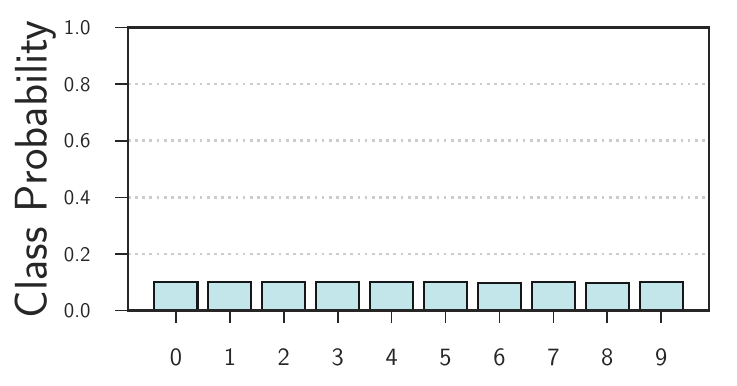}\\
		\scriptsize\rotatebox{90}{\quad\quad Prior}~~\rotatebox{90}{\quad~~Sample 2}\quad\includegraphics[height=\panelwidth]{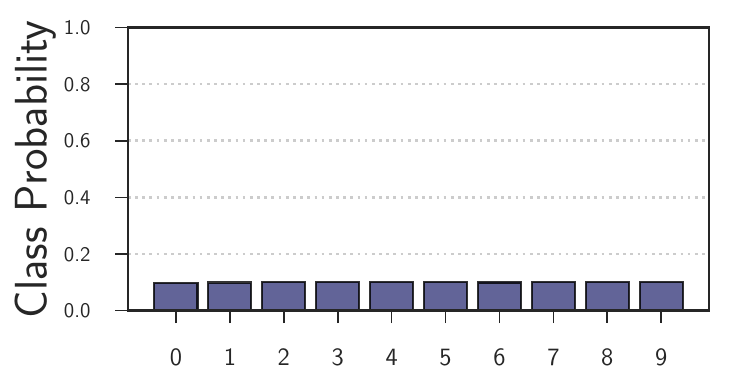}\\
		\scriptsize\rotatebox{90}{\quad\quad Prior}~~\rotatebox{90}{\quad Predictive}\quad\includegraphics[height=\panelwidth]{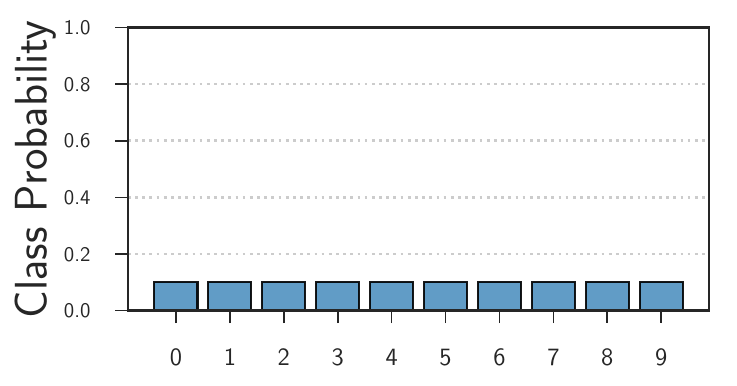}
        \end{tabular}
    }
    \hspace{-0.5cm}
	\subfigure[$\alpha=0.1$]{
        \begin{tabular}{c}
		\includegraphics[height=\panelwidth]{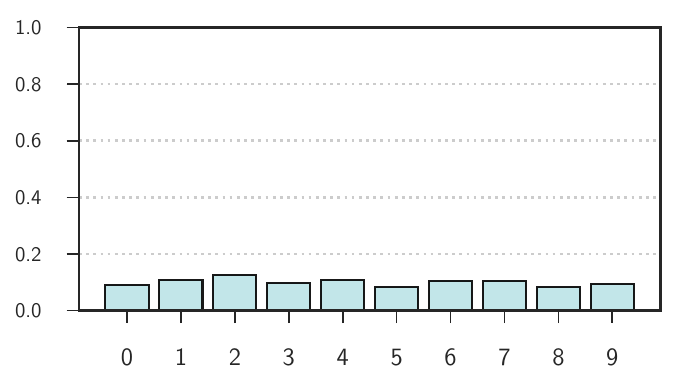}\\
		\includegraphics[height=\panelwidth]{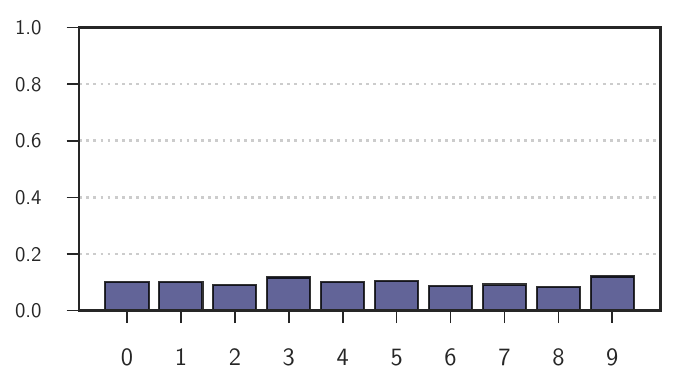}\\
		\includegraphics[height=\panelwidth]{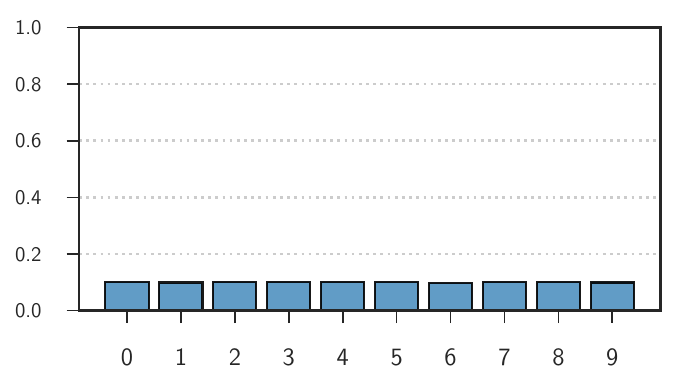}
        \end{tabular}
    }
    \hspace{-0.5cm}
	\subfigure[$\alpha=0.3$]{
        \begin{tabular}{c}
		\includegraphics[height=\panelwidth]{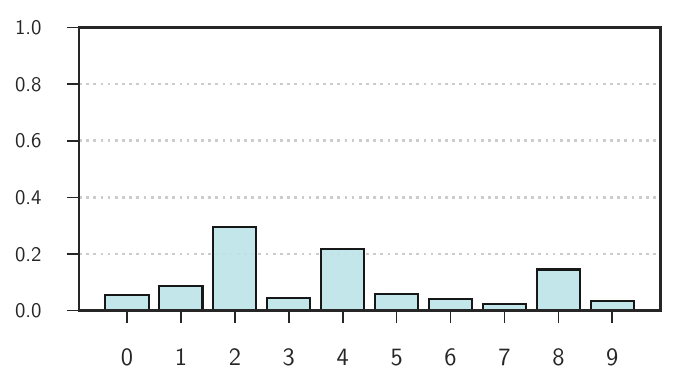}\\
		\includegraphics[height=\panelwidth]{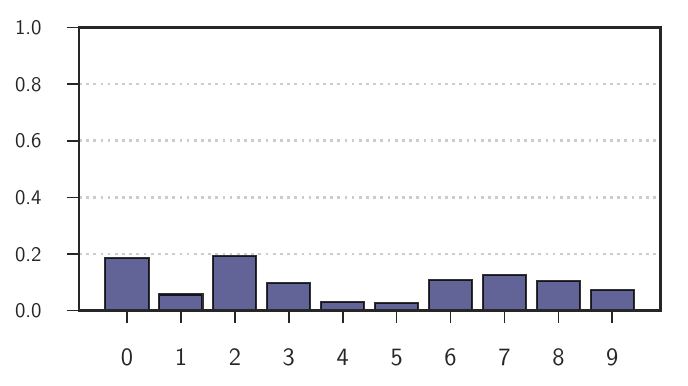}\\
		\includegraphics[height=\panelwidth]{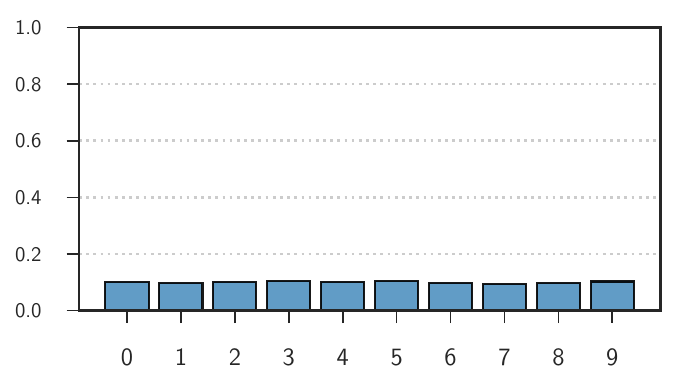}
        \end{tabular}
    }
    \hspace{-0.5cm}
	\subfigure[$\alpha=1.$]{
        \begin{tabular}{c}
		\includegraphics[height=\panelwidth]{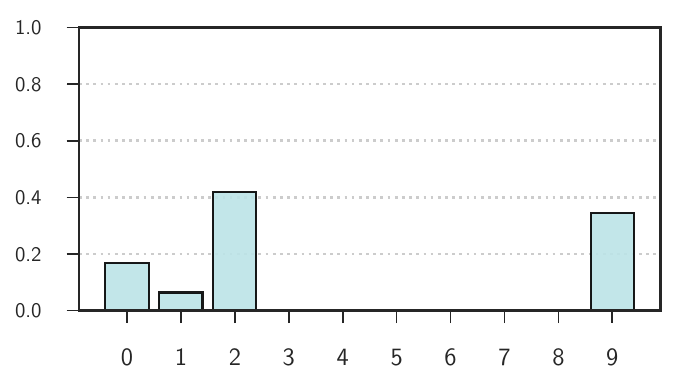}\\
		\includegraphics[height=\panelwidth]{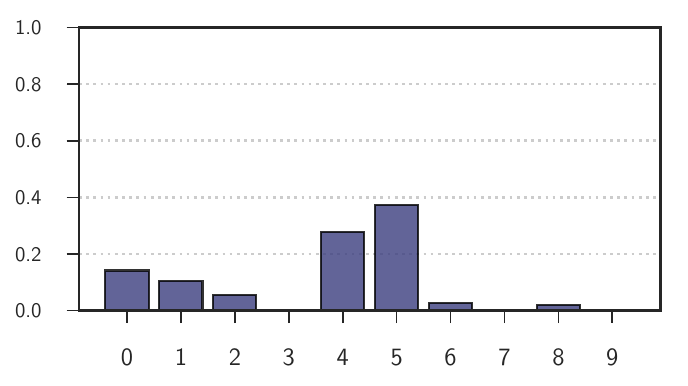}\\
		\includegraphics[height=\panelwidth]{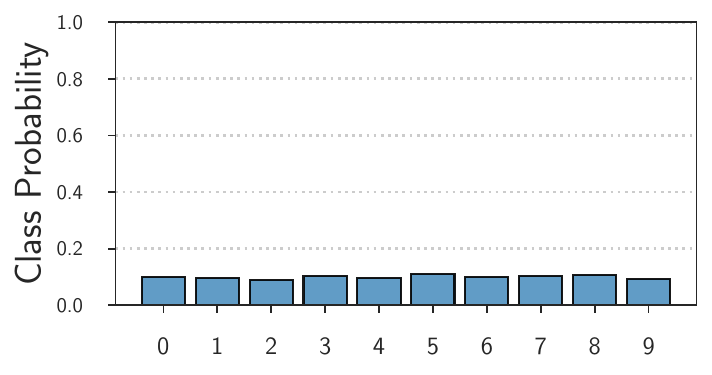}
        \end{tabular}
    }
    \hspace{-0.5cm}
	\subfigure[$\alpha=\sqrt{10}$]{
        \begin{tabular}{c}
		\includegraphics[height=\panelwidth]{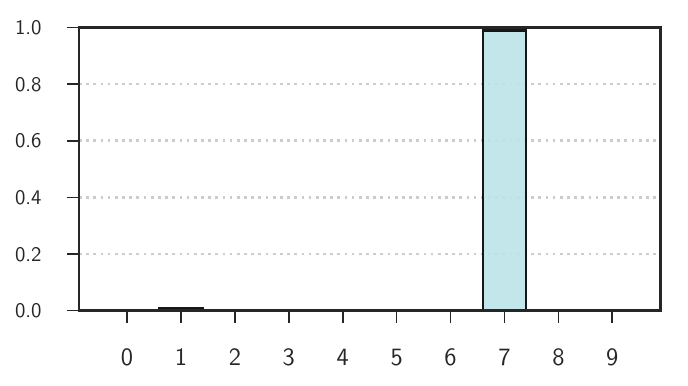}\\
		\includegraphics[height=\panelwidth]{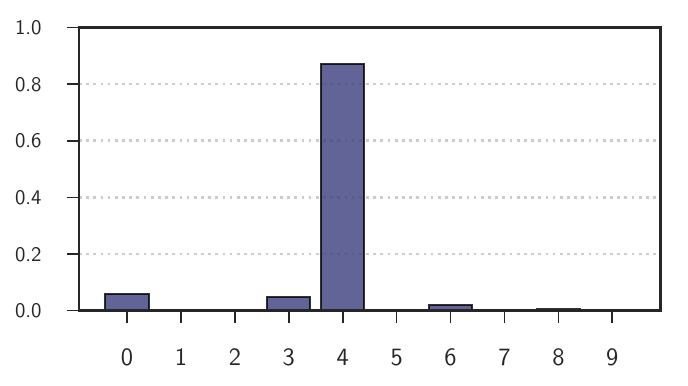}\\
		\includegraphics[height=\panelwidth]{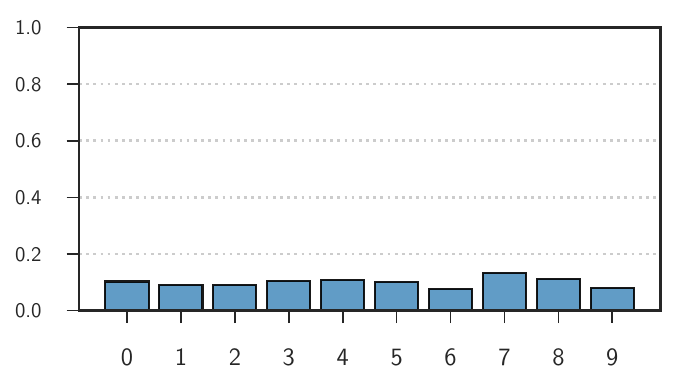}
        \end{tabular}
    }
	\subfigure[]{
		\includegraphics[width=0.23\textwidth]{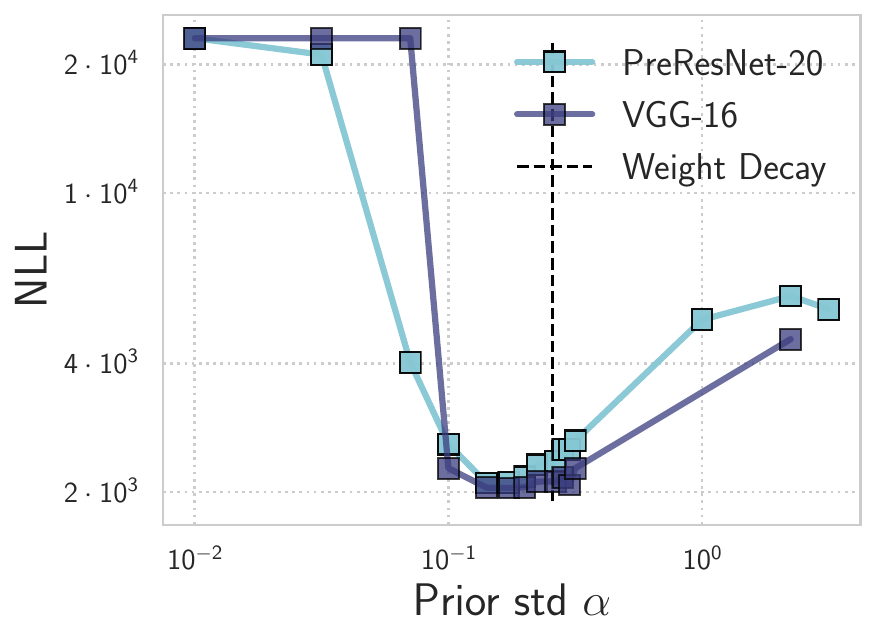}
    }
	\subfigure[]{
		\includegraphics[width=0.23\textwidth]{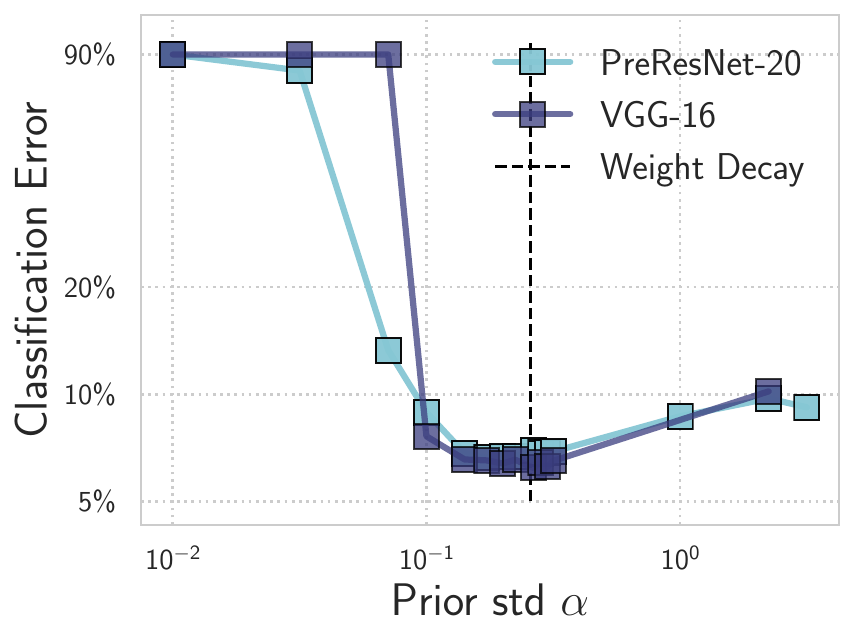}
    }
	\centering
	\caption{\textbf{Effects of the prior variance $\alpha^2$.}
        \textbf{(a)}--\textbf{(e)}: Average class probabilities over all of CIFAR-10 
        for two sample prior functions $p(f(x;w))$ (two top rows) and 
        predictive distribution (average over $200$ samples of weights, bottom row) for varying settings of $\alpha$ in 
        $p(w)=\mathcal{N}(0,\alpha^2 I)$. 
        \textbf{(f)}: NLL and \textbf{(g)} classification error of an ensemble of $20$ SWAG samples on CIFAR-10 as a function of 
        $\alpha$ using a Preactivation ResNet-20 and VGG-16.
        The NLL is high for overly small $\alpha$ and near-optimal in the range of
        $[0.1, 0.3]$. The NLL remains relatively low for vague priors corresponding to 
        large values of $\alpha$.
        }
        \label{fig:priorscale}
\end{figure*}

\begin{figure*}
	\subfigure[Prior ($\alpha = \sqrt{10}$)]{
        \begin{tabular}{c}
		\includegraphics[height=0.09\textwidth]{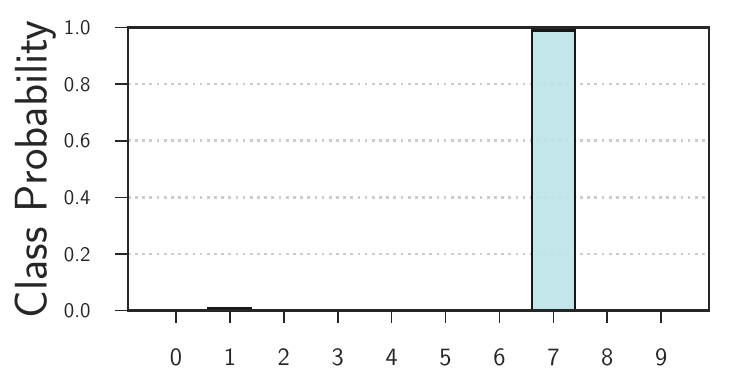}\\
		\includegraphics[height=0.09\textwidth]{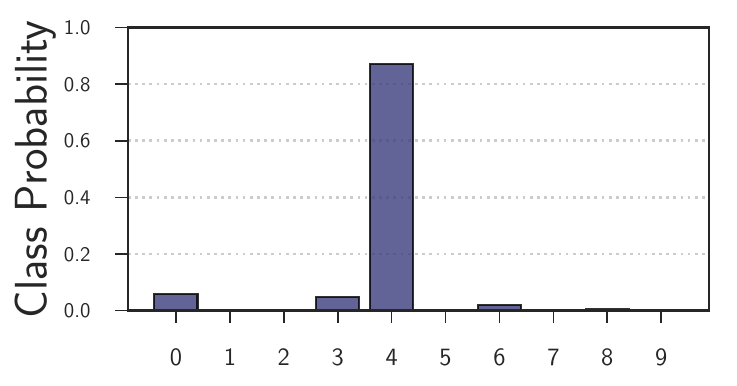}
        \end{tabular}
        \label{fig: priorinit}
    }
	\subfigure[$10$ datapoints]{
        \begin{tabular}{c}
		\includegraphics[height=0.09\textwidth]{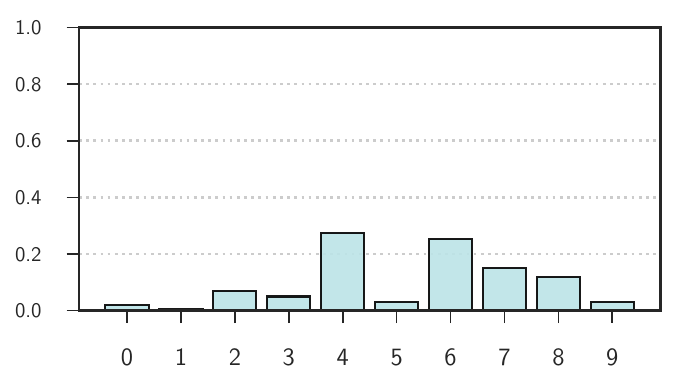}\\
		\includegraphics[height=0.09\textwidth]{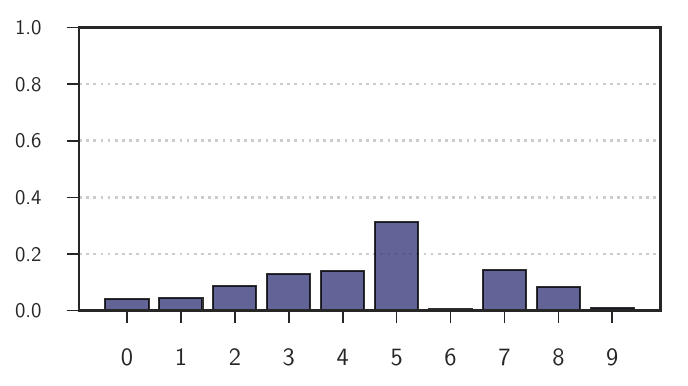}
        \end{tabular}
        \label{fig: priortemp10}
    }
	\subfigure[$100$ datapoints]{
        \begin{tabular}{c}
		\includegraphics[height=0.09\textwidth]{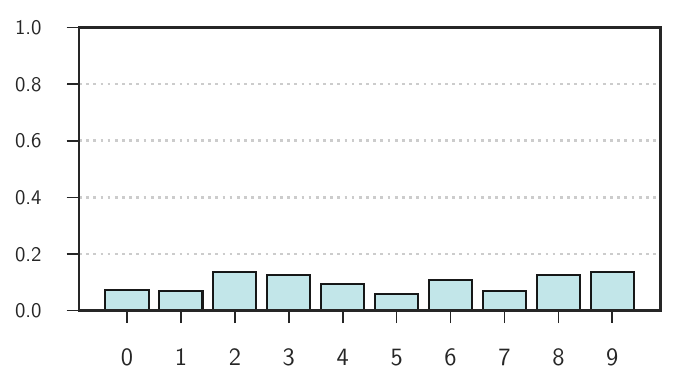}\\
		\includegraphics[height=0.09\textwidth]{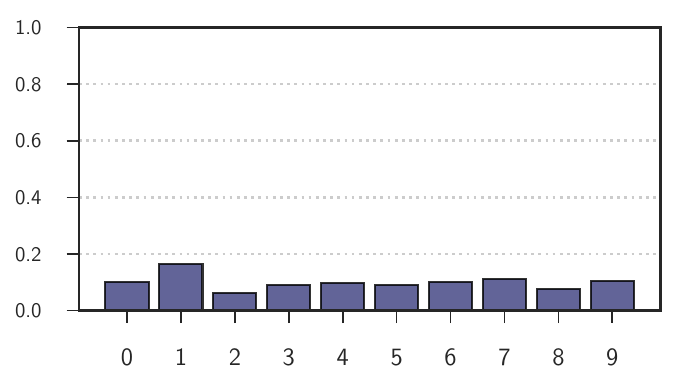}
        \end{tabular}
        \label{fig: priortemp100}
    }
	\subfigure[$1000$ datapoints]{
        \begin{tabular}{c}
		\includegraphics[height=0.09\textwidth]{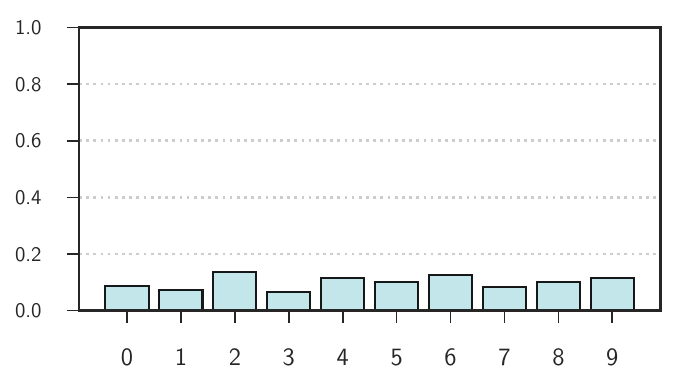}\\
		\includegraphics[height=0.09\textwidth]{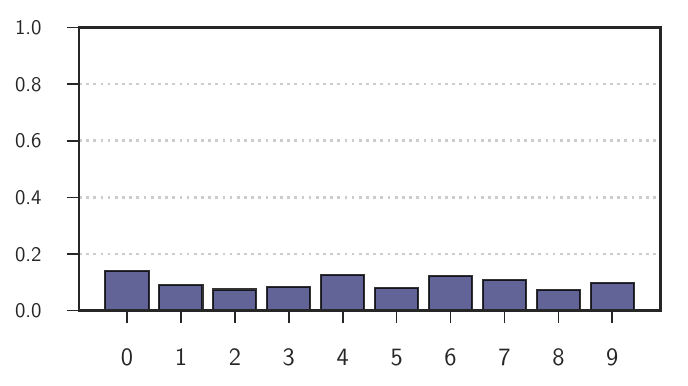}
        \end{tabular}
        \label{fig: priortemp1000}
    }
	\centering
	\caption{\textbf{Adaptivity of posterior variance with data}. We sample two functions $f(x;w)$ from the distribution over functions induced by a distribution
	    over weights, starting with the prior $p(w) = \mathcal{N}(0, 10\cdot I)$, in combination with a PreResNet-20. We measure class probabilities
	    averaged across the CIFAR-10 test set, as we vary the amount of available training data. Although the prior variance is too large, such that the softmax saturates for logits sampled
	    from the prior, leading to one class being favoured, we see that the posterior quickly adapts to correct the scale of the logits in the presence of data. In Figure~\ref{fig:priorscale} we also 
	    show that the prior variance can easily be calibrated such that the prior predictive distribution, even before observing data, is high entropy.
    }
    \label{fig: priordata}
\end{figure*}

The standard Bayesian posterior distribution is given by
\begin{align}
    p(w \vert \mathcal D) = 
    \frac 1 Z p(\mathcal D \vert w) p(w),
\end{align}
where $p(\mathcal{D} | w)$ is a likelihood, $p(w)$ is a prior, and $Z$ is a normalizing constant.

In Bayesian deep learning it is typical to consider the \textit{tempered} posterior
\begin{align}
    p_T(w \vert \mathcal D) = 
    \frac 1 {Z(T)} p(\mathcal D \vert w)^{1/T} p(w),
\end{align}
where $T$ is a \textit{temperature} parameter, and $Z(T)$ is the normalizing
constant corresponding to temperature $T$.
The temperature parameter controls how the prior and likelihood interact in the posterior:
\begin{itemize}
    \item $T < 1$ corresponds to \textit{cold posteriors}, where the posterior
    distribution is more concentrated around solutions with high likelihood.
    \item $T = 1$ corresponds to the standard Bayesian posterior distribution.
    \item $T > 1$ corresponds to \textit{warm posteriors}, where the prior 
    effect is stronger and the posterior collapse is slower.
\end{itemize}
Tempering posteriors is a well-known practice in statistics, where it goes by the names
\emph{Safe Bayes}, \emph{generalized Bayesian inference}, and \emph{fractional
Bayesian inference} \citep[e.g.,][]{de2019safe,grunwald2017inconsistency,barron1991minimum,
walker2001bayesian,zhang2006information,bissiri2016general,grunwald2012safe}. Safe Bayes
has been shown to be natural from a variety of perspectives, including from prequential, learning
theory, and minimum description length frameworks \citep[e.g.,][]{grunwald2017inconsistency}.

Concurrently with our work, \citet{wenzel2020good} noticed that successful Bayesian 
deep learning methods tend to use cold posteriors. They provide an empirical study that shows 
that Bayesian neural networks (BNNs) with cold posteriors outperform models with SGD based 
maximum likelihood training, while BNNs with $T=1$ can perform worse than the 
maximum likelihood solution. They claim that cold posteriors sharply deviate from the 
Bayesian paradigm, and consider possible reasons for why tempering is helpful in Bayesian
deep learning. 

In this section, we provide an alternative view and argue that tempering is not at odds with 
Bayesian principles. Moreover, for virtually any 
realistic model class and dataset, it would be highly surprising if $T=1$ \emph{were} in fact the
best setting of this hyperparameter. Indeed, as long as it is practically convenient, we would advocate 
tempering for essentially \emph{any} model, especially parametric models that do not scale their 
capacity automatically with the amount of available information. Our position is that at a high level
Bayesian methods are trying to combine honest beliefs with data to form a posterior. By reflecting 
the belief that the model is misspecified, the tempered posterior is often more of a \emph{true posterior}
than the posterior that results from ignoring our belief that the model misspecified. 

Finding that $T<1$ helps for Bayesian neural networks is neither surprising nor discouraging.
And the actual results of the experiments in \citet{wenzel2020good}, which show great 
improvements over standard SGD training, are in fact very encouraging of deriving inspiration
from Bayesian procedures in deep learning.

We consider (1) tempering under misspecification (Section~\ref{sec: misspec}); (2) tempering in terms of
overcounting data (Section~\ref{sec: counting}); (3) how tempering compares to changing the observation
model (Section~\ref{sec: temporlike}); (4) the effect of the prior in relation to the experiments of \citet{wenzel2020good}
(Section~\ref{sec: prioreffect}); (5) the effect of approximate inference, including how tempering can help in efficiently 
estimating parameters even for the untempered posterior (Section~\ref{sec: approxinf}).

This section shows how tempering can be a reasonable procedure, and addresses several of 
the points in \citet{wenzel2020good}, particularly on prior misspecification.

Since the original publication of our paper, there have been many papers discussing the cold posterior 
effect. In our follow-up work \citep{izmailov2021bayesian}, we show that there is no cold posterior 
effect in any of the examples of \citet{wenzel2020good} if we remove data augmentation.
In \citet{kapoor2022aleatoric}, we 
show precisely how data augmentation leads to underconfidence in Bayesian classification, and 
how posterior tempering can more naturally reflect our beliefs about aleatoric uncertainty than using $T=1$. 
We also show that the cold posterior effect can be removed in the presence of data augmentation by 
using a Dirichlet observation model, which explicitly enables one to represent aleatoric uncertainty.

\subsection{Tempering Helps with Misspecified Models}
\label{sec: misspec}

Many works explain how tempered posteriors help under model misspecification  
\citep[e.g.,][]{de2019safe,grunwald2017inconsistency,barron1991minimum,
walker2001bayesian,zhang2006information,bissiri2016general,grunwald2012safe}.
In fact, \citet{de2019safe} and \citet{grunwald2017inconsistency} provide several simple 
examples where Bayesian inference fails to provide good convergence behaviour for 
untempered posteriors. While it is easier
to show theoretical results for $T>1$, several of these works also show that $T<1$
can be preferred, even in well-specified settings, 
and indeed recommend learning $T$ from data, for example by cross-validation 
\citep[e.g.,][]{grunwald2012safe, de2019safe}.

\paragraph{Are we in a misspecified setting for Bayesian neural networks?} 
Of course. And it would be irrational to proceed as if it were otherwise. 
Every model is misspecified. In the context of Bayesian neural networks specifically,
the mass of solutions expressed by the prior outside of the datasets we typically consider 
is likely much larger than desired for most applications.
We can calibrate for this discrepancy through tempering. The resulting tempered posterior will be more
in line with our beliefs than pretending the model is not misspecified and finding the untempered posterior.

\emph{Non-parametric models}, such as Gaussian processes, attempt to side-step model misspecification
by growing the number of free parameters (information capacity) automatically with the amount of available
data. In parametric models, we take much more of a manual guess about the model capacity. In the case of 
deep neural networks, this choice is not even close to a \emph{best guess}; it was once the case that 
architectural design was a large component of works involving neural networks, but now it is more standard
practice to choose an off-the-shelf architecture, without much consideration of model capacity.  We do not 
believe that knowingly using a misspecified model to find a posterior is more reasonable (or Bayesian) 
than honestly reflecting the belief that the model is misspecified and then using a tempered posterior. For parametric
models such as neural networks, it is to be expected that the capacity is particularly misspecified.

\subsection{Overcounting Data with Cold Posteriors}
\label{sec: counting}

The criticism of cold posteriors raised by \citet{wenzel2020good} is largely based
on the fact that decreasing temperature leads to overcounting data in the 
posterior distribution.

However, a similar argument can be made against marginal likelihood maximization 
(also known as \textit{empirical Bayes} or \textit{type 2 maximum likelihood}). Indeed,
here, the prior will depend on the same data as the likelihood, which can lead to miscalibrated
predictive distributions \citep{darnieder2011bayesian}.

Nonetheless, empirical Bayes has been embraced and widely adopted in Bayesian machine learning
\citep[e.g.,][]{bishop06, rasmussen06, mackay2003information, minka2001automatic}, as embodying 
several Bayesian principles. Empirical Bayes has been 
particularly embraced in seminal work on Bayesian neural networks \citep[e.g.,][]{mackay1992bayesian, mackay1995probable},
where it has been proposed as a principled approach to learning hyperparameters, such as the scale of 
the variance for the prior over weights, automatically embodying Occam's razor. While there is in this case 
some deviation from the fully Bayesian paradigm, the procedure, which depends on marginalization, is nonetheless 
clearly inspired by Bayesian thinking --- and it is thus helpful to reflect this inspiration and provide understanding of how it 
works from a Bayesian perspective.

There is also work showing the marginal likelihood can lead to miscalibrated Bayes factors under model misspecification. 
Attempts to calibrate these factors \citep{xu2019calibrated}, as part of the Bayesian paradigm, are highly reminiscent of 
work on safe Bayes.

\subsection{Tempered Posterior or Different Likelihood?}
\label{sec: temporlike}

In some cases, the tempered posterior for one model is an untempered posterior using a different likelihood function.
Specifically, consider regression with a Gaussian likelihood and noise variance $\sigma^2$:
\begin{align*}
\begin{split}
  p(y \vert x, w) &= \mathcal N(y \vert f(x, w), \sigma^2) \\ 
  &= \frac{1}{\sqrt{2 \pi \sigma^2}} \cdot \exp\left(-\frac{(y - f(x, w))^2}{2 \sigma^2}\right),
\end{split}
\end{align*}
where $f(x, w)$ is the prediction of the model $f$ with parameters $w$ on the input $x$.
Then, tempering the likelihood, we achieve 
\begin{align*}
\begin{split}
  p(y \vert x, w)^{1/T} 
  &= \frac{1}{\sqrt{2 \pi \sigma^2}^{1/T}} \cdot \exp\left(-\frac{(y - f(x, w))^2}{2 T \sigma^2}\right) \\
  &= \mathcal N(y \vert f(x, w), T \sigma^2) \cdot \sqrt{\frac{(2 \pi T \sigma^{2})}{(2 \pi \sigma^2)^{1 / T}}}\\
  &= \mathcal N(y \vert f(x, w), T \sigma^2) \cdot C,
\end{split}
\end{align*}
where $C$ is a renormalization constant that does not depend on the parameters $w$ of the model.
In this case, the standard Bayesian posterior in the model with noise variance $T \sigma^2$ is equal to the posterior of temperature $T$
in the original model with noise variance $\sigma^2$. Section 4.1 of \citet{grunwald2017inconsistency} considers a related construction.

The predictive distribution differs for the two models; even though the
posteriors coincide, the likelihoods for a new datapoint $y^*$ are different:
\begin{align}
	\label{eq:bma_predictive}
	\int p(y^* \vert w) p(w) dw \ne
	\int p_T(y^* \vert w) p(w) dw \,.
\end{align}
For the Gaussian model described above, the predictions of the tempered model and model
with modified likelihood will have the same mean but different predictive variance.

\citet{wenzel2020good} provide a construction of a likelihood function that is equivalent to tempering
for classification problems.
For a general likelihood function $p(\mathcal D \vert w)$, we can consider a modified likelihood of the form
\begin{align}
  \hat p(\mathcal D \vert w) = p(\mathcal D \vert w)^{1 / T} \cdot C(w),
\end{align}
where $C(w)$ is a renormalization constant that in general depends on the parameters $w$ and inputs $x$, but not the target values $y$.
The standard posterior $\hat p(w \vert \mathcal D)$ in the model with the modified likelihood will then coincide with the 
tempered posterior $p_T(w \vert \mathcal D)$ in the original model up to $C(w)$:
\begin{align}
  \frac{\hat p(w \vert \mathcal D)}{p_T(w \vert \mathcal D)} = C(w).
\end{align}

This subsection has been updated to add a discussion of renormalization, resolving a minor technical point. We however disagree with 
the discussion in \citet{wenzel2020good} on the connection between tempered posteriors and different likelihoods, and believe it misses 
the point. First, in many cases the tempered posterior can be exactly recovered by changing the likelihood on the training data, such as for regression with Gaussian noise. 
Moreover, as above, we can simply introduce a 
renormalization constant that does not depend on the labels $y$, to preserve the equivalence of a tempered posterior
and a modified likelihood in any model.
This renormalization constant can be viewed as a prior over the parameters that we implicitly define with respect to the predictions on the training data;
such a prior can, for example, encode beliefs about how confident the network should be in its predictions on the training data.

Furthermore, as \citet{wenzel2020good} show, the tempered softmax likelihood with $T<1$
can be viewed as a valid likelihood if we introduce a new class, not observed in the training data. While they discard that particular
interpretation, it is not unreasonable to include an unobserved class, since our observation model may want to recognize
that we have not observed all possible classes, and therefore retain an additional label.
This extra class can, for example, correspond to all the possible images that do not belong to any of the classes in our dataset.
Finally, new work \citep{kapoor2022aleatoric}
shows that we can naturally interpret the tempered likelihood as using the multinomial observation model, assuming $1 / T$ counts of the label are observed for each of the training datapoints, 
which is perfectly valid.

We present other key considerations in the discussion of tempering for Bayes posteriors in other 
parts of this section.

\vspace{-2mm}
\subsection{Effect of the Prior}
\label{sec: prioreffect}

While a somewhat misspecified prior will certainly interact with the utility of tempering, we do not 
believe the experiments in \citet{wenzel2020good} provide evidence that even the prior $p(w) = \mathcal{N}(0,I)$ 
is misspecified to any serious extent. For a relatively wide range of distributions over $w$, the functional form of 
the network $f(x;w)$ can produce a generally reasonable distribution over functions $p(f(x;w))$. In Figure~\ref{fig: priordata}, we reproduce the findings in \citet{wenzel2020good} showing sample functions $p(f(x;w))$ 
corresponding to the prior $p(w) = \mathcal{N}(0,10\cdot I)$ strongly favour a single class over the dataset. While this
behaviour appears superficially dramatic, we note it is simply an artifact a miscalibrated signal variance. A 
miscalibrated signal variance interacts with a quickly saturating soft-max link function to provide a seemingly dramatic
preference to a given class. If we instead use $p(w) = \mathcal{N}(0,\alpha^2 I)$, for quite a range of $\alpha$, then 
sample functions provide reasonably high entropy across labels averaged over the dataset, as in Figure~\ref{fig:priorscale}.
For individual points the posterior samples have different particular class preferences. $\alpha$
can be easily determined through cross-validation, as in Figure~\ref{fig:priorscale}, or specified as a standard value used
for $L_2$ regularization ($\alpha = 0.24$ in this case).

However, even with the inappropriate prior scale, we see in panels (a)--(e) of Figure \ref{fig:priorscale} that the unconditional predictive distribution \emph{is}
completely reasonable. Moreover, the prior variance represents a \emph{soft} prior bias, and will quickly update with data. In Figure~\ref{fig: priordata} we show posterior samples after observing $10, 100,$ and $1000$ data points.

Other aspects of the prior, outside of the prior signal variance, will have a much greater effect on the inductive biases of the model.
For example, the induced covariance function $\text{cov}(f(x_i,w), f(x_j,w))$ reflects the induced similarity metric over data instances; 
through the covariance function we can answer, for instance, whether the model believes a priori that a translated image is similar to the 
original. Unlike the signal variance of the prior, the prior covariance function will continue to have a significant effect on posterior inference
for even very large datasets, and strongly reflects the structural properties of the neural network. We explore these structures of the prior 
in Figure~\ref{fig:prior_dep}.

\subsection{The Effect of Inexact Inference}
\label{sec: approxinf}

We have to keep in mind what we ultimately use posterior samples to compute. Ultimately,
we wish to estimate the predictive distribution given by the integral in Equation~\eqref{eqn: bma}.
With a finite number of samples, the tempered posterior could be used to provide a better approximation
to the expectation of the predictive distribution associated with untempered posterior. 

Consider a simple example, where we wish to estimate the mean of a high-dimensional Gaussian distribution
$\mathcal{N}(0,I)$. Suppose we use $J$ independent samples. The mean of these samples is also Gaussian
distributed, $\mu \sim \mathcal{N}(0, \frac{1}{J} I)$. In Bayesian deep learning, the dimension $d$ is typically
on the order $10^7$, and $J$ would be on the order of $10$. The norm of $\mu$ would be highly concentrated
around $\frac{\sqrt{10^7}}{\sqrt{10}} = 1000$. In this case, sampling from a tempered posterior with $T<1$ would
lead to a better approximation of the Bayesian model average associated with an untempered posterior.

Furthermore, no current sampling procedure will be providing samples that are close to 
independent samples from the true posterior of a Bayesian neural network. The posterior landscape is far too 
multimodal and complex for there to be any reasonable coverage. The approximations we have are practically
useful, and often preferable to conventional training, but we cannot realistically proceed with analysis assuming 
that we have obtained true samples from a posterior. While we would expect that some value of $T \ne 1$ would be 
preferred for any finite dataset in practice, it is conceivable that some of the results in 
\citet{wenzel2020good} may be affected by the specifics of the approximate inference
technique being used.

We should be wary not to view Bayesian model averaging purely through the prism of simple Monte Carlo, as 
advised in Section~\ref{sec: beyondmc}. Given a finite computational budget, our goal in effectively approximating
a Bayesian model average is \emph{not} equivalent to obtaining good samples from the posterior.

\section{Discussion}

\begin{quotation}
\emph{``It is now common practice for Bayesians to fit models that have more parameters than the number of data points\dots Incorporate every imaginable possibility into the model space: for example, if it is conceivable that a very simple model might be able to explain the data, one should include simple models; if the noise might have a long-tailed distribution, one should include a hyperparameter which controls the heaviness of the tails of the distribution; if an input variable might be irrelevant to a regression, include it in the regression anyway.''} \citet{mackay1995probable} 
\end{quotation}

We have presented a probabilistic perspective of generalization, which depends on the support and inductive biases of the model. The support should be as large possible, but the inductive biases must be well-calibrated to a given problem class. We argue that Bayesian neural networks embody these properties --- and through the lens of probabilistic inference, explain generalization behaviour that has previously been viewed as mysterious. Moreover, we argue that Bayesian marginalization is particularly compelling for neural networks, show how deep ensembles provide a practical mechanism for marginalization, and propose a new approach that generalizes deep ensembles to marginalize within basins of attraction. We show that this multimodal approach to Bayesian model averaging, MultiSWAG, can entirely alleviate double descent, to enable monotonic performance improvements with increases in model flexibility, as well significant improvements in generalization accuracy and log likelihood over SGD and single basin marginalization.

There are certainly many challenges to estimating the integral for a Bayesian model average in modern deep learning, including a high-dimensional parameter space, and a complex posterior landscape. But viewing the challenge indeed as an integration problem, rather than an attempt to obtain posterior samples for a simple Monte Carlo approximation, provides opportunities for future progress. Bayesian deep learning has been making fast practical advances, with approaches that now enable better accuracy and calibration over standard training, with minimal overhead. 

We finish with remarks about future developments for Bayesian neural network priors, and approaches to research in Bayesian deep learning.

\subsection{The Future for BNN Priors}
\label{sec:app_future}

We provide some brief remarks about future developments for BNN priors. Here we have explored relatively simple parameter
priors $p(w) = \mathcal{N}(0,\alpha^2 I)$. While these priors are simple in parameter space, they interact with the neural network architecture 
to induce a sophisticated prior over functions $p(f(x;w))$, with many desirable properties, including a reasonable correlation 
structure over images. However, these parameter priors can certainly still be 
improved. As we have seen, even tuning the value of the signal variance $\alpha^2$, an analogue of the $L_2$ regularization often used in deep learning, 
can have a noticeable affect on the induced prior over functions --- though this affect is quickly modulated by data. 
Layer-wise priors, such that parameters in each layer have a different
signal variance, are intuitive: we would expect later layers require precise determination, while parameters in earlier layers could reasonably
take a range of values. But one has to be cautious; as we show in Appendix Section~\ref{sec:app_prior_analysis}, with ReLU activations 
different signal variances in different layers can be degenerate, combining together to affect only the output scale of the network. 

A currently popular sentiment is that we should directly build function-space BNN priors, often taking inspiration from Gaussian
processes. While we believe this is a promising direction, one should proceed with caution. If we contrive priors over parameters
$p(w)$ to induce distributions over functions $p(f)$ that resemble familiar models such as Gaussian processes with RBF kernels, 
we could be throwing the baby out with the bathwater. Neural networks are useful as their own model class precisely because they 
have different inductive biases from other models. 

A similar concern applies to taking infinite width limits in Bayesian neural networks.
In these cases we recover Gaussian processes with interpretable kernel functions; because these models are easier to use and analyze,
and give rise to interpretable and well-motivated priors, it is tempting to treat them as drop-in replacements for the parametric analogues. 
However, the kernels for these models are \emph{fixed}. In order for 
a model to do effective representation learning, we must learn a similarity metric for the data. Training a neural network in many ways is
like \emph{learning} a kernel, rather than using a fixed kernel. \citet{mackay98} has also expressed concerns in treating these limits as 
replacements for neural networks, due to the loss of representation learning power.

Perhaps the distribution over functions induced by a network in combination with a 
generic distribution over parameters $p(w)$ may be hard to interpret --- but this distribution will contain the equivariance properties,
representation learning abilities, and other biases that make neural networks a compelling model class in their own right.

\subsection{``But is it \emph{really} Bayesian?''}
\label{sec:app_note}

We finish with an editorial comment about approaches to research within Bayesian deep 
learning. There is sometimes a tendency to classify work as \emph{Bayesian} or \emph{not Bayesian},
with very stringent criteria for what qualifies as \emph{Bayesian}. Moreover, the implication, and sometimes
even explicit recommendation, is that if an approach is not unequivocally Bayesian in every respect, then we 
should not term it as Bayesian, and we should instead attempt to understand the procedure through entirely 
different non-Bayesian mechanisms. We believe this mentality encourages tribalism, 
which is not conducive to the best research, or creating the best performing methods. 
What matters is not a debate about
semantics,
but making rational modelling choices given a particular problem setting, and trying to understand
these choices. Often these choices can largely be inspired by a Bayesian approach --- in which case it desirable to 
indicate this source of inspiration. And in the semantics debate, who would be the arbiter of what gets to be called 
Bayesian? Arguably it ought to be an evolving definition.

Broadly speaking, what makes Bayesian approaches distinctive is a posterior weighted marginalization over
parameters. And at a high level, Bayesian methods are about combining our honest beliefs with data to form a posterior.
In actuality, no fair-minded researcher entirely believes the prior over parameters, the functional 
form of the model (which is part of the prior over functions), or the likelihood. From this perspective, it is broadly 
compatible with a Bayesian philosophy to reflect misspecification in the modelling procedure itself, which is achieved through
tempering. In this sense, the \emph{tempered posterior} is more reflective of a \emph{true posterior} than the posterior that
results from ignoring our belief that the model is misspecified.

Moreover, basic probability theory indicates that marginalization is desirable. 
While marginalization cannot
in practice be achieved exactly, we can try to improve over conventional training, which as we have discussed can be
viewed as approximate marginalization. Given computational constraints, effective marginalization is not equivalent
to obtaining accurate samples from a posterior. As we have discussed, simple Monte Carlo is only one 
of many mechanisms for marginalization. Just like we how expectation propagation \citep{minka01} focuses its approximation to 
factors in a posterior where it will most affect the end result, we should focus on representing the posterior where 
it will make the biggest difference to the model average. As we have shown, deep ensembles are a reasonable mechanism
up to a point. After having trained many independent models, there are added benefits to marginalizing within basins, given
the computational expense associated with retraining an additional model to find an additional basin of attraction.

We should also not hold Bayesian methods to a double standard. Indeed, it can be hard to interpret or understand the prior,
the posterior, and whether the marginalization procedure is optimal. But it is also hard to interpret the choices behind the 
functional form of the model, or the rationale behind classical procedures where we bet everything on a single global optimum 
--- when we know
there are many global optima and many of them will perform well but provide different solutions, and many others will 
not perform well. We should apply the same level of scrutiny to all 
modelling choices, consider the alternatives, and not be paralyzed if a procedure is not optimal in every respect.

\paragraph{Acknowledgements.} AGW and PI are supported by an Amazon Research Award, Facebook Research, 
NSF I-DISRE 193471, NIH R01 DA048764-01A1, NSF IIS-1563887, and NSF IIS-1910266. We thank Greg Benton
for helpful discussions.


\bibliography{mbibnew}

\begin{thebibliography}{86}
\providecommand{\natexlab}[1]{#1}
\providecommand{\url}[1]{\texttt{#1}}
\expandafter\ifx\csname urlstyle\endcsname\relax
  \providecommand{\doi}[1]{doi: #1}\else
  \providecommand{\doi}{doi: \begingroup \urlstyle{rm}\Url}\fi

\bibitem[Achille \& Soatto(2018)Achille and Soatto]{achille2018emergence}
Achille, A. and Soatto, S.
\newblock Emergence of invariance and disentanglement in deep representations.
\newblock \emph{The Journal of Machine Learning Research}, 19\penalty0
  (1):\penalty0 1947--1980, 2018.

\bibitem[Alquier(2021)]{alquier2021user}
Alquier, P.
\newblock User-friendly introduction to pac-bayes bounds.
\newblock \emph{arXiv preprint arXiv:2110.11216}, 2021.

\bibitem[Ashukha et~al.(2020)Ashukha, Lyzhov, Molchanov, and
  Vetrov]{ashukha2020pitfalls}
Ashukha, A., Lyzhov, A., Molchanov, D., and Vetrov, D.
\newblock Pitfalls of in-domain uncertainty estimation and ensembling in deep
  learning.
\newblock \emph{arXiv preprint arXiv:2002.06470}, 2020.

\bibitem[Barron \& Cover(1991)Barron and Cover]{barron1991minimum}
Barron, A.~R. and Cover, T.~M.
\newblock Minimum complexity density estimation.
\newblock \emph{IEEE transactions on information theory}, 37\penalty0
  (4):\penalty0 1034--1054, 1991.

\bibitem[Beal(2003)]{beal2003variational}
Beal, M.~J.
\newblock \emph{Variational algorithms for approximate {B}ayesian inference}.
\newblock university of London, 2003.

\bibitem[Belkin et~al.(2019)Belkin, Hsu, Ma, and Mandal]{belkin2019reconciling}
Belkin, M., Hsu, D., Ma, S., and Mandal, S.
\newblock Reconciling modern machine-learning practice and the classical
  bias--variance trade-off.
\newblock \emph{Proceedings of the National Academy of Sciences}, 116\penalty0
  (32):\penalty0 15849--15854, 2019.

\bibitem[Bishop(2006)]{bishop06}
Bishop, C.~M.
\newblock \emph{Pattern Recognition and Machine Learning}.
\newblock Springer, 2006.

\bibitem[Bissiri et~al.(2016)Bissiri, Holmes, and Walker]{bissiri2016general}
Bissiri, P.~G., Holmes, C.~C., and Walker, S.~G.
\newblock A general framework for updating belief distributions.
\newblock \emph{Journal of the Royal Statistical Society: Series B (Statistical
  Methodology)}, 78\penalty0 (5):\penalty0 1103--1130, 2016.

\bibitem[Blundell et~al.(2015)Blundell, Cornebise, Kavukcuoglu, and
  Wierstra]{blundell2015weight}
Blundell, C., Cornebise, J., Kavukcuoglu, K., and Wierstra, D.
\newblock Weight uncertainty in neural networks.
\newblock \emph{arXiv preprint arXiv:1505.05424}, 2015.

\bibitem[Box \& Tiao(1973)Box and Tiao]{box1973bayesian}
Box, G.~E. and Tiao, G.~C.
\newblock {B}ayesian inference in statistical analysis, addision-wesley.
\newblock \emph{Reading, MA}, 1973.

\bibitem[Chaudhari et~al.(2019)Chaudhari, Choromanska, Soatto, LeCun, Baldassi,
  Borgs, Chayes, Sagun, and Zecchina]{chaudhari2019entropy}
Chaudhari, P., Choromanska, A., Soatto, S., LeCun, Y., Baldassi, C., Borgs, C.,
  Chayes, J., Sagun, L., and Zecchina, R.
\newblock Entropy-sgd: Biasing gradient descent into wide valleys.
\newblock \emph{Journal of Statistical Mechanics: Theory and Experiment}, 2019.

\bibitem[Cobb et~al.(2019)Cobb, Baydin, Markham, and
  Roberts]{cobb2019introducing}
Cobb, A.~D., Baydin, A.~G., Markham, A., and Roberts, S.~J.
\newblock Introducing an explicit symplectic integration scheme for riemannian
  manifold hamiltonian monte carlo.
\newblock \emph{arXiv preprint arXiv:1910.06243}, 2019.

\bibitem[Darnieder(2011)]{darnieder2011bayesian}
Darnieder, W.~F.
\newblock \emph{{B}ayesian methods for data-dependent priors}.
\newblock PhD thesis, The Ohio State University, 2011.

\bibitem[de~Heide et~al.(2019)de~Heide, Kirichenko, Mehta, and
  Gr{\"u}nwald]{de2019safe}
de~Heide, R., Kirichenko, A., Mehta, N., and Gr{\"u}nwald, P.
\newblock Safe-{B}ayesian generalized linear regression.
\newblock \emph{arXiv preprint arXiv:1910.09227}, 2019.

\bibitem[Dinh et~al.(2017)Dinh, Pascanu, Bengio, and Bengio]{dinh2017sharp}
Dinh, L., Pascanu, R., Bengio, S., and Bengio, Y.
\newblock Sharp minima can generalize for deep nets.
\newblock \emph{arXiv preprint arXiv:1703.04933}, 2017.

\bibitem[Draxler et~al.(2018)Draxler, Veschgini, Salmhofer, and
  Hamprecht]{draxler2018essentially}
Draxler, F., Veschgini, K., Salmhofer, M., and Hamprecht, F.~A.
\newblock Essentially no barriers in neural network energy landscape.
\newblock \emph{arXiv preprint arXiv:1803.00885}, 2018.

\bibitem[Dziugaite \& Roy(2017)Dziugaite and Roy]{dziugaite2017computing}
Dziugaite, G.~K. and Roy, D.~M.
\newblock Computing nonvacuous generalization bounds for deep (stochastic)
  neural networks with many more parameters than training data.
\newblock \emph{arXiv preprint arXiv:1703.11008}, 2017.

\bibitem[Fort \& Jastrzebski(2019)Fort and Jastrzebski]{fort2019large}
Fort, S. and Jastrzebski, S.
\newblock Large scale structure of neural network loss landscapes.
\newblock In \emph{Advances in Neural Information Processing Systems}, pp.\
  6706--6714, 2019.

\bibitem[Fort et~al.(2019)Fort, Hu, and Lakshminarayanan]{fort2019deep}
Fort, S., Hu, H., and Lakshminarayanan, B.
\newblock Deep ensembles: A loss landscape perspective.
\newblock \emph{arXiv preprint arXiv:1912.02757}, 2019.

\bibitem[Gal \& Ghahramani(2016)Gal and Ghahramani]{gal2016dropout}
Gal, Y. and Ghahramani, Z.
\newblock Dropout as a {B}ayesian approximation: Representing model uncertainty
  in deep learning.
\newblock In \emph{international conference on machine learning}, pp.\
  1050--1059, 2016.

\bibitem[Gardner et~al.(2018)Gardner, Pleiss, Bindel, Weinberger, and
  Wilson]{gardner2018gpytorch}
Gardner, J.~R., Pleiss, G., Bindel, D., Weinberger, K.~Q., and Wilson, A.~G.
\newblock {GPyTorch}: Blackbox matrix-matrix gaussian process inference with
  gpu acceleration.
\newblock In \emph{Neural Information Processing Systems}, 2018.

\bibitem[Garipov et~al.(2018)Garipov, Izmailov, Podoprikhin, Vetrov, and
  Wilson]{garipov2018}
Garipov, T., Izmailov, P., Podoprikhin, D., Vetrov, D.~P., and Wilson, A.~G.
\newblock Loss surfaces, mode connectivity, and fast ensembling of {DNN}s.
\newblock In \emph{Neural Information Processing Systems}, 2018.

\bibitem[Gr{\"u}nwald(2012)]{grunwald2012safe}
Gr{\"u}nwald, P.
\newblock The safe {B}ayesian.
\newblock In \emph{International Conference on Algorithmic Learning Theory},
  pp.\  169--183. Springer, 2012.

\bibitem[Gr{\"u}nwald et~al.(2017)Gr{\"u}nwald, Van~Ommen,
  et~al.]{grunwald2017inconsistency}
Gr{\"u}nwald, P., Van~Ommen, T., et~al.
\newblock Inconsistency of {B}ayesian inference for misspecified linear models,
  and a proposal for repairing it.
\newblock \emph{{B}ayesian Analysis}, 12\penalty0 (4):\penalty0 1069--1103,
  2017.

\bibitem[Guedj(2019)]{guedj2019primer}
Guedj, B.
\newblock A primer on pac-bayesian learning.
\newblock \emph{arXiv preprint arXiv:1901.05353}, 2019.

\bibitem[Guo et~al.(2017)Guo, Pleiss, Sun, and Weinberger]{guo2017calibration}
Guo, C., Pleiss, G., Sun, Y., and Weinberger, K.~Q.
\newblock On calibration of modern neural networks.
\newblock In \emph{Proceedings of the 34th International Conference on Machine
  Learning-Volume 70}, pp.\  1321--1330. JMLR. org, 2017.

\bibitem[Gustafsson et~al.(2019)Gustafsson, Danelljan, and
  Sch{\"o}n]{gustafsson2019evaluating}
Gustafsson, F.~K., Danelljan, M., and Sch{\"o}n, T.~B.
\newblock Evaluating scalable bayesian deep learning methods for robust
  computer vision.
\newblock \emph{arXiv preprint arXiv:1906.01620}, 2019.

\bibitem[Hafner et~al.(2018)Hafner, Tran, Irpan, Lillicrap, and
  Davidson]{hafner2018reliable}
Hafner, D., Tran, D., Irpan, A., Lillicrap, T., and Davidson, J.
\newblock Reliable uncertainty estimates in deep neural networks using noise
  contrastive priors.
\newblock \emph{arXiv preprint arXiv:1807.09289}, 2018.

\bibitem[He et~al.(2016)He, Zhang, Ren, and Sun]{he2016deep}
He, K., Zhang, X., Ren, S., and Sun, J.
\newblock Deep residual learning for image recognition.
\newblock In \emph{Proceedings of the IEEE conference on computer vision and
  pattern recognition}, pp.\  770--778, 2016.

\bibitem[Hendrycks \& Dietterich(2019)Hendrycks and
  Dietterich]{hendrycks2019benchmarking}
Hendrycks, D. and Dietterich, T.
\newblock Benchmarking neural network robustness to common corruptions and
  perturbations.
\newblock \emph{arXiv preprint arXiv:1903.12261}, 2019.

\bibitem[Hinton \& Van~Camp(1993)Hinton and Van~Camp]{hinton1993keeping}
Hinton, G.~E. and Van~Camp, D.
\newblock Keeping the neural networks simple by minimizing the description
  length of the weights.
\newblock In \emph{Proceedings of the sixth annual conference on Computational
  learning theory}, pp.\  5--13, 1993.

\bibitem[Hochreiter \& Schmidhuber(1997)Hochreiter and
  Schmidhuber]{hochreiter1997flat}
Hochreiter, S. and Schmidhuber, J.
\newblock Flat minima.
\newblock \emph{Neural Computation}, 9\penalty0 (1):\penalty0 1--42, 1997.

\bibitem[Huang et~al.(2019)Huang, Emam, Goldblum, Fowl, Terry, Huang, and
  Goldstein]{huang2019understanding}
Huang, W.~R., Emam, Z., Goldblum, M., Fowl, L., Terry, J.~K., Huang, F., and
  Goldstein, T.
\newblock Understanding generalization through visualizations.
\newblock \emph{arXiv preprint arXiv:1906.03291}, 2019.

\bibitem[Izmailov et~al.(2018)Izmailov, Podoprikhin, Garipov, Vetrov, and
  Wilson]{izmailov2018}
Izmailov, P., Podoprikhin, D., Garipov, T., Vetrov, D., and Wilson, A.~G.
\newblock Averaging weights leads to wider optima and better generalization.
\newblock In \emph{Uncertainty in Artificial Intelligence (UAI)}, 2018.

\bibitem[Izmailov et~al.(2019)Izmailov, Maddox, Kirichenko, Garipov, Vetrov,
  and Wilson]{izmailov2019subspace}
Izmailov, P., Maddox, W.~J., Kirichenko, P., Garipov, T., Vetrov, D., and
  Wilson, A.~G.
\newblock Subspace inference for {B}ayesian deep learning.
\newblock \emph{Uncertainty in Artificial Intelligence}, 2019.

\bibitem[Izmailov et~al.(2021)Izmailov, Vikram, Hoffman, and
  Wilson]{izmailov2021bayesian}
Izmailov, P., Vikram, S., Hoffman, M.~D., and Wilson, A.~G.
\newblock What are {B}ayesian neural network posteriors really like?
\newblock In \emph{International Conference on Machine Learning}, 2021.

\bibitem[Jacot et~al.(2018)Jacot, Gabriel, and Hongler]{jacot2018neural}
Jacot, A., Gabriel, F., and Hongler, C.
\newblock Neural tangent kernel: Convergence and generalization in neural
  networks.
\newblock In \emph{Advances in neural information processing systems}, pp.\
  8571--8580, 2018.

\bibitem[Jiang et~al.(2019)Jiang, Neyshabur, Mobahi, Krishnan, and
  Bengio]{jiang2019fantastic}
Jiang, Y., Neyshabur, B., Mobahi, H., Krishnan, D., and Bengio, S.
\newblock Fantastic generalization measures and where to find them.
\newblock \emph{arXiv preprint arXiv:1912.02178}, 2019.

\bibitem[Kapoor et~al.(2022)Kapoor, Maddox, Izmailov, and
  Wilson]{kapoor2022aleatoric}
Kapoor, S., Maddox, W.~J., Izmailov, P., and Wilson, A.~G.
\newblock On uncertainty, tempering, and data augmentation in bayesian
  classification.
\newblock \emph{arXiv preprint}, 2022.

\bibitem[Kass \& Raftery(1995)Kass and Raftery]{bayesfactors1995}
Kass, R.~E. and Raftery, A.~E.
\newblock Bayes factors.
\newblock \emph{Journal of the American Statistical Association}, 90\penalty0
  (430):\penalty0 773--795, 1995.

\bibitem[Kendall \& Gal(2017)Kendall and Gal]{kendall2017uncertainties}
Kendall, A. and Gal, Y.
\newblock What uncertainties do we need in {B}ayesian deep learning for
  computer vision?
\newblock In \emph{Advances in neural information processing systems}, pp.\
  5574--5584, 2017.

\bibitem[Keskar et~al.(2016)Keskar, Mudigere, Nocedal, Smelyanskiy, and
  Tang]{keskar2016large}
Keskar, N.~S., Mudigere, D., Nocedal, J., Smelyanskiy, M., and Tang, P. T.~P.
\newblock On large-batch training for deep learning: Generalization gap and
  sharp minima.
\newblock \emph{arXiv preprint arXiv:1609.04836}, 2016.

\bibitem[Khan et~al.(2018)Khan, Nielsen, Tangkaratt, Lin, Gal, and
  Srivastava]{khan2018fast}
Khan, M.~E., Nielsen, D., Tangkaratt, V., Lin, W., Gal, Y., and Srivastava, A.
\newblock Fast and scalable {B}ayesian deep learning by weight-perturbation in
  adam.
\newblock \emph{arXiv preprint arXiv:1806.04854}, 2018.

\bibitem[Krizhevsky et~al.(2014)Krizhevsky, Nair, and
  Hinton]{krizhevsky2014cifar}
Krizhevsky, A., Nair, V., and Hinton, G.
\newblock {The CIFAR-10 dataset}.
\newblock 2014.
\newblock \url{http://www. cs. toronto. edu/kriz/cifar.html}.

\bibitem[Lakshminarayanan et~al.(2017)Lakshminarayanan, Pritzel, and
  Blundell]{lakshminarayanan2017simple}
Lakshminarayanan, B., Pritzel, A., and Blundell, C.
\newblock Simple and scalable predictive uncertainty estimation using deep
  ensembles.
\newblock In \emph{Advances in Neural Information Processing Systems}, pp.\
  6402--6413, 2017.

\bibitem[Langford \& Caruana(2002)Langford and Caruana]{langford2002not}
Langford, J. and Caruana, R.
\newblock (not) bounding the true error.
\newblock In \emph{Advances in Neural Information Processing Systems}, pp.\
  809--816, 2002.

\bibitem[LeCun et~al.(1998)LeCun, Bottou, Bengio, and
  Haffner]{lecun1998gradient}
LeCun, Y., Bottou, L., Bengio, Y., and Haffner, P.
\newblock Gradient-based learning applied to document recognition.
\newblock \emph{Proceedings of the IEEE}, 86\penalty0 (11):\penalty0
  2278--2324, 1998.

\bibitem[Louizos et~al.(2019)Louizos, Shi, Schutte, and
  Welling]{louizos2019functional}
Louizos, C., Shi, X., Schutte, K., and Welling, M.
\newblock The functional neural process.
\newblock In \emph{Advances in Neural Information Processing Systems}, 2019.

\bibitem[MacKay(1992)]{mackay1992bayesian}
MacKay, D.~J.
\newblock \emph{{B}ayesian methods for adaptive models}.
\newblock PhD thesis, California Institute of Technology, 1992.

\bibitem[MacKay(1995)]{mackay1995probable}
MacKay, D.~J.
\newblock Probable networks and plausible predictions?a review of practical
  {B}ayesian methods for supervised neural networks.
\newblock \emph{Network: computation in neural systems}, 6\penalty0
  (3):\penalty0 469--505, 1995.

\bibitem[MacKay(1998)]{mackay98}
MacKay, D.~J.
\newblock Introduction to {G}aussian processes.
\newblock In Bishop, C.~M. (ed.), \emph{Neural Networks and Machine Learning},
  chapter~11, pp.\  133--165. Springer-Verlag, 1998.

\bibitem[MacKay(2003)]{mackay2003information}
MacKay, D.~J.
\newblock \emph{Information theory, inference and learning algorithms}.
\newblock Cambridge university press, 2003.

\bibitem[Maddox et~al.(2019)Maddox, Izmailov, Garipov, Vetrov, and
  Wilson]{maddoxfast2019}
Maddox, W.~J., Izmailov, P., Garipov, T., Vetrov, D.~P., and Wilson, A.~G.
\newblock A simple baseline for {B}ayesian uncertainty in deep learning.
\newblock In \emph{Advances in Neural Information Processing Systems}, 2019.

\bibitem[Masegosa(2019)]{masegosa2019learning}
Masegosa, A.~R.
\newblock Learning under model misspecification: Applications to variational
  and ensemble methods, 2019.

\bibitem[McAllester(1999)]{mcallester1999pac}
McAllester, D.~A.
\newblock Pac-bayesian model averaging.
\newblock In \emph{Proceedings of the twelfth annual conference on
  Computational learning theory}, pp.\  164--170, 1999.

\bibitem[Minka(2001{\natexlab{a}})]{minka01}
Minka, T.
\newblock Expectation propagation for approximate {B}ayesian inference.
\newblock In \emph{Uncertainty in Artificial Intelligence}, volume~17, pp.\
  362--369, 2001{\natexlab{a}}.

\bibitem[Minka(2000)]{minka2000bayesian}
Minka, T.~P.
\newblock {B}ayesian model averaging is not model combination.
\newblock 2000.

\bibitem[Minka(2001{\natexlab{b}})]{minka2001automatic}
Minka, T.~P.
\newblock Automatic choice of dimensionality for pca.
\newblock In \emph{Advances in neural information processing systems}, pp.\
  598--604, 2001{\natexlab{b}}.

\bibitem[Mohri \& Rostamizadeh(2009)Mohri and
  Rostamizadeh]{mohri2009rademacher}
Mohri, M. and Rostamizadeh, A.
\newblock Rademacher complexity bounds for non-iid processes.
\newblock In \emph{Advances in Neural Information Processing Systems}, pp.\
  1097--1104, 2009.

\bibitem[Nakkiran et~al.(2019)Nakkiran, Kaplun, Bansal, Yang, Barak, and
  Sutskever]{nakkiran2019deep}
Nakkiran, P., Kaplun, G., Bansal, Y., Yang, T., Barak, B., and Sutskever, I.
\newblock Deep double descent: Where bigger models and more data hurt.
\newblock \emph{arXiv preprint arXiv:1912.02292}, 2019.

\bibitem[Nakkiran et~al.(2020)Nakkiran, Venkat, Kakade, and
  Ma]{nakkiran2020optimal}
Nakkiran, P., Venkat, P., Kakade, S., and Ma, T.
\newblock Optimal regularization can mitigate double descent.
\newblock \emph{arXiv preprint arXiv:2003.01897}, 2020.

\bibitem[Nalisnick(2018)]{nalisnick18}
Nalisnick, E.
\newblock \emph{On priors for {B}ayesian neural networks}.
\newblock PhD thesis, University of California, Irvine, 2018.

\bibitem[Neal(1996)]{neal1996}
Neal, R.
\newblock \emph{{{B}ayesian Learning for Neural Networks}}.
\newblock Springer Verlag, 1996.
\newblock ISBN 0387947248.

\bibitem[Neyshabur et~al.(2017)Neyshabur, Bhojanapalli, McAllester, and
  Srebro]{neyshabur2017exploring}
Neyshabur, B., Bhojanapalli, S., McAllester, D., and Srebro, N.
\newblock Exploring generalization in deep learning.
\newblock In \emph{Advances in Neural Information Processing Systems}, pp.\
  5947--5956, 2017.

\bibitem[Neyshabur et~al.(2018)Neyshabur, Bhojanapalli, and
  Srebro]{neyshabur2018a}
Neyshabur, B., Bhojanapalli, S., and Srebro, N.
\newblock A {PAC}-bayesian approach to spectrally-normalized margin bounds for
  neural networks.
\newblock In \emph{International Conference on Learning Representations}, 2018.
\newblock URL \url{https://openreview.net/forum?id=Skz_WfbCZ}.

\bibitem[Opper et~al.(1990)Opper, Kinzel, Kleinz, and Nehl]{opper1990ability}
Opper, M., Kinzel, W., Kleinz, J., and Nehl, R.
\newblock On the ability of the optimal perceptron to generalise.
\newblock \emph{Journal of Physics A: Mathematical and General}, 23\penalty0
  (11):\penalty0 L581, 1990.

\bibitem[Ovadia et~al.(2019)Ovadia, Fertig, Ren, Nado, Sculley, Nowozin,
  Dillon, Lakshminarayanan, and Snoek]{ovadia2019can}
Ovadia, Y., Fertig, E., Ren, J., Nado, Z., Sculley, D., Nowozin, S., Dillon,
  J.~V., Lakshminarayanan, B., and Snoek, J.
\newblock Can you trust your model's uncertainty? evaluating predictive
  uncertainty under dataset shift.
\newblock \emph{arXiv preprint arXiv:1906.02530}, 2019.

\bibitem[Pearce et~al.(2018)Pearce, Zaki, Brintrup, Anastassacos, and
  Neely]{pearce2018uncertainty}
Pearce, T., Zaki, M., Brintrup, A., Anastassacos, N., and Neely, A.
\newblock Uncertainty in neural networks: Bayesian ensembling.
\newblock \emph{arXiv preprint arXiv:1810.05546}, 2018.

\bibitem[Rasmussen \& Ghahramani(2001)Rasmussen and Ghahramani]{rasmussen01}
Rasmussen, C.~E. and Ghahramani, Z.
\newblock Occam's razor.
\newblock In \emph{Neural Information Processing Systems (NIPS)}, 2001.

\bibitem[Rasmussen \& Williams(2006)Rasmussen and Williams]{rasmussen06}
Rasmussen, C.~E. and Williams, C. K.~I.
\newblock \emph{{G}aussian processes for Machine Learning}.
\newblock The {MIT} {P}ress, 2006.

\bibitem[Ritter et~al.(2018)Ritter, Botev, and Barber]{ritter2018scalable}
Ritter, H., Botev, A., and Barber, D.
\newblock A scalable {L}aplace approximation for neural networks.
\newblock In \emph{International Conference on Learning Representations
  (ICLR)}, 2018.

\bibitem[Rue et~al.(2009)Rue, Martino, and Chopin]{rue2009approximate}
Rue, H., Martino, S., and Chopin, N.
\newblock Approximate {B}ayesian inference for latent gaussian models by using
  integrated nested laplace approximations.
\newblock \emph{Journal of the royal statistical society: Series b (statistical
  methodology)}, 71\penalty0 (2):\penalty0 319--392, 2009.

\bibitem[Simonyan \& Zisserman(2014)Simonyan and Zisserman]{simonyan2014very}
Simonyan, K. and Zisserman, A.
\newblock Very deep convolutional networks for large-scale image recognition.
\newblock \emph{arXiv preprint arXiv:1409.1556}, 2014.

\bibitem[Smith \& Le(2018)Smith and Le]{smith2017bayesian}
Smith, S.~L. and Le, Q.~V.
\newblock A {B}ayesian perspective on generalization and stochastic gradient
  descent.
\newblock In \emph{International Conference on Learning Representations}, 2018.

\bibitem[Sun et~al.(2019)Sun, Zhang, Shi, and Grosse]{sun2019functional}
Sun, S., Zhang, G., Shi, J., and Grosse, R.
\newblock Functional variational {B}ayesian neural networks.
\newblock \emph{arXiv preprint arXiv:1903.05779}, 2019.

\bibitem[Ulyanov et~al.(2018)Ulyanov, Vedaldi, and Lempitsky]{ulyanov2018deep}
Ulyanov, D., Vedaldi, A., and Lempitsky, V.
\newblock Deep image prior.
\newblock In \emph{Proceedings of the IEEE Conference on Computer Vision and
  Pattern Recognition}, pp.\  9446--9454, 2018.

\bibitem[Vapnik(1998)]{vapnik1998adaptive}
Vapnik, V.~N.
\newblock Adaptive and learning systems for signal processing communications,
  and control.
\newblock \emph{Statistical learning theory}, 1998.

\bibitem[Virtanen et~al.(2020)Virtanen, Gommers, Oliphant, Haberland, Reddy,
  Cournapeau, Burovski, Peterson, Weckesser, Bright, et~al.]{virtanen2020scipy}
Virtanen, P., Gommers, R., Oliphant, T.~E., Haberland, M., Reddy, T.,
  Cournapeau, D., Burovski, E., Peterson, P., Weckesser, W., Bright, J., et~al.
\newblock Scipy 1.0: fundamental algorithms for scientific computing in python.
\newblock \emph{Nature Methods}, pp.\  1--12, 2020.

\bibitem[Walker \& Hjort(2001)Walker and Hjort]{walker2001bayesian}
Walker, S. and Hjort, N.~L.
\newblock On {B}ayesian consistency.
\newblock \emph{Journal of the Royal Statistical Society: Series B (Statistical
  Methodology)}, 63\penalty0 (4):\penalty0 811--821, 2001.

\bibitem[Welling \& Teh(2011)Welling and Teh]{welling2011bayesian}
Welling, M. and Teh, Y.~W.
\newblock {B}ayesian learning via stochastic gradient langevin dynamics.
\newblock In \emph{Proceedings of the 28th international conference on machine
  learning (ICML-11)}, pp.\  681--688, 2011.

\bibitem[Wenzel et~al.(2020)Wenzel, Roth, Veeling, {\'S}wiatkowski, Tran,
  Mandt, Snoek, Salimans, Jenatton, and Nowozin]{wenzel2020good}
Wenzel, F., Roth, K., Veeling, B.~S., {\'S}wiatkowski, J., Tran, L., Mandt, S.,
  Snoek, J., Salimans, T., Jenatton, R., and Nowozin, S.
\newblock How good is the {B}ayes posterior in deep neural networks really?
\newblock \emph{arXiv preprint arXiv:2002.02405}, 2020.

\bibitem[Wilson(2020)]{wilson2020case}
Wilson, A.~G.
\newblock The case for {B}ayesian deep learning.
\newblock \emph{arXiv preprint arXiv:2001.10995}, 2020.

\bibitem[Xu et~al.(2019)Xu, Lu, MacEachern, and Xu]{xu2019calibrated}
Xu, X., Lu, P., MacEachern, S., and Xu, R.
\newblock Calibrated bayes factors for model comparison.
\newblock \emph{Journal of Statistical Computation and Simulation}, 89\penalty0
  (4):\penalty0 591--614, 2019.

\bibitem[Yang et~al.(2019)Yang, Lorch, Graule, Srinivasan, Suresh, Yao,
  Pradier, and Doshi-Velez]{yang2019output}
Yang, W., Lorch, L., Graule, M.~A., Srinivasan, S., Suresh, A., Yao, J.,
  Pradier, M.~F., and Doshi-Velez, F.
\newblock Output-constrained {B}ayesian neural networks.
\newblock \emph{arXiv preprint arXiv:1905.06287}, 2019.

\bibitem[Zhang et~al.(2016)Zhang, Bengio, Hardt, Recht, and
  Vinyals]{zhang2016understanding}
Zhang, C., Bengio, S., Hardt, M., Recht, B., and Vinyals, O.
\newblock Understanding deep learning requires rethinking generalization.
\newblock \emph{arXiv preprint arXiv:1611.03530}, 2016.

\bibitem[Zhang(2006)]{zhang2006information}
Zhang, T.
\newblock Information-theoretic upper and lower bounds for statistical
  estimation.
\newblock \emph{IEEE Transactions on Information Theory}, 52\penalty0
  (4):\penalty0 1307--1321, 2006.

\end{thebibliography}
\bibliographystyle{icml2020}

\newpage\null

\appendix

\renewcommand{\floatpagefraction}{.8}

\begin{figure*}
    \begin{tabular}{ccccc}
	\includegraphics[height=0.16\textwidth]{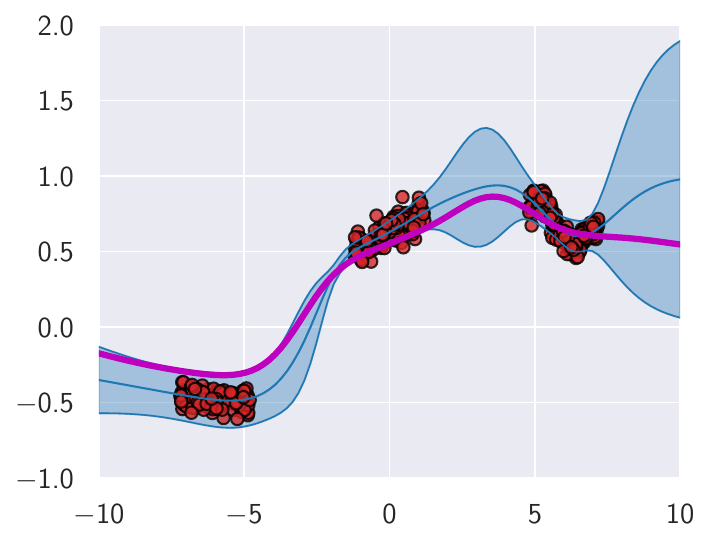} &
	\includegraphics[height=0.16\textwidth]{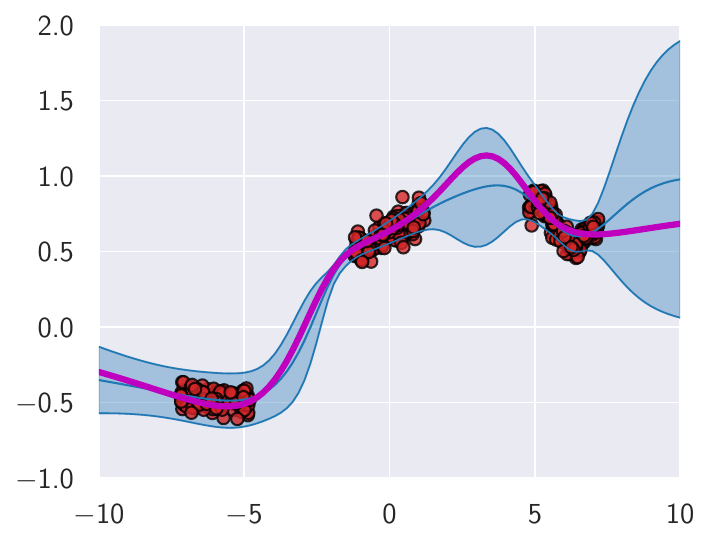} &
	\includegraphics[height=0.16\textwidth]{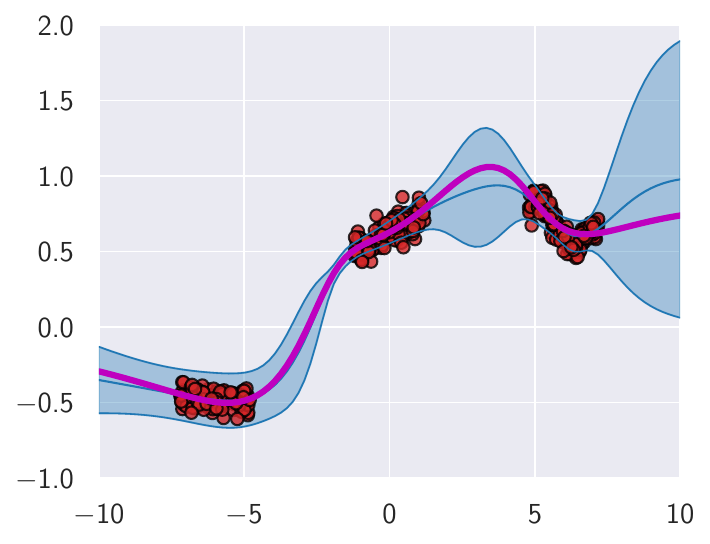} &
	\includegraphics[height=0.16\textwidth]{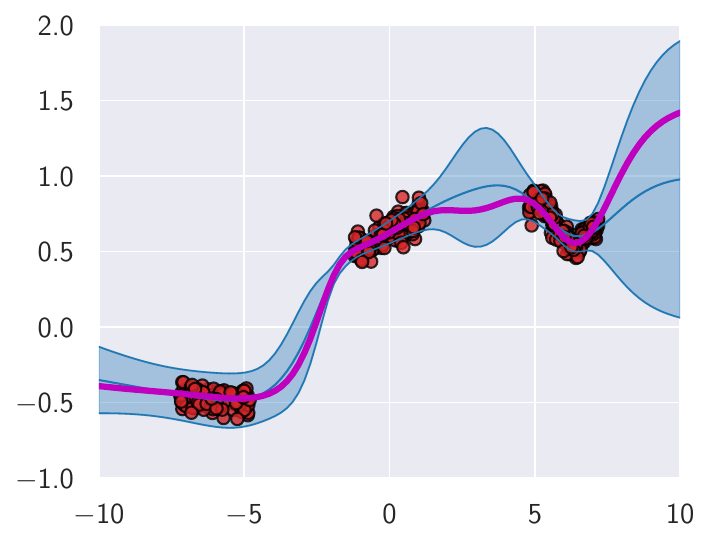} & \\
	\includegraphics[height=0.16\textwidth]{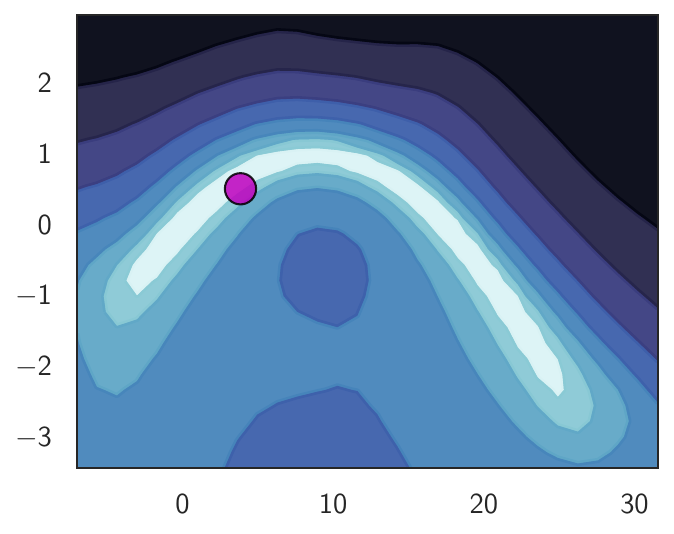} &
	\includegraphics[height=0.16\textwidth]{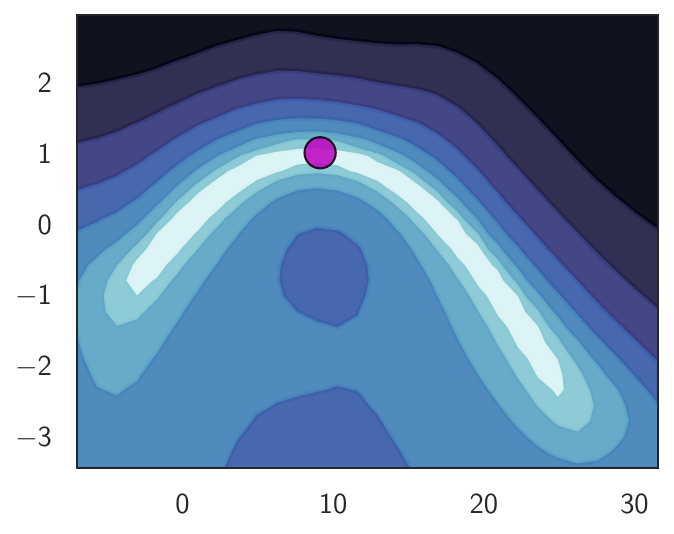} &
	\includegraphics[height=0.16\textwidth]{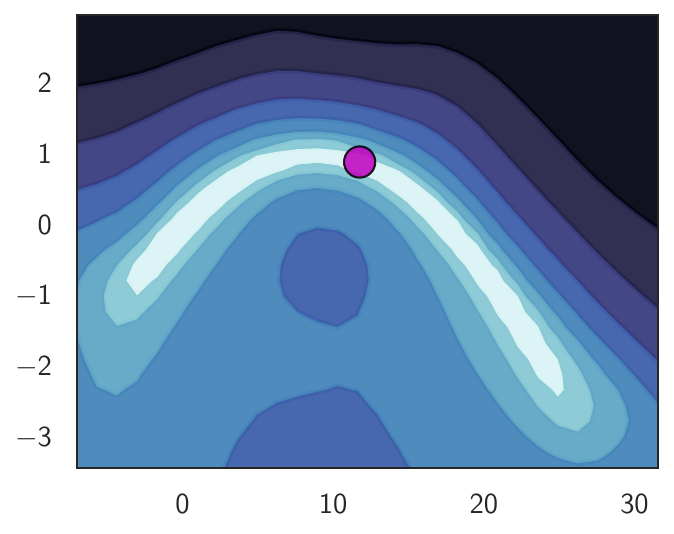} &
	\includegraphics[height=0.16\textwidth]{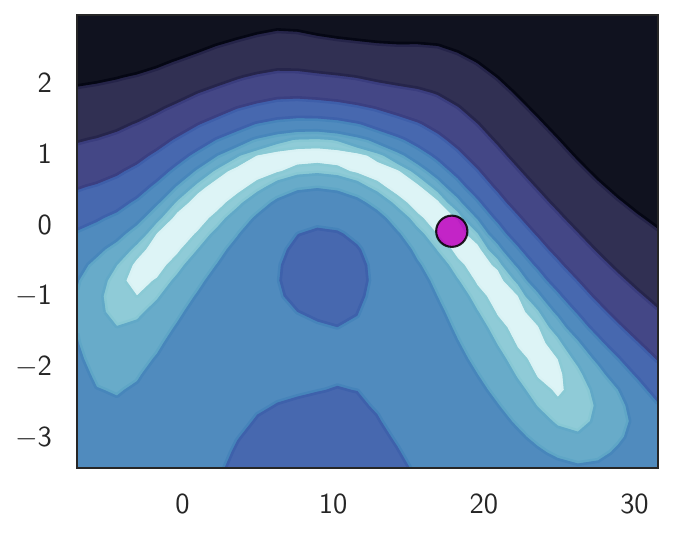} &
    \hspace{-0.5cm}
	\includegraphics[height=0.16\textwidth]{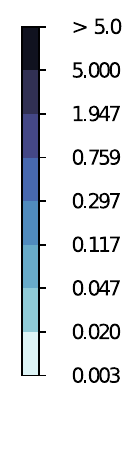}
    \end{tabular}
	\centering
	\caption{\textbf{Diversity of high performing functions.}
            \textbf{Bottom}: a contour plot
            of the posterior log-density in the subspace containing a pair of 
            independently trained modes (as with deep ensembles), and a path of 
            high posterior density connecting these modes.
            In each panel, the purple point represents a sample from the posterior in the 
            parameter subspace.
            \textbf{Top}: the predictive distribution constructed from samples
            in the subspace. 
            The shaded blue area shows the $3\sigma$-region of the predictive
            distribution at each of the input locations, and the blue line 
            shows the mean of the predictive distribution.
            In each panel, the purple line shows the predictive
            function corresponding to the sample shown in the corresponding
            bottom row panel.
			For the details of the experimental setup see Section 5.1 of 
			\citet{izmailov2019subspace}. 
        }
        \label{fig:loss_valleys}
\end{figure*}

\section*{Appendix Outline}

This appendix is organized as follows.
In Section \ref{sec:app_loss}, we visualize predictive functions corresponding
to weight samples within high posterior density valleys on a regression problem.
In Section \ref{sec: gps}, we provide background material on Gaussian processes.
In Section \ref{sec:appendix_ovadia}, we present further results comparing MultiSWAG
and MultiSWA to Deep Ensembles under data distribution shift on CIFAR-10.
In Section \ref{sec:app_details}, we provide the details of all experiments
presented in the paper.
In Section \ref{sec:app_prior_analysis}, we present analytic results on the
dependence of the prior distribution in function space on the variance of 
the prior over parameters.
In Section \ref{sec:app_correlations}, we study the prior correlations between
BNN logits on perturbed images.

\section{Loss Valleys}
\label{sec:app_loss}

We demonstrate that different points along the valleys of high posterior density
(low loss) connecting pairs of independently trained optima \citep{garipov2018,draxler2018essentially,fort2019large}
correspond to different predictive functions.
We use the regression example from \citet{izmailov2019subspace} and show the
results in Figure \ref{fig:loss_valleys}.

\section{Gaussian processes}
\label{sec: gps}

With a Bayesian neural network, a distribution over parameters $p(w)$ induces a distribution over functions
$p(f(x;w))$ when combined with the functional form of the network. Gaussian processes (GPs) are often used to 
instead \emph{directly} specify a distribution over functions. 

A Gaussian process is a distribution over functions, $f(x) \sim \mathcal{GP}(m,k)$, such that any collection of function values,
queried at any finite set of inputs $x_1, \ldots, x_n$, has a joint Gaussian distribution:
\begin{align}
f(x_1), \ldots, f(x_n) \sim \mathcal N(\mu, K) \,.
\label{eqn:gp}
\end{align}
The mean vector, $\mu_i = \mathbb{E}[f(x_i)] = m(x_i)$, and covariance matrix, $K_{ij} = \text{cov}(f(x_i),f(x_j)) = k(x_i, x_j)$, are determined by the \emph{mean function} $m$
and \emph{covariance function} (or \emph{kernel}) $k$ of the Gaussian process.

The popular RBF kernel has the form
\begin{align}
\label{eq:rbf}
k(x_i, x_j) = \exp \left(-\frac{1}{2\ell^2}\|x_i - x_j\|^2 \right) \,.
\end{align}
The \emph{length-scale} hyperparameter $\ell$ controls the extent of correlations between function values. If $\ell$ is large,
sample functions from a GP prior are simple and slowly varying with inputs $x$. 

Gaussian processes with RBF kernels (as well as many other standard kernels) assign positive density to any set of 
observations. Moreover, these models are \emph{universal approximators} \citep{rasmussen06}: as the number of observations increase, 
they are able to approximate any function to arbitrary precision.

Work on Gaussian processes in machine learning was triggered by the observation that Bayesian neural networks
become Gaussian processes with particular kernel functions as the number of hidden units approaches infinity \citep{neal1996}. 
This result resembles recent work on the neural tangent kernel \citep[e.g.,][]{jacot2018neural}.

\section{Deep Ensembles and MultiSWAG Under Distribution Shift}
\label{sec:appendix_ovadia}

In Figures \ref{fig:app_ovadia}, \ref{fig:app_ovadia_blur}, \ref{fig:app_ovadia_digital}, \ref{fig:app_ovadia_weather}
we show the negative log-likelihood for Deep Ensembles, MultiSWA and MultiSWAG 
using PreResNet-20 on CIFAR-10 with various corruptions as a function of the number of
independently trained models (SGD solutions, SWA solutions or SWAG models, respectively).
For MultiSWAG, we generate $20$ samples from each independent SWAG model.
Typically MultiSWA and MultiSWAG significantly outperform Deep Ensembles when a small number of
independent models is used, or when the level of corruption is high.

In Figure \ref{fig:app_ovadia_box}, following \citet{ovadia2019can}, we show the 
distribution of negative log likelihood, accuracy and expected calibration error 
as we vary the type of corruption.
We use a fixed training time budget: $10$ independently trained models for every
method. For MultiSWAG we ensemble $20$ samples from each of the $10$ SWAG
approximations.
MultiSWAG particularly achieves better NLL than the other two methods, and MultiSWA
outperforms Deep Ensembles; the difference is especially pronounced for higher
levels of corruption.
In terms of ECE, MultiSWAG again outperforms the other two methods for 
higher corruption intensities.

We note that \citet{ovadia2019can} found Deep Ensembles to be a very strong
baseline for prediction quality and calibration under distribution shift.
For this reason, we focus on Deep Ensembles in our comparisons.

\section{Details of Experiments}
\label{sec:app_details}

In this section we provide additional details of the experiments presented in the paper.

\subsection{Approximating the True Predictive Distribution}
\label{sec:appendix_hmc}

For the results presented in Figure \ref{fig:predictive} we used a network
with $3$ hidden layers of size $10$ each. The network takes two inputs: $x$
and $x^2$. We pass both $x$ and $x^2$ as input to ensure that the network can 
represent a broader class of functions. The network outputs a single number $y = f(x)$.

To generate data for the plots, we used a randomly-initialized neural network
of the same architecture described above. 
We sampled the weights from an isotropic Gaussian with variance $0.1^2$ and added
isotropic Gaussian noise with variance $0.1^2$ to the outputs:
\begin{align*}
    y = f(x;w) + \epsilon(x),
\end{align*}
with $w \sim \mathcal N(0, 0.1 ^2 \cdot I)$, 
$\epsilon(x) \sim \mathcal N(0, 0.1^2 \cdot I)$.
The training set consists of $120$ points shown in Figure \ref{fig:predictive}.

For estimating the ground truth we ran $200$ chains of Hamiltonian Monte Carlo (HMC)
using the \texttt{hamiltorch} package \citep{cobb2019introducing}. We initialized each chain
with a network pre-trained with SGD for $3000 $ steps, then 
ran Hamiltonian Monte Carlo (HMC) for $2000$ steps, producing $200$ samples.

For Deep Ensembles, we independently trained $50$ networks with SGD for $20000$
steps each.
We used minus posterior log-density as the training loss.
For SVI, we used a fully-factorized Gaussian approximation initialized at an SGD solution
trained for $20000$ steps.
For all inference methods we set prior variance to 
$10^2$ and noise variance to $0.02^2$.

\paragraph{Discrepancy with true BMA.}
For the results presented in panel (d) of Figure \ref{fig:predictive}
we computed Wasserstein distance between the predictive distribution
approximated with HMC and the predictive distribution for Deep Ensembles
and SVI. 
We used the one-dimensional Wasserstein distance function\footnote{\url{https://docs.scipy.org/doc/scipy/reference/generated/scipy.stats.wasserstein_distance.html}}
from the \textit{scipy} package \citep{virtanen2020scipy}.
We computed the Wasserstein distance between marginal distributions at 
each input location, and averaged the results over the input locations.
In the top sub-panels of panels (b), (c) of Figure \ref{fig:predictive} we additionally visualize
the marginal Wasserstein distance between the HMC predictive distribution and
Deep Ensembles and SVI predictive distrbutions respectively for each input location.

\subsection{Deep Ensembles and MultiSWAG}

We evaluate Deep Ensembles, MultiSWA and MultiSWAG under distribution shift in
Section \ref{sec: emp}.
Following \citet{ovadia2019can}, we use a PreResNet-20 network and the 
CIFAR-10 dataset with different types of corruptions introduced in \citet{hendrycks2019benchmarking}.
For training individual SGD, SWA and SWAG models we use the hyper-parameters 
used for PreResNet-164 in \citet{maddoxfast2019}.
For each SWAG model we sample $20$ networks and ensemble them.
So, Deep Ensembles, MultiSWA and MultiSWAG are all evaluated under the same training
budget; Deep Ensembles and MultiSWA also use the same test-time budget.

For producing the corrupted data we used the code\footnote{\url{https://github.com/hendrycks/robustness/blob/master/ImageNet-C/create_c/make_cifar_c.py}}
released by  \citet{hendrycks2019benchmarking}.
We had issues producing the data for the \textit{frost} corruption type, so 
we omit it in our evaluation, and include \textit{Gaussian blur} which
was not included in the evaluation of \citet{hendrycks2019benchmarking}.

\begin{figure*}[]
	\subfigure[$\alpha=0.02$]{
		\includegraphics[width=0.2\textwidth]{figs/priors/prior_correlations_2e2.pdf}
    }
	\subfigure[$\alpha=0.1$]{
		\includegraphics[width=0.2\textwidth]{figs/priors/prior_correlations_1e1.pdf}
    }
	\subfigure[$\alpha=1.$]{
		\includegraphics[width=0.2\textwidth]{figs/priors/prior_correlations_1e0.pdf}
    }
	\subfigure[]{
		\includegraphics[width=0.22\textwidth]{figs/priors/lenet_swag_prior_dep.pdf}
        \label{fig:lenet_priordep1}
    }
	\subfigure[$\alpha=0.02$]{
		\includegraphics[width=0.2\textwidth]{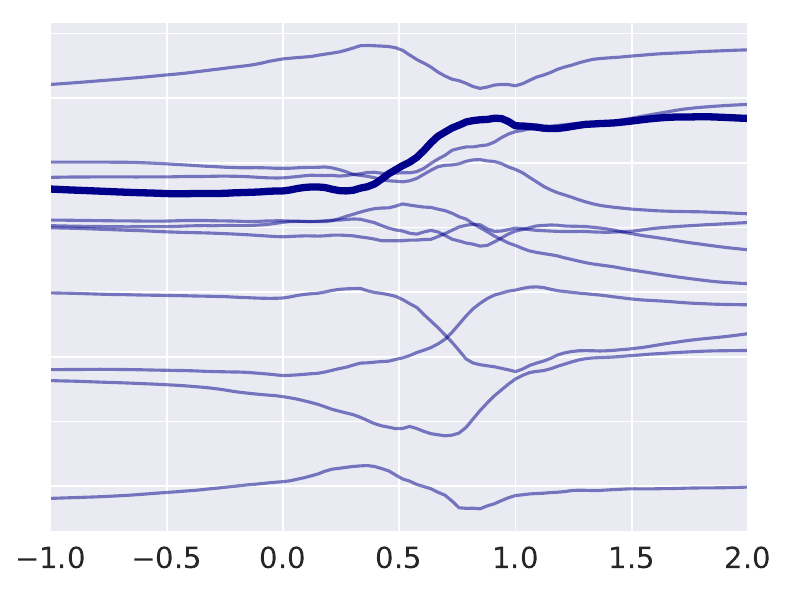}
    }
	\subfigure[$\alpha=0.1$]{
		\includegraphics[width=0.2\textwidth]{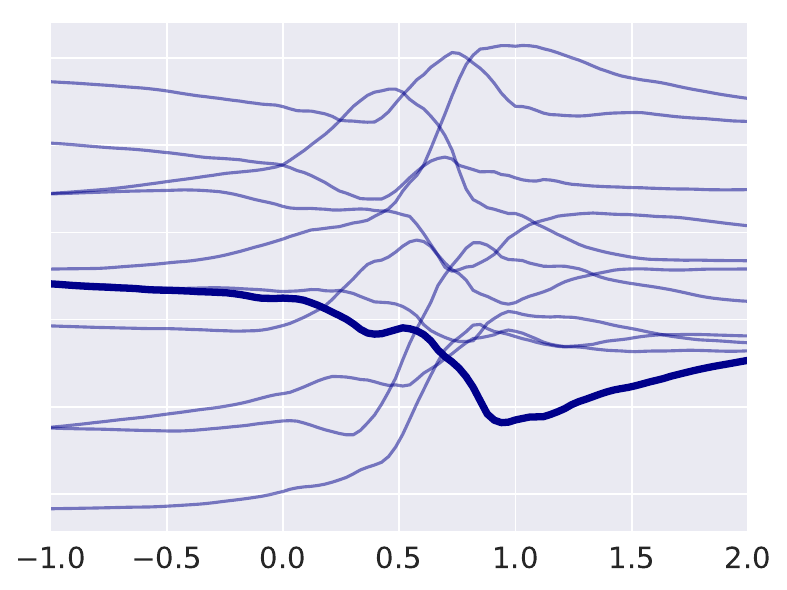}
    }
	\subfigure[$\alpha=1$]{
		\includegraphics[width=0.2\textwidth]{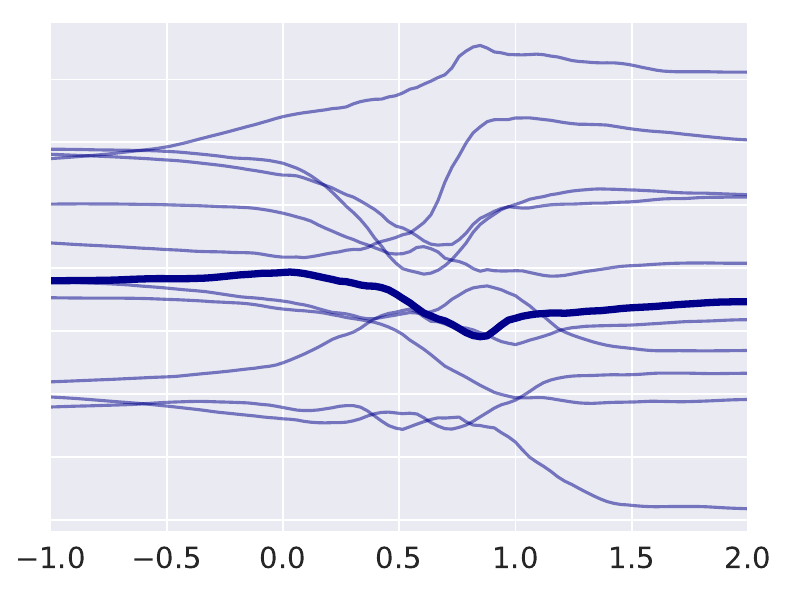}
    }
	\subfigure[]{
		\includegraphics[width=0.22\textwidth]{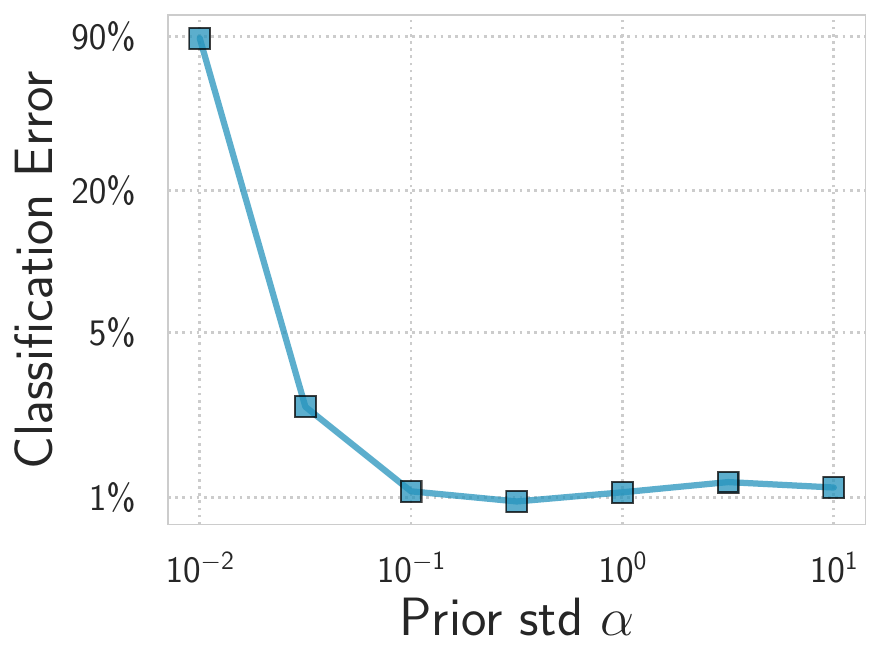}
        \label{fig:lenet_priordep2}
    }
	\centering
	\caption{
        \textbf{(a)}--\textbf{(c)}: Average pairwise prior correlations for pairs of objects
        in classes $\{0, 1, 2, 4, 7\}$ of MNIST induced by LeNet-5 for $p(f(x;w))$ 
        when $p(w) = \mathcal{N}(0,\alpha^2I)$. 
        Images in the same class have higher prior correlations than images from different classes,
        suggesting that $p(f(x;w))$ has desirable inductive biases. The correlations slightly
        decrease with increases in $\alpha$.
        Panels \textbf{(e)}--\textbf{(g)} show sample functions from LeNet-5 along
        the direction connecting a pair of MNIST images of $0$ and $1$ digits.
        The complexity of the samples increases with $\alpha$.
        \textbf{(d)}: NLL and \textbf{(h)} classification error of an ensemble of $20$ 
        SWAG samples on MNIST as a function of 
        $\alpha$ using a LeNet-5.
        The NLL is high for overly small $\alpha$ and near-optimal for larger values
        with an optimum near $\alpha = 0.3$.
        }
        \label{fig:prior_dep}
\end{figure*}

\subsection{Neural Network Priors}
\label{sec:app_prior_details}

In the main text we considered different properties of the prior distribution
over functions induced by a spherical Gaussian distribution over the weights,
with different variance scales.

\paragraph{Prior correlation diagrams.}
In panels (a)--(c) of Figure \ref{fig:prior_dep} we show pairwise correlations
of the logits for different pairs of datapoints. 
To make these plots we produce $S = 100$ samples of the weights $w_i$ of a LeNet-5 from the prior 
distribution $\mathcal N(0, \alpha^2 I)$ and compute the logits corresponding
to class $0$ for each data point and each weight sample.
We then compute the correlations for each pair $x$, $x'$ of data points
as follows:
\begin{align*}
    &\text{corr}_{\text{logit}}(x, x') = \\
    &\frac{\sum_{i=1}^S (f(x, w_i) - \bar f(x)) (f(x', w_i) - \bar f(x'))}
    {\sqrt{\sum_{i=1}^S (f(x, w_i) - \bar f(x))^2 \cdot \sum_{i=1}^S (f(x', w_i) - \bar f(x'))^2}},
\end{align*}
where $f(x, w)$ is the logit corresponding to class $0$ of the network with 
weights $w$ on the input $x$, and $\bar f(x)$ is the mean value of the logit
$\bar f(x) = \frac 1 S \sum_i f(x, w_i)$. 
For evaluation, we use $200$ random datapoints per class for classes 
$0, 1, 2, 4, 7$ (a total of $1000$ datapoints). 
We use this set of classes to ensure that the structure is clearly 
visible in the figure.
We combine the correlations into a diagram, additionally showing the
average correlation for each pair of classes.
We repeat the experiment for different values of $\alpha \in \{0.02, 0.1, 1\}$.
For a discussion of the results see Section \ref{sec:prior_corrs}.

\paragraph{Sample functions.}
In panels (e)--(g) of Figure \ref{fig:prior_dep} we visualize the functions
sampled from the LeNet-5 network along the direction connecting a pair of 
MNIST images.
In particular, we take a pair of images $x_0$ and $x_1$ of digits $0$ and $1$,
respectively, and construct the path $x(t) = t \cdot x_0 + (1-t) \cdot x_1$.
We then study the samples of the logits $z(t) = f(x(t) \cdot \|x_0\| / \|x(t)\|, w)$ 
along the path; here we adjusted the norm of the images along the path to be constant
as the values of the logits are sensitive to the norm of the inputs.
The complexity of the samples increases as we increase the variance of the prior
distribution over the weights. 
This increased complexity of sample functions explains why we might expect the prior
correlations for pairs of images to be lower when we increase the variance of 
the prior distribution over the weights.

\paragraph{Performance dependence on prior variance.}
In panels (d), (h) of Figure \ref{fig:prior_dep} we show the test negative log-likelihood
and accuracy of SWAG applied to LeNet-5 on MNIST. 
We train the model for $50$ epochs, constructing the rank-$20$ SWAG approximation
from the last $25$ epochs. We use an initial learning rate of $0.05$ and 
SWAG learning rate of $0.01$ with the learning rate schedule of \citet{maddoxfast2019}.
We use posterior log-density as the objective, and vary the prior variance 
$\alpha^2$. 
In panels (f), (g) of Figure \ref{fig:priorscale} we perform an analogous experiment using a PreResNet-20 and a VGG-16
on CIFAR-10, using the hyper-parameters reported in \citet{maddoxfast2019} 
(for PreResNet-20 we use the hyper-parameters used with PreResNet-164 in \citet{maddoxfast2019}).
Both on MNIST and CIFAR-10 we observe that the performance is poor for 
overly small values of $\alpha$, close to optimal for intermediate values, 
and still reasonable for larger values of~$\alpha$.
For further discussion of the results see Section \ref{sec:prior_dep}.

\paragraph{Predictions from prior samples.}
Following \citet{wenzel2020good} we study the predictive distributions of
prior samples using PreResNet-20 on CIFAR-10. 
In Figure \ref{fig:priorscale} we show the sample predictive functions
averaged over datapoints for different scales $\alpha$ of the prior distribution. 
We also show the predictive distribution for each $\alpha$, which is the
average of the sample predictive distributions over $200$ samples of weights. 
In Figure \ref{fig: priordata} we show how the predictive distribution changes
as we vary the number of observed data for prior scale $\alpha = \sqrt{10}$.
We see that the marginal predictive distribution for all considered values of $\alpha$ is reasonable --- roughly 
uniform across classes, when averaged across the dataset.
For the latter experiment we used stochastic gradient Langevin dynamics (SGLD) \citep{welling2011bayesian} with 
a cosine lerning rate schedule. 
For each sample we restart SGLD, and we only use the sample obtained at the last
iteration. 
We discuss the results in Section~\ref{sec: prioreffect}.

\paragraph{Prior correlations with corrupted images.}
In Section \ref{sec:app_correlations} and Figure \ref{fig:corr_corr}
we study the decay of the prior correlations between logits on an original image  and
a perturbed image as we increase the intensity of perturbations.
For the BNN we use PreResNet-20 architecture with the standard Gaussian
prior $\mathcal N(0, I)$.
For the linear model, the correlations are not affected by the prior 
variance $\alpha^2$: 
\begin{align*}
    cov(w^T x, w^T y) &= \mathbb E (w^T x \cdot w^T y) = \\
    &\mathbb E x^T w w^T y =
    x^T \mathbb E w w^T y = 
    \alpha^2 x^T y, 
\end{align*}
and hence
\begin{align*}
    corr(w^T x,& w^T y) =  \\
    &\frac{cov(w^T x, w^T y)}{\sqrt{cov(w^T y, w^T y) \cdot cov(w^T x, w^T x)}}
    = x^T y.
\end{align*}
We use the $\mathcal N(0, I)$ prior for the weights of the linear model. 
Finally, we also evaluate the correlations associated with an RBF kernel
(see Equation \eqref{eq:rbf}).
To set the lengthscale $\ell$ of the kernel we evaluate the pairwise correlations
for the PreResnet-20 and RBF kernel on the $100$ uncorrupted CIFAR-10 images
that were used for the experiment, and ensure that the average correlations 
match. The resulting value of $\ell$ is $10000$, and the average correlation for
the RBF kernel and PreResNet was $\approx 0.9$; for the linear model the average
correlation was $\approx 0.82$.
For the perturbations we used the same set of corruptions introduced in
\citet{hendrycks2019benchmarking} as in the experiments in Section \ref{sec: emp}
with the addition of a random translation: for a random translation of intensity
$i$ we pad the image with $2 \cdot i$ zeros on each side and crop the image randomly to
$32 \times 32$.

\subsection{Rethinking Generalization}

In Section \ref{sec: rethinking},
we experiment with Bayesian neural networks and Gaussian processes on CIFAR-10
with noisy labels, inspired by the results in \citet{zhang2016understanding} that suggest
we need to re-think generalization to understand deep learning.

Following  \citet{zhang2016understanding}, we train 
PreResNet-20 on CIFAR-10 with different fractions of random labels.
To ensure that the networks fits the train data, we turn off weight decay
and data augmentation, and use a lower initial learning rate of $0.01$. 
Otherwise, we follow the hyper-parameters that were used with
PreResNet-164 in \citet{maddoxfast2019}.
We use diagonal Laplace approximation to compute an estimate
of marginal likelihood for each level of label corruption. 
Following \citet{ritter2018scalable} we use the diagonal of the Fisher 
information matrix rather than the Hessian.

We perform a similar experiment with a Gaussian process
with RBF kernel on the binary classification problem for two classes of CIFAR-10.
We use variational inference to fit the model, and we use the variational
evidence lower bound to approximate the marginal likelihood.
We use variational inference to overcome the non-Gaussian likelihood
and not for scalability reasons; i.e., we are not using inducing inputs.
We use the \texttt{GPyTorch} package \citep{gardner2018gpytorch} to train the models.
We use an RBF kernel with default initialization from \texttt{GPyTorch} and divide the
inputs by $5000$ to get an appropriate input scale.
We train the model on a binary classification problem between classes $0$ and $1$.

For the $10$-class GP classification experiment we train $10$ one-vs-all models
that classify between a given class and the rest of the data. 
To reduce computation, in training we subsample the data not belonging to the
given class to $10k$ datapoints, so each model is trained on a total of $15k$
datapoints.
We then combine the 10 models into a single multi-class model: an observation
is attributed to the class that corresponds to the one-vs-all model with
the highest confidence.
We use the same hyper-parameters as in the binary classification experiments.

\begin{figure*}[h]
	\centering
	\subfigure[10\% Corrupted (Err)]{
		\includegraphics[height=0.17\textwidth]{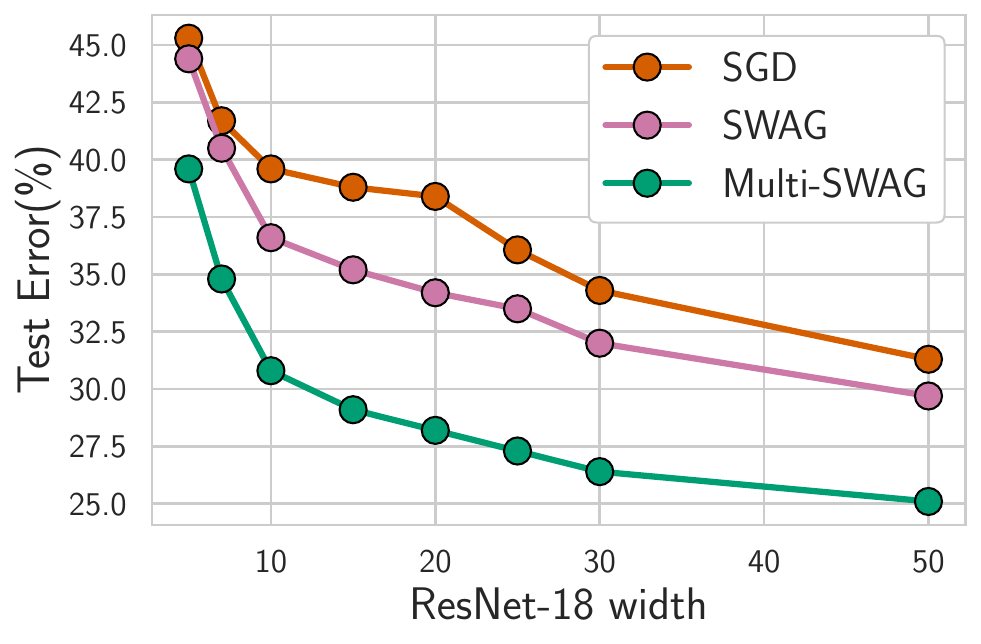}
		\label{fig:app_dd_corr_acc}
    }
    \hspace{-0.1cm}
	\subfigure[10\% Corrupted (NLL)]{
		\includegraphics[height=0.17\textwidth]{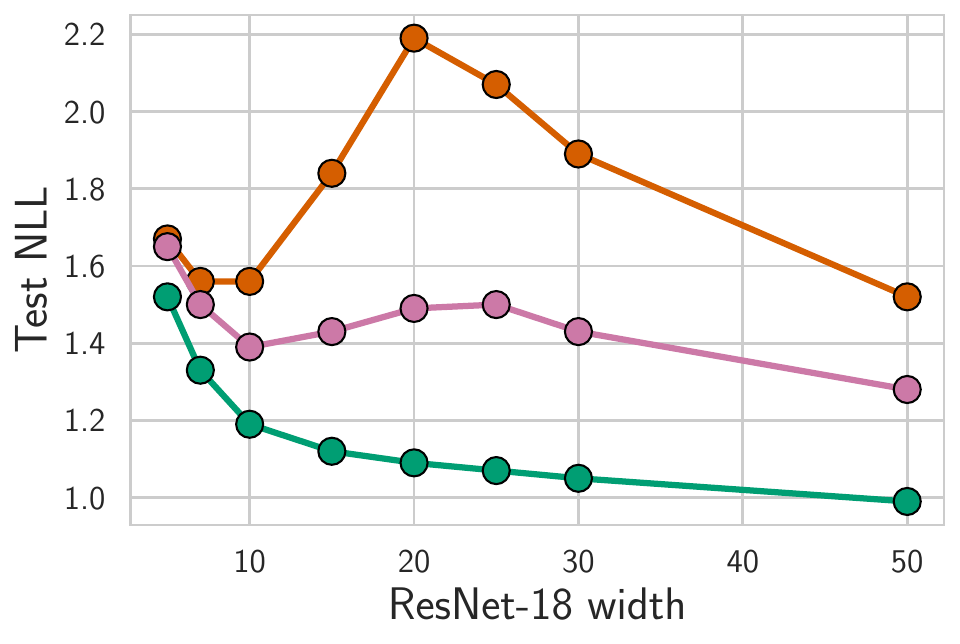}
		\label{fig:app_dd_corr_nll}
    }
    \hspace{-0.1cm}
	\subfigure[20\% Corrupted (\#~Models)]{
		\includegraphics[height=0.17\textwidth]{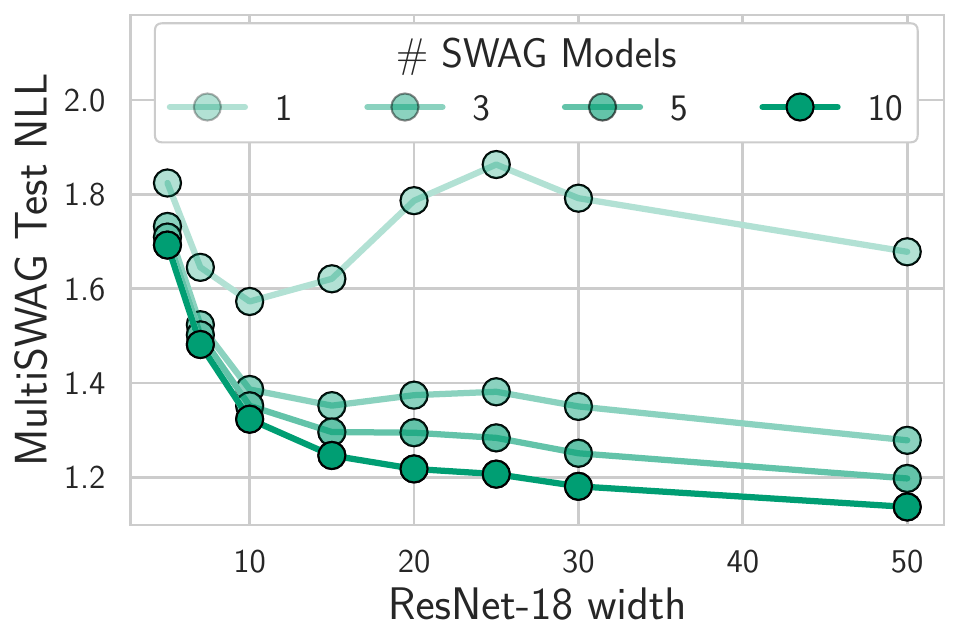}
        \label{fig:app_dd_multiswag}
    }
	\caption{\textbf{Double Descent.} 
    \textbf{(a)}: Test error and \textbf{(b)}: NLL loss for ResNet-18 with varying width on CIFAR-100
    for SGD, SWAG and MultiSWAG when 10\% of the labels are randomly reshuffled.
    MultiSWAG alleviates double descent both on the original labels and under label
    noise, both in accuracy and NLL. 
    \textbf{(e)}: Test NLLs for MultiSWAG with varying number of independent models
	under $20\%$ label corruption; 
    NLL monotonically decreases with increased number of independent models,
    alleviating double descent. 
    }
    \label{fig:app_double_descent}
\end{figure*}

\subsection{Double Descent}

In Section \ref{sec:double_descent} we evaluate SGD, SWAG and MultiSWAG for 
models of varying width. Following \citet{nakkiran2019deep} we use 
ResNet-18 on CIFAR-100; we consider original labels, $10\%$ and $20\%$ label corruption.
For networks of every width we reuse the hyper-paramerers used for PreResNet-164 in \citet{maddoxfast2019}.
For original labels and $10\%$ label corruption we use $5$ independently trained
SWAG models with MultiSWAG, and for $20\%$ label corruption we use $10$ models;
for $20\%$ label corruption we also show performance varying the number of independent
models in Figures \ref{fig:dd_multiswag} and \ref{fig:app_dd_multiswag}.
Both for SWAG and MultiSWAG we use an ensemble of $20$ sampled models from each 
of the SWAG solutions; for example, for MultiSWAG with $10$ independent 
SWAG solutions, we use an ensemble of $200$ networks.

\section{Analysis of Prior Variance Effect}
\label{sec:app_prior_analysis}

In this section we provide simple analytic results for the effect of prior variance
in ReLU networks. 
A related derivation is presented in the Appendix Section A.8 of \citet{garipov2018} 
about connecting paths from symmetries in parametrization.

We will consider a multilayer network $f(x, w)$ of the form
\begin{align*}
    f(x, \{W_i,& b_i\}_{i=1}^n) = \\
    &W_n (\ldots \phi(W_2 \phi(W_1 x + b_1) + b_2)) + b_n,
\end{align*}
where $\phi$ is the ReLU (or in fact any positively-homogeneous activation function),
$W_i$ are weight matrices and $b_i$  are bias vectors. 
In particular, $f$ can be a regular CNN with ReLU activations up to
the logits (with softmax activation removed).

Now, suppose we have a prior distribution of the form
\begin{align*}
    W_i \sim \mathcal N(0, \alpha_i^2 I),~~
    b_i \sim \mathcal N(0, \beta_i^2 I),
\end{align*}
where the identity matrices $I$ are implicitly assumed to be of appropriate shapes,
so each weight matrix and bias vector has a spherical Gaussian distribution.
We can reparameterize this distribution as
\begin{align*}
    W_i & = \alpha_i \mE_i, ~~ \mE_i \sim \mathcal N(0, I),\\
    b_i & = \beta_i \epsilon_i, ~~ \epsilon_i \sim \mathcal N(0, I).
\end{align*}
We can then express the predictions of the network on the input $x$ for
weights sampled from the prior as the random variable
\begin{align}
\begin{split}
    &f(x, \{\alpha_i, \beta_i\}_{i=1}^n) = \\
    &\alpha_n \cdot \mE_n (\ldots \phi(\alpha_1 \mE_1 x + \beta_1 \cdot \epsilon_1)) + \beta_n \cdot \epsilon_n.
\end{split}
\label{eq:priordist}
\end{align}

Through Equation \eqref{eq:priordist}, we can observe some simple properties of
the dependence between the prior scales $\alpha_i$, $\beta_i$ and the induced
function-space prior.

\begin{proposition}
\label{prop:nobias}
Suppose the network has no bias vectors, i.e. $\beta_1 = \ldots = \beta_n = 0$. 
Then the scales $\alpha_i$ of the prior distribution over the weights only affect
the output scale of the network.
\end{proposition}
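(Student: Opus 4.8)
The plan is to exploit the positive homogeneity of $\phi$ to pull every scale $\alpha_i$ out of the nested composition in Equation~\eqref{eq:priordist} and collect them into a single multiplicative constant sitting in front of a network whose weight priors all have unit scale. First I would recall that $\phi$ (ReLU, or any degree-one positively-homogeneous activation) satisfies $\phi(cz) = c\,\phi(z)$ for every scalar $c \ge 0$, coordinate-wise and hence for vector arguments. Since each $\alpha_i$ is a prior standard deviation we have $\alpha_i \ge 0$, so the identity applies directly; if some $\alpha_i = 0$ then $f \equiv 0$ and the claim is trivial, so I would assume $\alpha_i > 0$.

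Next I would argue by induction on depth, setting $\beta_1 = \dots = \beta_n = 0$ in Equation~\eqref{eq:priordist}. The first pre-activation is $\alpha_1 \mE_1 x$, and $\phi(\alpha_1 \mE_1 x) = \alpha_1 \phi(\mE_1 x)$. Feeding this into the next layer gives $\phi(\alpha_2 \mE_2\,\alpha_1 \phi(\mE_1 x)) = \alpha_1\alpha_2\,\phi(\mE_2 \phi(\mE_1 x))$, and so on: at the $k$-th layer the running product $\prod_{i=1}^k \alpha_i$ factors out because a nonnegative scalar commutes with the linear map $\mE_k$ and passes through $\phi$. Applying the final linear layer $\alpha_n \mE_n(\cdot)$ yields
\begin{align*}
f\big(x, \{\alpha_i, 0\}_{i=1}^n\big) = \Big(\prod_{i=1}^{n}\alpha_i\Big)\, f\big(x, \{1, 0\}_{i=1}^n\big),
\end{align*}
where $f(x,\{1,0\}_{i=1}^n)$ is the same architecture with all weight matrices sampled as $\mE_i \sim \mathcal N(0,I)$.

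Finally I would note that the $\mE_i$ on the right-hand side carry no dependence on the $\alpha_i$, so as a random function $f(x,\{\alpha_i,0\})$ equals the \emph{fixed} random function $g := f(x,\{1,0\})$ multiplied by the deterministic constant $\prod_i \alpha_i$; hence the induced prior over functions is, for every choice of $\{\alpha_i\}$, the law of $\big(\prod_i \alpha_i\big)\cdot g$ for one and the same $g$, i.e. the $\alpha_i$ affect only the output scale. I do not expect a genuine obstacle here: the only point requiring care is the bookkeeping of which scalars may be moved past which maps (nonnegative scalars through $\phi$, arbitrary scalars through the linear layers), and this is exactly the kind of homogeneity-induced degeneracy already flagged for layer-wise priors in the discussion following the proposition.
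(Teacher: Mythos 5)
Your proposal is correct and follows essentially the same route as the paper's own proof: using the positive homogeneity of ReLU to factor each $\alpha_i$ out of the nested composition in Equation~\eqref{eq:priordist}, yielding $f(x,\{\alpha_i,0\}) = \bigl(\prod_i \alpha_i\bigr) f(x,\{1,0\})$. The added care about the $\alpha_i = 0$ edge case and the explicit remark that the resulting law is that of a single fixed random function scaled by a deterministic constant are fine refinements but do not change the argument.
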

\begin{proof}
In the case when there are no bias vectors Equation \eqref{eq:priordist}
simplifies to 
\begin{align*}
    f(x, &\{\alpha_i, \beta_i=0\}_{i=1}^n) = \\
    &\alpha_n \cdot \mE_n (\ldots \phi(\alpha_1 \mE_1 x + \beta_1 \cdot \epsilon_1)) + \beta_n \cdot \epsilon_n =\\
    &\alpha_n \cdot \ldots \cdot \alpha_1 \cdot\mE_n (\ldots \phi(\mE_1 x)) = \\
    &\alpha_n \cdot \ldots \cdot \alpha_1 \cdot f(x, \{\alpha_i=1, \beta_i=0\}_{i=1}^n).
\end{align*}
In the derivation above we used positive homogeneity of ReLU: $\phi(\alpha z) = \alpha \phi(z)$ for any positive $\alpha$.
\end{proof}

In other words, to sample from the distribution over functions corresponding to 
a prior with variances $\{\alpha_i, \beta_i=0\}_{i=1}^n$, we can sample 
from the spherical Gaussian prior (without bias terms) $\{\alpha_i=1, \beta_i=0\}_{i=1}^n$
and then rescale the outputs of the network by the product of variances 
$\alpha_n \cdot \ldots \cdot \alpha_2 \cdot \alpha_1$.

We note that the result above is different from the results for \textit{sigmoid}
networks considered in \citet{mackay1995probable}, where varying the 
prior on the weights leads to changing the length-scale of the sample functions.
For ReLU networks without biases, increasing prior variance only increases
the output scale of the network and not the complexity of the samples.
If we apply the softmax activation on the outputs of the last layer of such
network, we will observe increasingly confident predictions as we increase
the prior variance. 
We observe this effect in Figure \ref{fig:priorscale} and discuss it in Section
\ref{sec: prioreffect}.

In case bias vectors are present, we can obtain a similar result using
a specific scaling of the prior variances with layer, as in the following
proposition.

\begin{proposition}
\label{prop:scaling}
Suppose the prior scales depend on the layer of the network as follows
for some $\gamma > 0$:
\begin{align*}
    \alpha_i  = \gamma,~~
    \beta_i = \gamma^i,
\end{align*}
for all layers $i = 1 \ldots n$. 
Then $\gamma$ only affects the scale of the predictive distribution at any
input $x$:
\begin{align*}
    f(x, &\{\alpha_i=\gamma, \beta_i=\gamma^i\}_{i=1}^n) = 
    \gamma^n \cdot f(x, \{\alpha_i=1, \beta_i=1\}_{i=1}^n).
\end{align*}
\end{proposition}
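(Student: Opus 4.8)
The plan is to prove the identity by induction on the layer index, using only the positive homogeneity $\phi(cz) = c\,\phi(z)$ of the ReLU (valid for $c > 0$), exactly in the spirit of the proof of Proposition~\ref{prop:nobias} but now carrying the bias terms along. Since Equation~\eqref{eq:priordist} writes $f(x,\{\alpha_i,\beta_i\})$ as a deterministic function of the fixed standard-normal variables $\mE_i$ and $\epsilon_i$, it suffices to establish the scaling relation pointwise for each realization of these variables; equality of the random variables, and hence of the induced predictive distributions at $x$, then follows at once.

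First I would set up notation for the hidden activations. Put $h_0 = \tilde h_0 = x$ and, for the prior with scales $\alpha_i = \gamma$, $\beta_i = \gamma^i$, define $h_k = \phi(\gamma\, \mE_k h_{k-1} + \gamma^k \epsilon_k)$ for $k = 1,\dots,n-1$; let $\tilde h_k = \phi(\mE_k \tilde h_{k-1} + \epsilon_k)$ be the corresponding activations for the unit-scale prior $\alpha_i = 1$, $\beta_i = 1$. The claim I would prove by induction is that $h_k = \gamma^k \tilde h_k$ for $k = 1,\dots,n-1$. The base case is $h_1 = \phi(\gamma\, \mE_1 x + \gamma \epsilon_1) = \gamma\,\phi(\mE_1 x + \epsilon_1) = \gamma\,\tilde h_1$, using $\beta_1 = \gamma$ and homogeneity. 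For the inductive step, assuming $h_{k-1} = \gamma^{k-1}\tilde h_{k-1}$,
\begin{align*}
h_k &= \phi\bigl(\gamma\, \mE_k \cdot \gamma^{k-1}\tilde h_{k-1} + \gamma^k \epsilon_k\bigr) \\
&= \phi\bigl(\gamma^k(\mE_k \tilde h_{k-1} + \epsilon_k)\bigr) = \gamma^k\,\phi(\mE_k \tilde h_{k-1} + \epsilon_k) = \gamma^k \tilde h_k,
\end{align*}
where the only real move is pulling the positive scalar $\gamma^k$ out through the ReLU.

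Finally I would apply the same manipulation to the linear output layer: $f(x,\{\alpha_i=\gamma,\beta_i=\gamma^i\}) = \gamma\, \mE_n h_{n-1} + \gamma^n \epsilon_n = \gamma\, \mE_n\cdot \gamma^{n-1}\tilde h_{n-1} + \gamma^n \epsilon_n = \gamma^n(\mE_n \tilde h_{n-1} + \epsilon_n) = \gamma^n f(x,\{\alpha_i=1,\beta_i=1\})$, which is the desired identity. There is essentially no hard step: the two points to be careful about are (i) that $\gamma > 0$, so that homogeneity of ReLU genuinely applies, and (ii) the bookkeeping — the choice $\beta_i = \gamma^i$ is precisely calibrated so that at layer $i$ the incoming signal $\gamma^{i-1}\tilde h_{i-1}$, multiplied by $\gamma$ from $\alpha_i$, matches the bias scale $\gamma^i$ before the nonlinearity. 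I would also remark that, as in Proposition~\ref{prop:nobias}, the argument goes through verbatim for any positively homogeneous activation.
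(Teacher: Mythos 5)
Your proof is correct and takes essentially the same approach as the paper: the paper's proof is a single telescoping computation that uses positive homogeneity of the ReLU to pull the factor $\gamma^k$ out through each layer, which is exactly the inductive step you spell out explicitly with the invariant $h_k = \gamma^k \tilde h_k$. Your version is just a more carefully bookkept rendering of the same argument.
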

\begin{proof}
The proof is analogous to the proof of Proposition \ref{prop:nobias}.
We can use the positive homogenety of ReLU activations to factor the prior
scales outside of the network:
\begin{align*}
    f(x, &\{\alpha_i=\gamma, \beta_i=\gamma^i\}_{i=1}^n) = \\
    &\gamma \cdot \mE_n ( \ldots \phi(\gamma \cdot \mE_1 x + \gamma \cdot \epsilon_1)) + \gamma^n \cdot \epsilon_n =\\
    &\gamma^n \cdot \big( \mE_n( \ldots \phi(\mE_1 x + \epsilon_1))) + \epsilon_n \big ) = \\
    &\gamma^n \cdot f(x, \{\alpha_i=1, \beta_i=1\}_{i=1}^n).
\end{align*}
\end{proof}

The analysis above can be applied to other simple scaling rules of the 
prior, e.g.
\begin{align}
\label{eq:priorscaling}
\begin{split}
    f(x, &\{\alpha_i=\gamma \hat \alpha_i, \beta_i=\gamma^i \hat \beta_i\}_{i=1}^n) = \\
    &\gamma^n \cdot f(x, \{\alpha_i=\hat \alpha_i, \beta_i= \hat \beta_i\}_{i=1}^n),
\end{split}
\end{align}
can be shown completely analogously to Proposition \ref{prop:scaling}.

More general types of scaling of the prior affect both the output scale of the
network and also the relative effect of prior and variance terms.
For example, by Equation \eqref{eq:priorscaling} we have
\begin{align*}
    f(x, &\{\alpha_i=\gamma, \beta_i=\gamma\}_{i=1}^n) = \\
    f(x, &\{\alpha_i=\gamma \cdot 1, \beta_i=\gamma^{i} \cdot \gamma^{1-i}\}_{i=1}^n) = \\
    &\gamma^n \cdot f(x, \{\alpha_i=1, \beta_i=\gamma^{1-i}\}_{i=1}^n).
\end{align*}

We note that the analysis does not cover residual connections and batch normalization, 
so it applies to LeNet-5 but cannot be directly applied to PreResNet-20 networks used in many of our
experiments.

\section{Prior Correlation Structure under Perturbations}
\label{sec:app_correlations}

In this section we explore the prior correlations between the logits on different 
pairs of datapoints induced by a spherical Gaussian prior on the weights of a PreResNet-20.
We sample a $100$ random images from CIFAR-10 ($10$ from each class)
and apply $17$ different perturbations introduced by \citet{hendrycks2019benchmarking}
at $5$ different levels of intensity. 
We then compute correlations between the logits $f(x, w)$ for the original image $x$
and $f(\tilde{x}, w)$ for the corrupted image $\tilde{x}$,
as we sample the weights of the network from the prior $w \sim \mathcal N(0, I)$.

In Figure \ref{fig:corr_corr} we show how the correlations decay with perturbation
intensity.
For reference we also show how the correlations decay for a linear model and for
an RBF kernel. 
For the RBF kernel we set the lengthscale so that the average correlations 
on the uncorrupted datapoints match those of a PreResNet-20.
Further experimental details can be found in Appendix \ref{sec:app_prior_details}.

For all types of corruptions except \textit{saturate}, \textit{snow}, \textit{fog}\
and \textit{brightness} the PreResNet logits decay slower compared to the RBF
kernel and linear model.
It appears that the prior samples are sensitive to corruptions that alter the brightness
or more generally the colours in the image.
For many types of corruptions (such as e.g. \textit{Gaussian Noise}) the prior 
correlations for PreResNet are close to $1$ for all levels of corruption.

Overall, these results indicate that the prior over \emph{functions} induced by a vague
prior over parameters $w$ in combination with a PreResNet has useful equivariance
properties: before seeing data, the model treats images of the same class as highly
correlated, even after an image has undergone significant perturbations representative
of perturbations we often see in the real world. These types of symmetries are a large 
part of what makes neural networks a powerful model class for high dimensional natural
signals.

\begin{figure*}[t]
	\centering
	\subfigure[Gaussian Noise]{
		\includegraphics[width=0.9\textwidth]{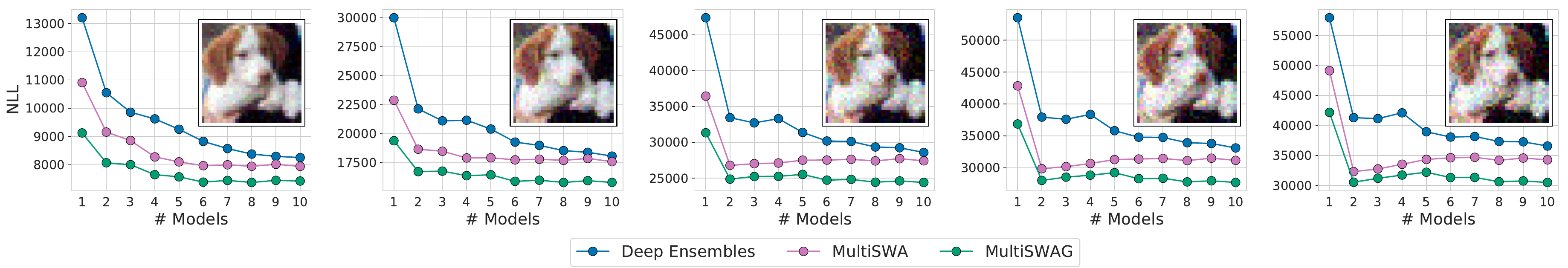}
	}
	\subfigure[Impulse Noise]{
		\includegraphics[width=0.9\textwidth]{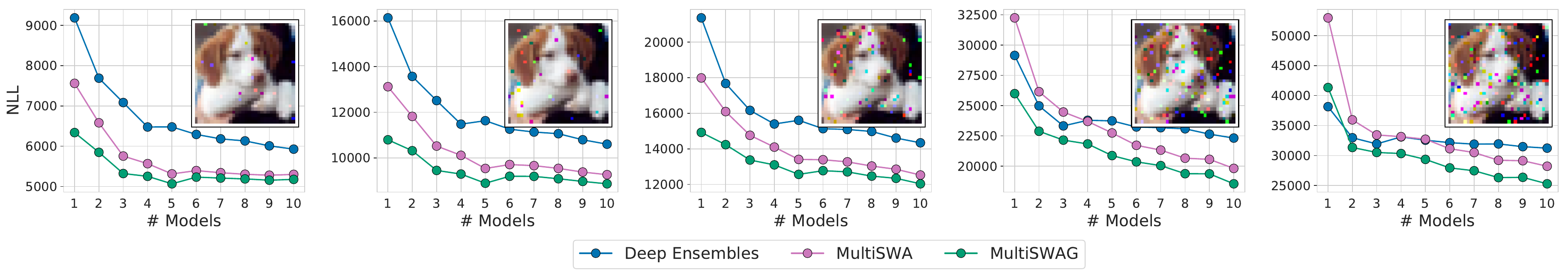}
	}
	\subfigure[Shot Noise]{
		\includegraphics[width=0.9\textwidth]{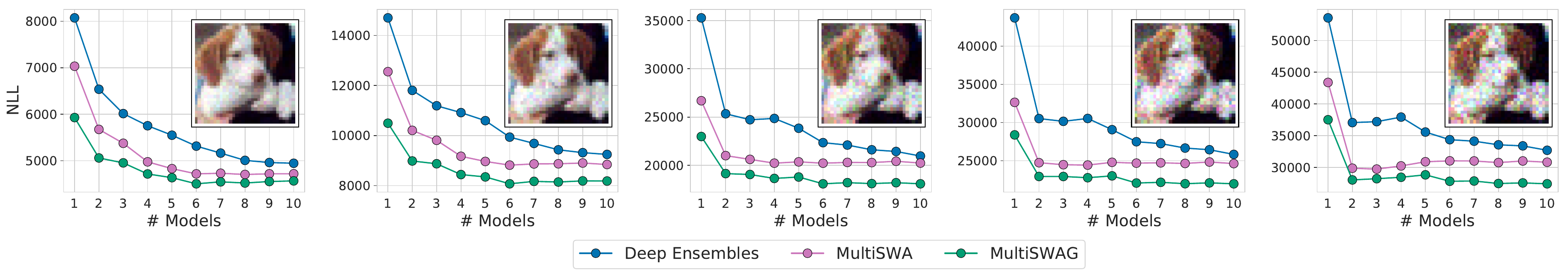}
	}
	\caption{
		\textbf{Noise Corruptions.} Negative log likelihood on CIFAR-10 with a PreResNet-20 for Deep Ensembles, MultiSWAG and MultiSWA as a 
		function of the number of independently trained models
		for different types of corruption and corruption intensity (increasing from left to right).
	}
    \label{fig:app_ovadia}
\end{figure*}
\begin{figure*}[t]
	\centering
	\subfigure[Defocus Blur]{
		\includegraphics[width=0.9\textwidth]{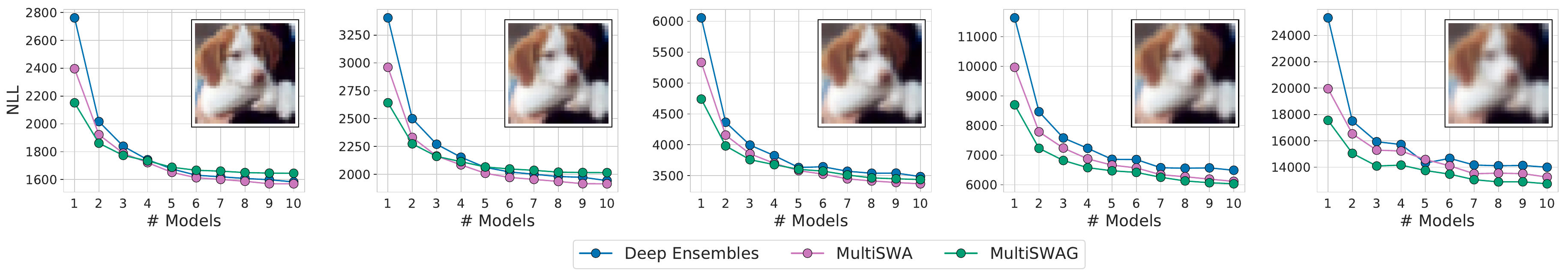}
    }	
	\subfigure[Glass Blur]{
		\includegraphics[width=0.9\textwidth]{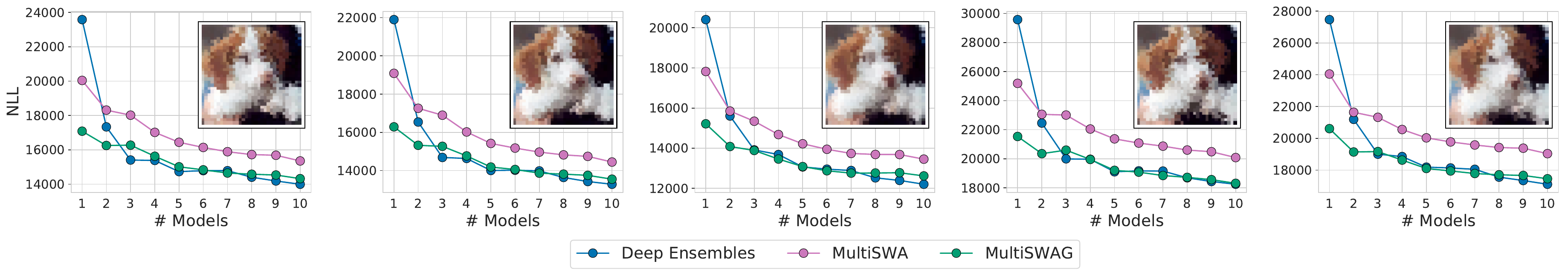}
    }	
	\subfigure[Motion Blur]{
		\includegraphics[width=0.9\textwidth]{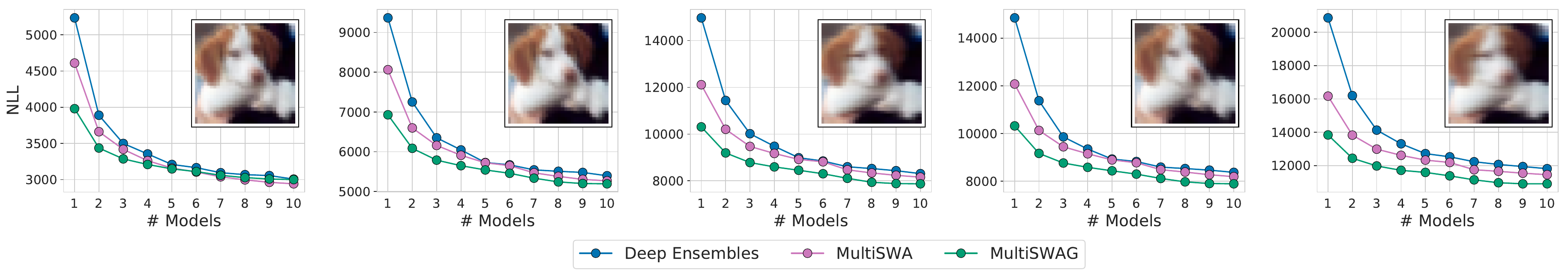}
    }	
	\subfigure[Zoom Blur]{
		\includegraphics[width=0.9\textwidth]{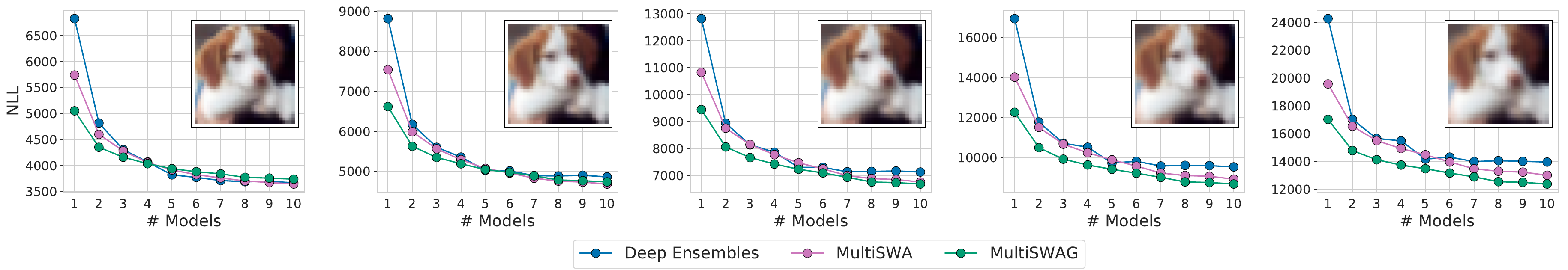}
    }	
	\subfigure[Gaussian Blur]{
		\includegraphics[width=0.9\textwidth]{figs/deep_ensembles/gaussian_blur.pdf}
	}
	\caption{
		\textbf{Blur Corruptions.} Negative log likelihood on CIFAR-10 with a PreResNet-20 for Deep Ensembles, MultiSWAG and MultiSWA as a 
		function of the number of independently trained models
		for different types of corruption and corruption intensity (increasing from left to right).
	}
    \label{fig:app_ovadia_blur}
\end{figure*}
\begin{figure*}[t]
	\centering
	\subfigure[Contrast]{
		\includegraphics[width=0.9\textwidth]{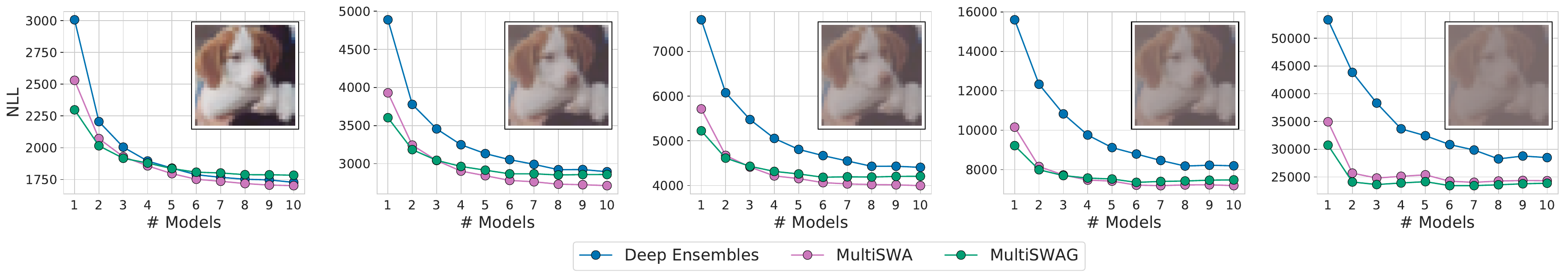}
    }	
	\subfigure[Saturate]{
		\includegraphics[width=0.9\textwidth]{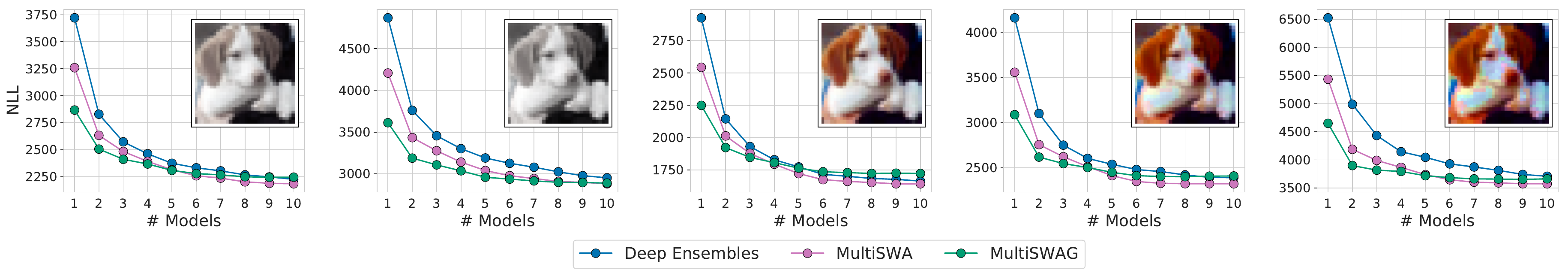}
    }	
	\subfigure[Elastic Transform]{
		\includegraphics[width=0.9\textwidth]{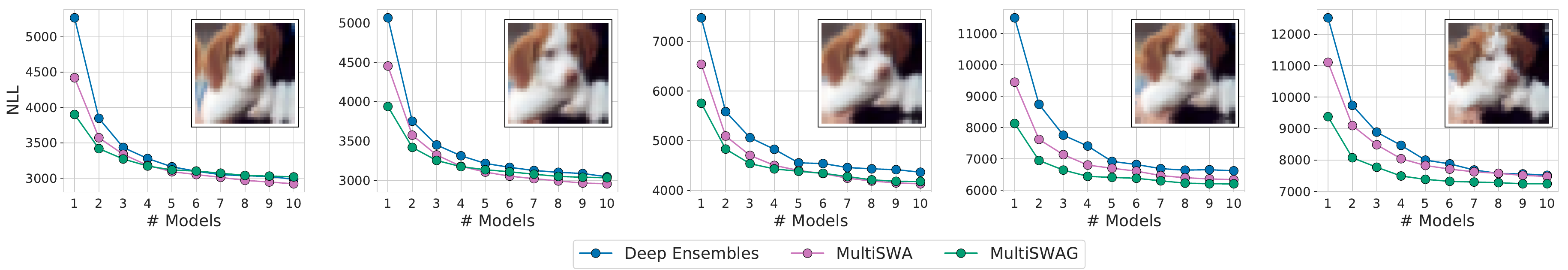}
    }	
	\subfigure[Pixelate]{
		\includegraphics[width=0.9\textwidth]{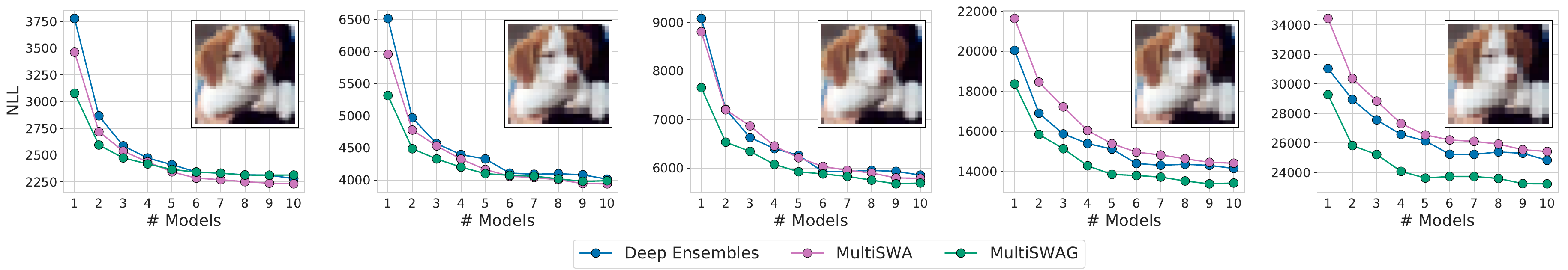}
    }	
	\subfigure[JPEG Compression]{
		\includegraphics[width=0.9\textwidth]{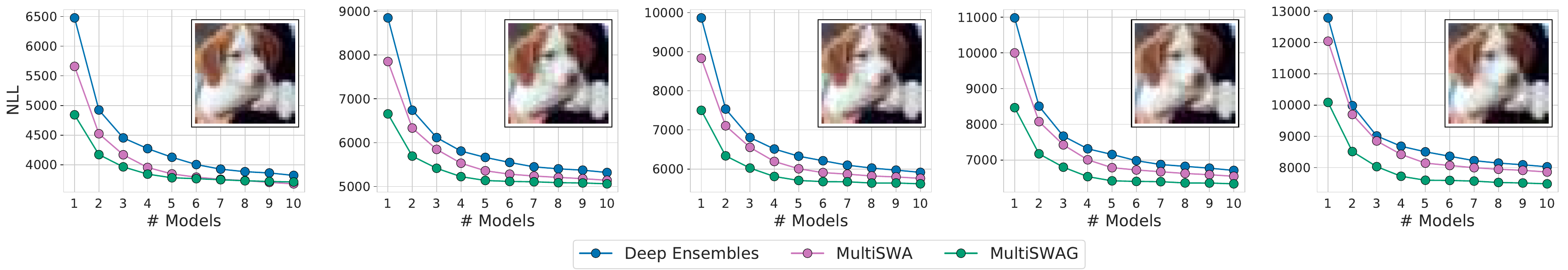}
    }	
	\caption{
		\textbf{Digital Corruptions.} Negative log likelihood on CIFAR-10 with a PreResNet-20 for Deep Ensembles, MultiSWAG and MultiSWA as a 
		function of the number of independently trained models
		for different types of corruption and corruption intensity (increasing from left to right).
	}
    \label{fig:app_ovadia_digital}
\end{figure*}
\begin{figure*}[t]
	\centering
	\subfigure[Snow]{
		\includegraphics[width=0.9\textwidth]{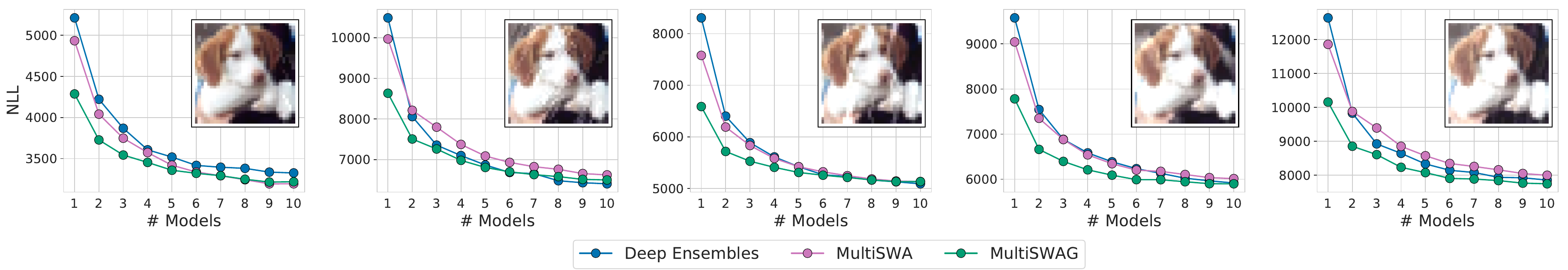}
    }	
	\subfigure[Fog]{
		\includegraphics[width=0.9\textwidth]{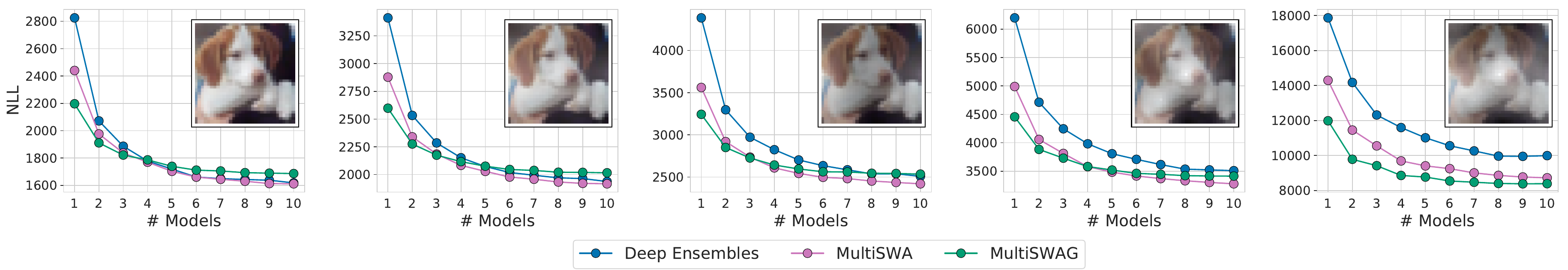}
    }	
	\subfigure[Brightness]{
		\includegraphics[width=0.9\textwidth]{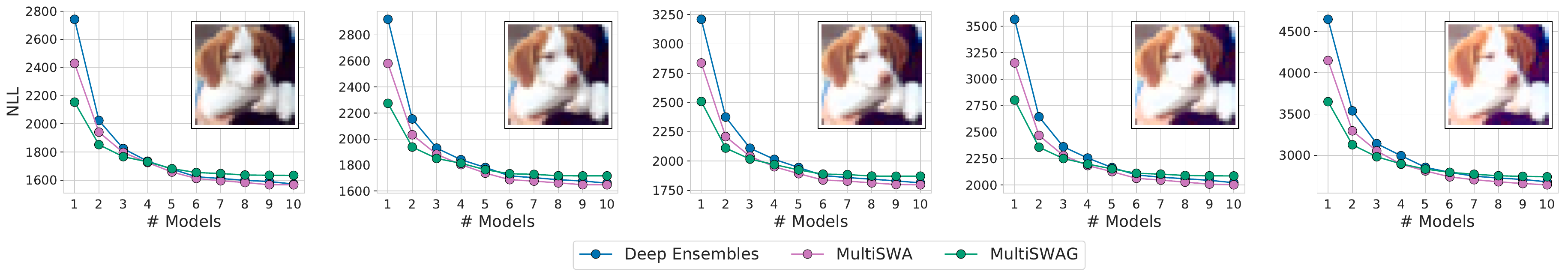}
    }	
	\caption{
		\textbf{Weather Corruptions.} Negative log likelihood on CIFAR-10 with a PreResNet-20 for Deep Ensembles, MultiSWAG and MultiSWA as a 
		function of the number of independently trained models
		for different types of corruption and corruption intensity (increasing from left to right).
	}
    \label{fig:app_ovadia_weather}
\end{figure*}

\begin{figure*}[t]
	\centering
	\subfigure{
		\includegraphics[height=0.13\textwidth]{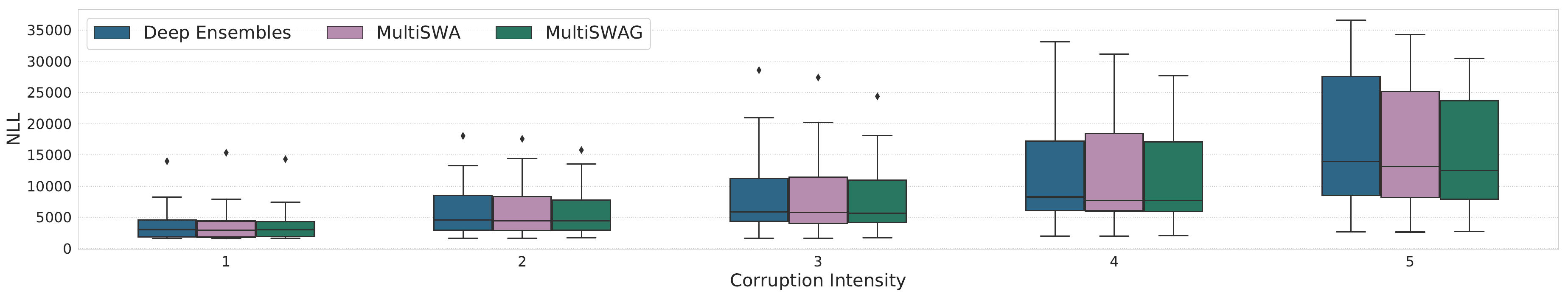}
    }	
	\subfigure{
		\includegraphics[height=0.13\textwidth]{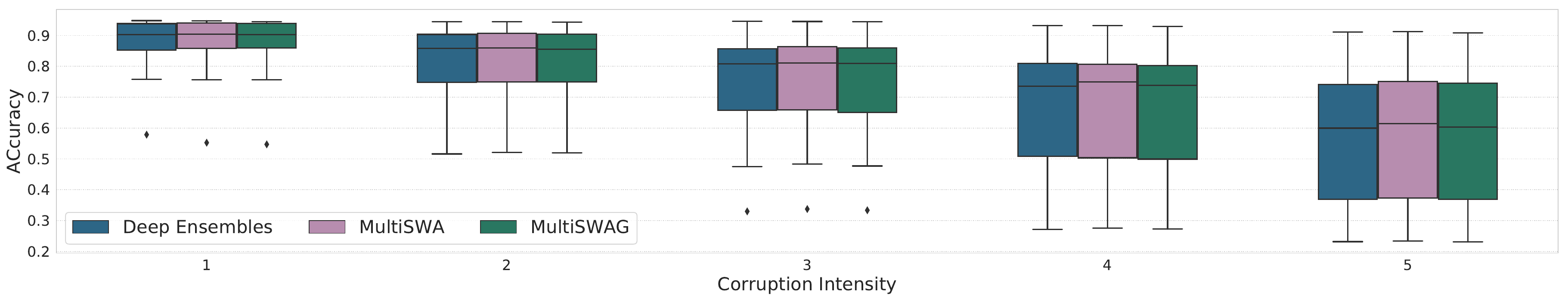}
    }	
	\subfigure{
		\includegraphics[height=0.13\textwidth]{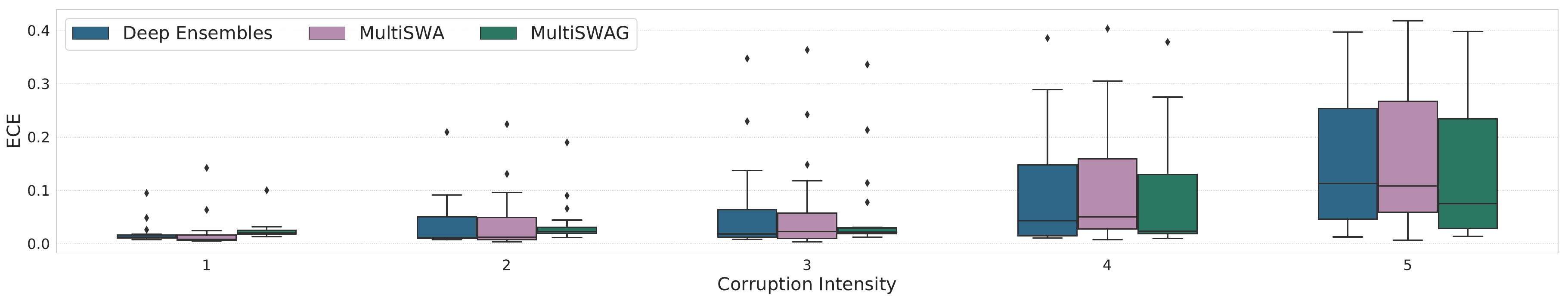}
    }	
	\caption{
		Negative log likelihood, accuracy and expected calibration error distribution 
        on CIFAR-10 with a PreResNet-20 for Deep Ensembles, MultiSWAG and MultiSWA as a 
		function of the corruption intensity. 
        Following \citet{ovadia2019can} we summarize the results for different 
        types of corruption with a boxplot.
        For each method, we use $10$ independently trained models, and for MultiSWAG
        we sample $20$ networks from each model.
        As in Figures 5, 11-14, there are substantial differences between these three methods,
        which are hard to see due to the vertical scale on this plot.
        MultiSWAG particularly outperforms 
        Deep Ensembles and MultiSWA in terms of NLL and ECE for higher corruption
        intensities.
	}
    \label{fig:app_ovadia_box}
\end{figure*}

\begin{figure*}[t]
    \def \panelwidth {0.16\textwidth}
	\centering
	\subfigure{
    \begin{tabular}{cccc}
		\includegraphics[width=\panelwidth]{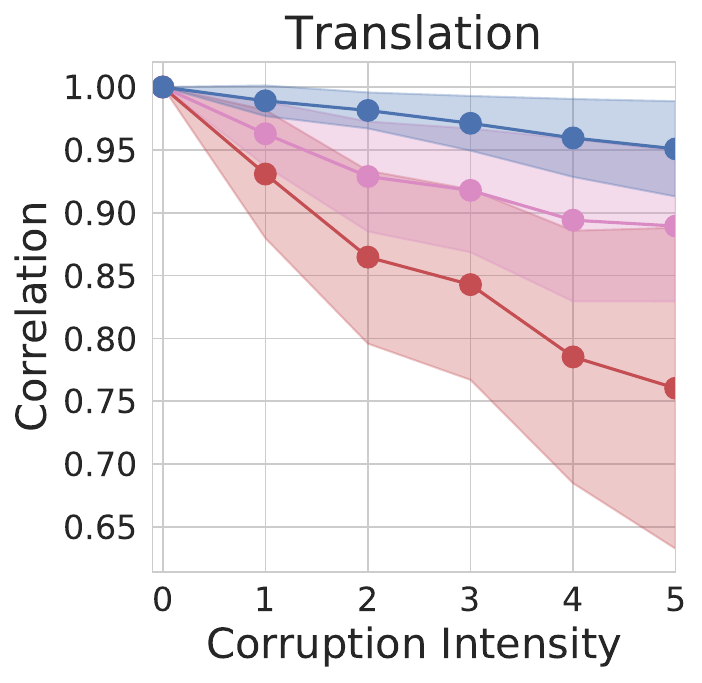} &
		\includegraphics[width=\panelwidth]{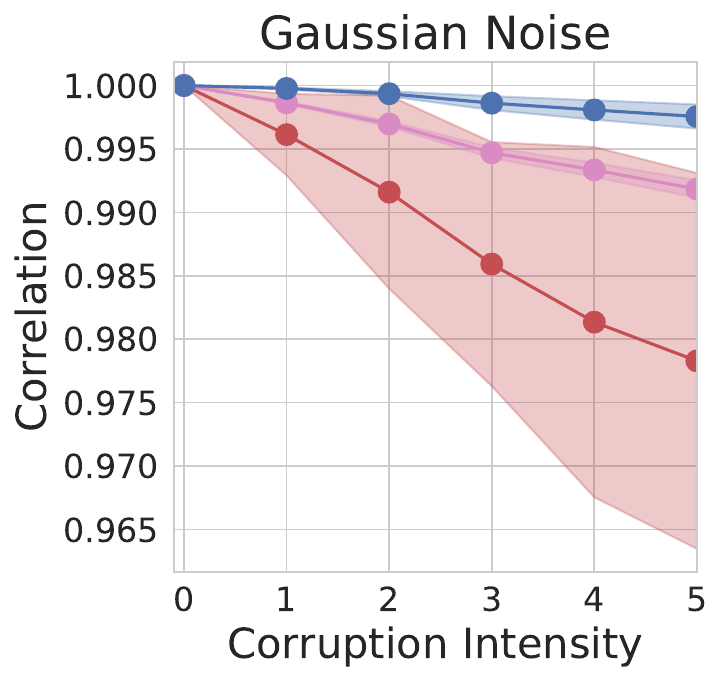} &
		\includegraphics[width=\panelwidth]{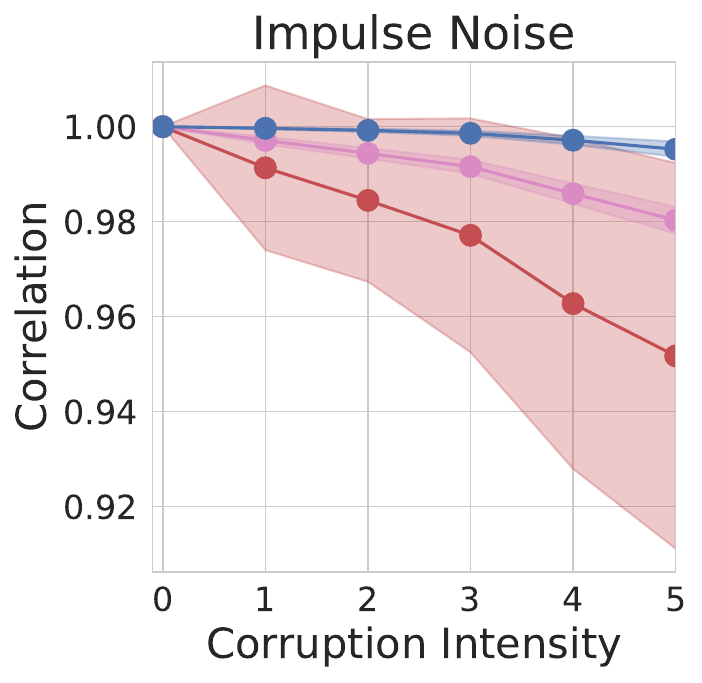} &
		\includegraphics[width=\panelwidth]{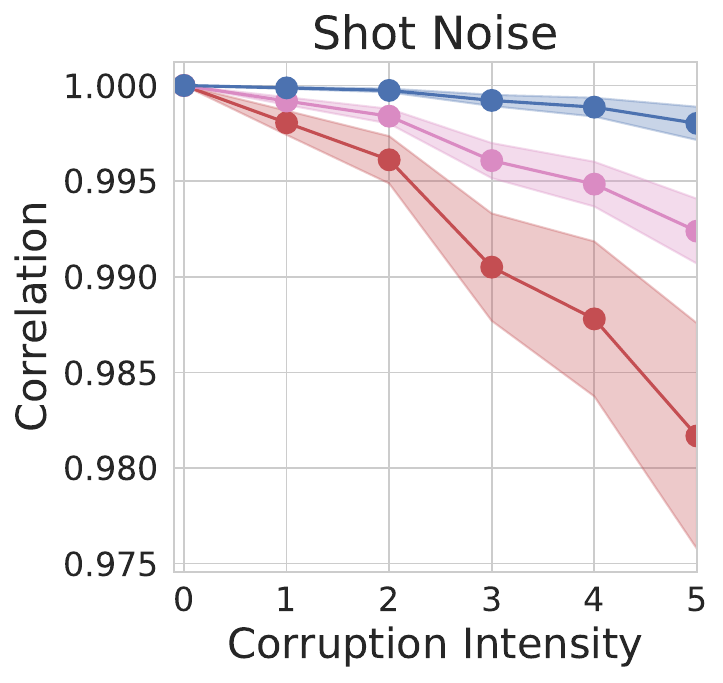}
    \end{tabular}
    }
	\subfigure{
        \begin{tabular}{ccccc}
		\includegraphics[width=\panelwidth]{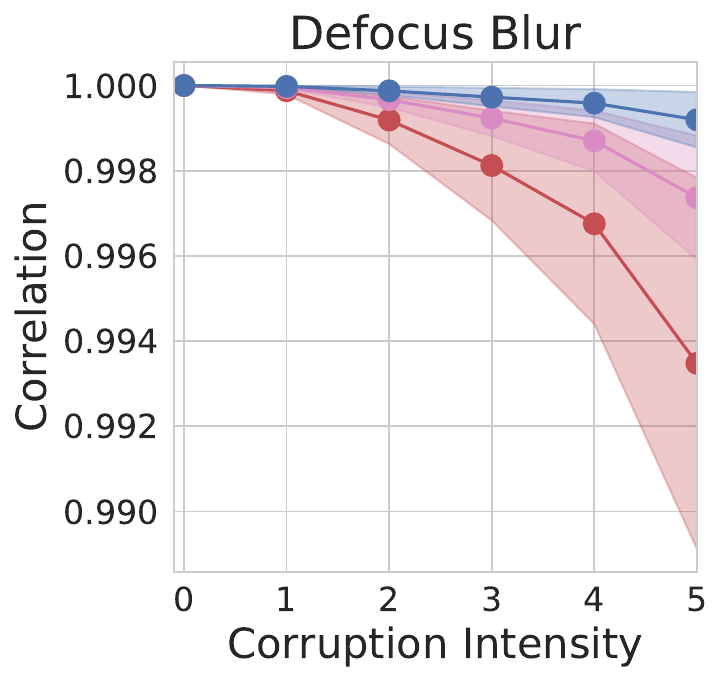} &
		\includegraphics[width=\panelwidth]{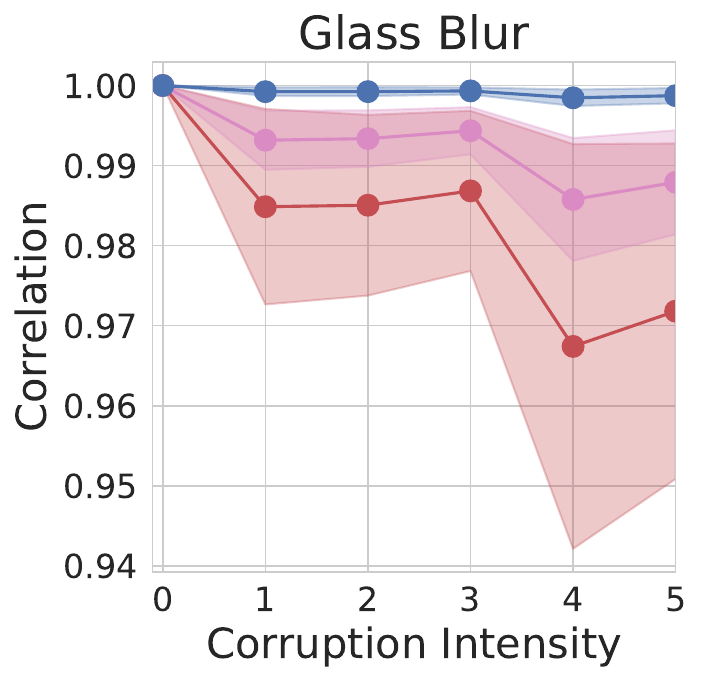} &
		\includegraphics[width=\panelwidth]{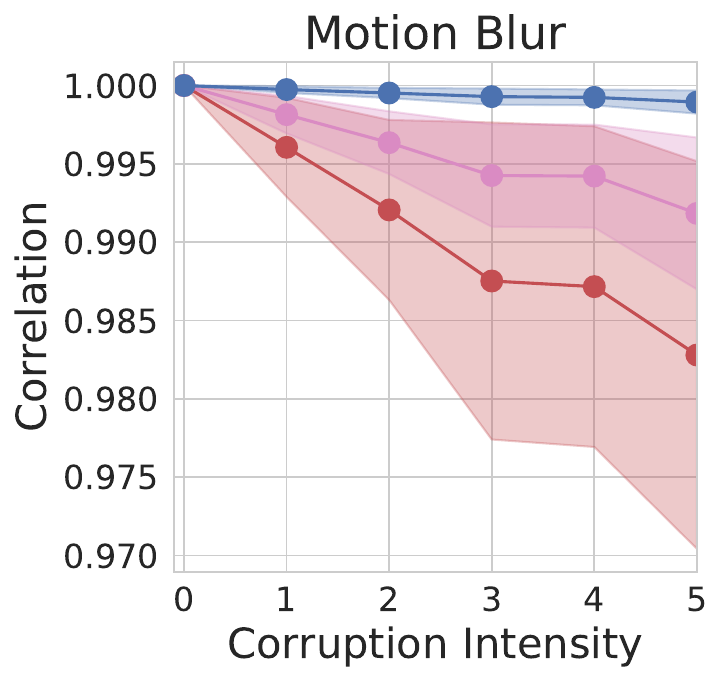} &
		\includegraphics[width=\panelwidth]{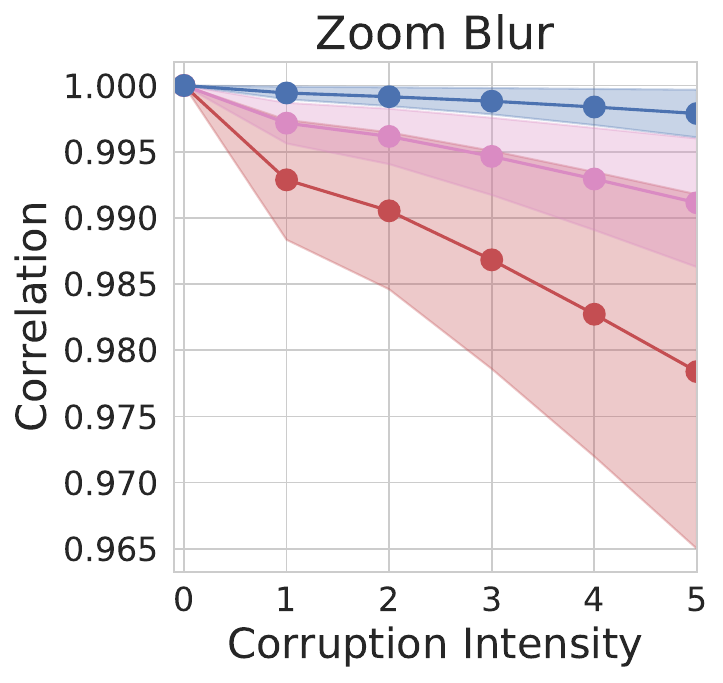}&
		\includegraphics[width=\panelwidth]{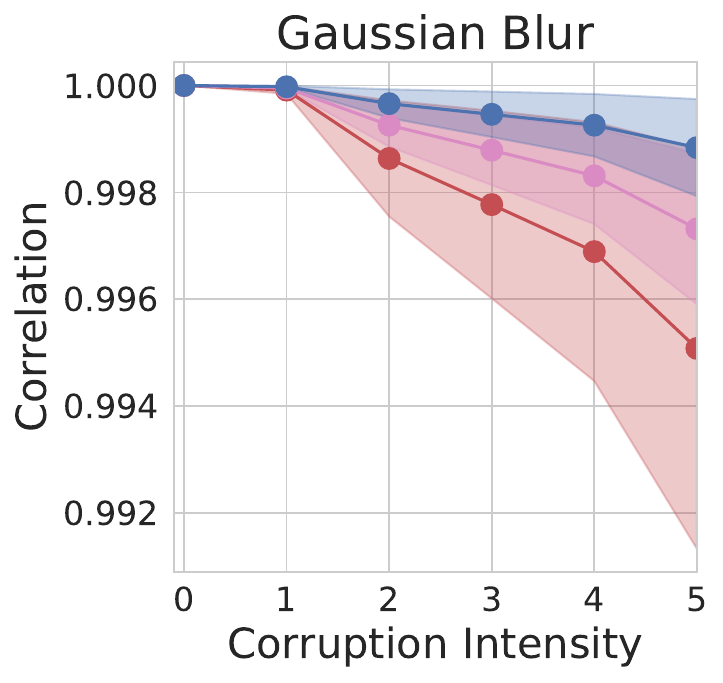}
        \end{tabular}
    }	
	\subfigure{
        \begin{tabular}{ccccc}
		\includegraphics[width=\panelwidth]{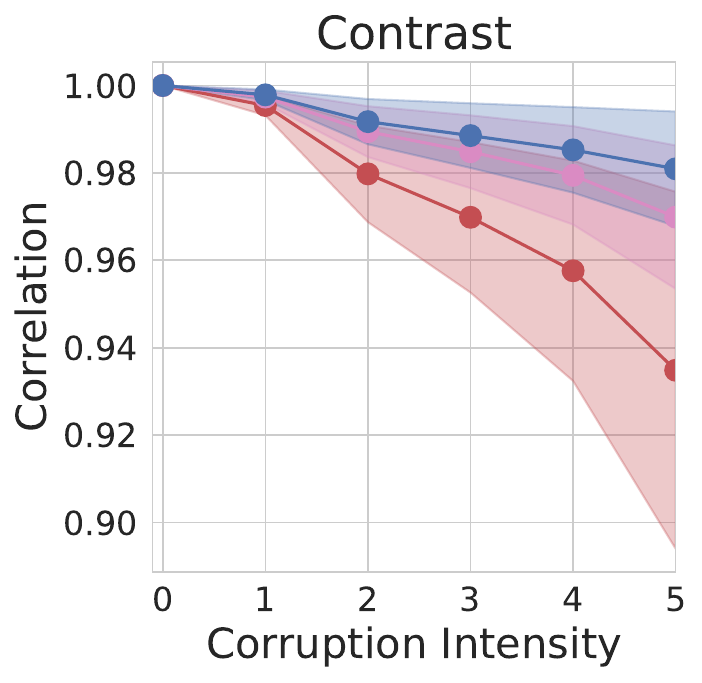} &
		\includegraphics[width=\panelwidth]{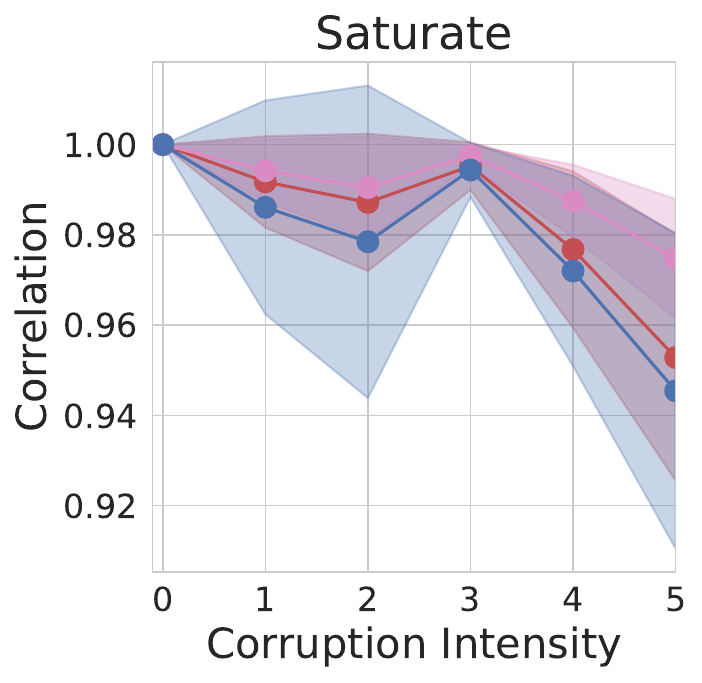} &
		\includegraphics[width=\panelwidth]{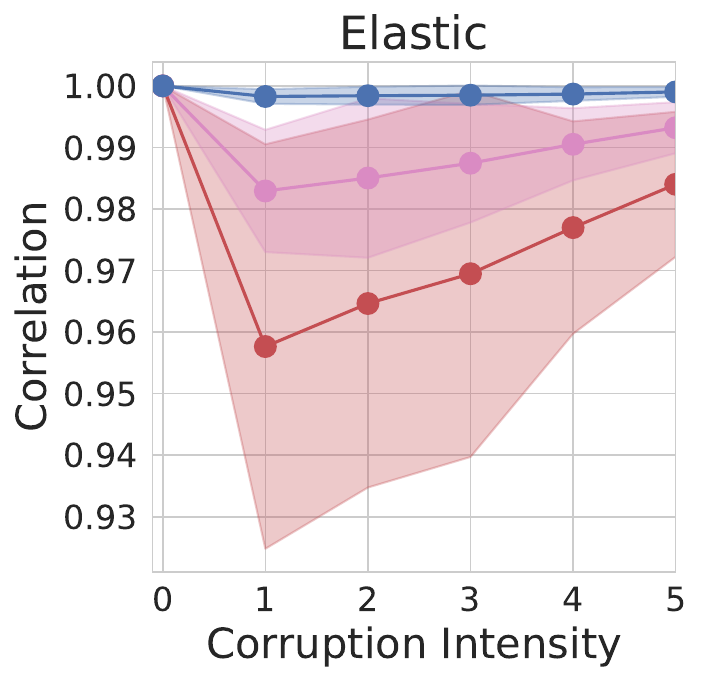} &
		\includegraphics[width=\panelwidth]{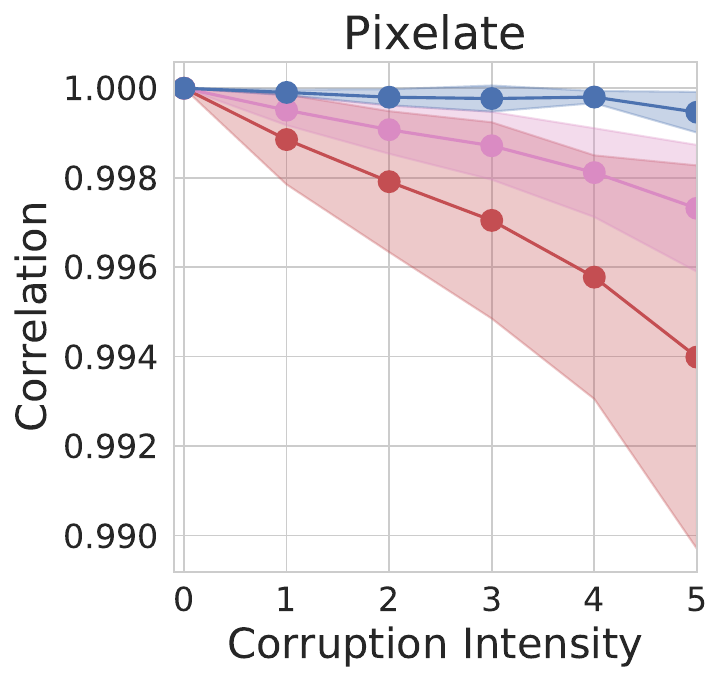} &
		\includegraphics[width=\panelwidth]{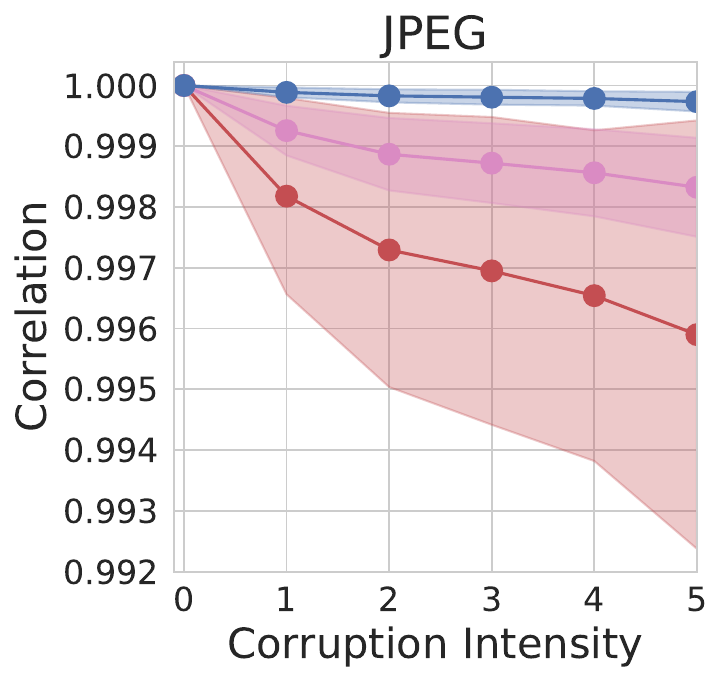}
        \end{tabular}
    }	
	\subfigure{
        \begin{tabular}{ccc}
		\includegraphics[width=\panelwidth]{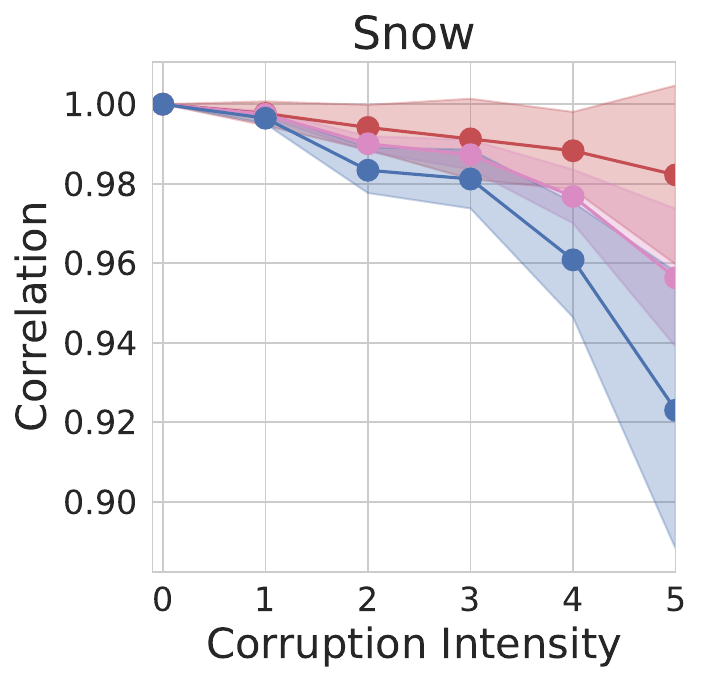} &
		\includegraphics[width=\panelwidth]{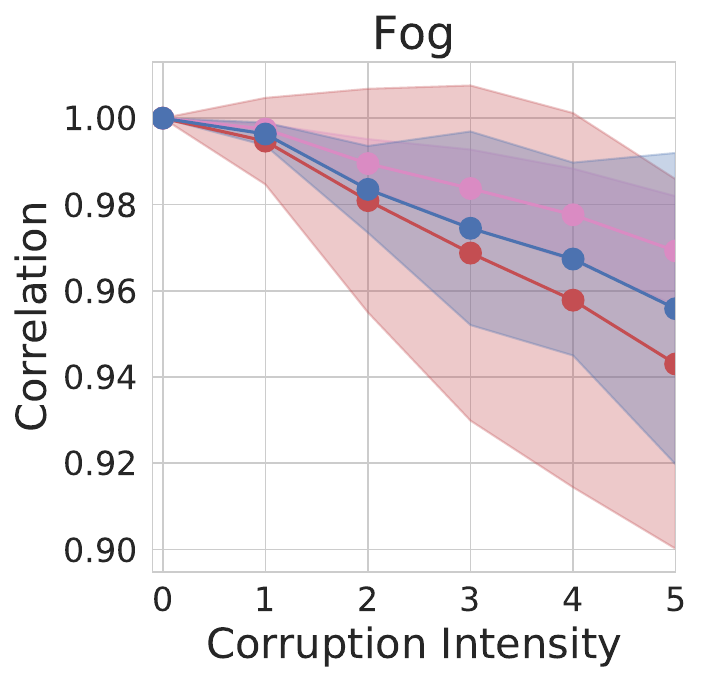} &
		\includegraphics[width=\panelwidth]{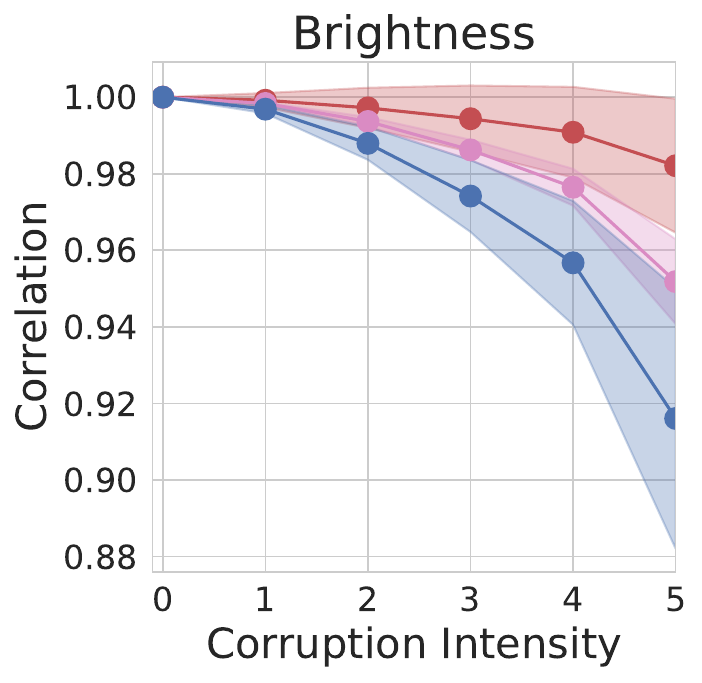} 
        \end{tabular}
    }\\
	\subfigure{
        \includegraphics[width=0.26\textwidth]{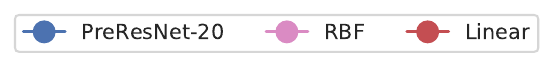}
    }
	\caption{
		\textbf{Prior correlations under corruption.}
        Prior correlations between predictions (logits) for PreResNet-20, Linear Model and RBF kernel
        on original and corrupted images as a function of corruption intensity for different types of corruptions. 
        The lengthscale of the RBF kernell is calibrated to produce similar
        correlations to PreResNet on uncorrupted datapoints.
        We report the mean correlation values over $100$ different images
        and show the $1\sigma$ error bars with shaded regions.
        For all corruptions except Snow, Saturate, Fog and Brightness the 
        correlations decay slower for PreResNet compared to baselines.
	}
    \label{fig:corr_corr}
\end{figure*}

\end{document}